\documentclass{article}
\usepackage{arxiv}
\usepackage[T1]{fontenc}
\usepackage[utf8]{inputenc}
\usepackage{graphicx}
\usepackage[dvipsnames]{xcolor}
\usepackage[unicode=true,colorlinks=true,citecolor=black,urlcolor=black,linkcolor=black,pdfborder={0 0 0}]{hyperref}
\usepackage{amsmath}
\usepackage{amssymb}
\usepackage{amsthm}
\usepackage{bm}
\usepackage{booktabs}
\usepackage{url}
\usepackage{enumitem}
\usepackage{subcaption}
\usepackage{placeins}
\usepackage{microtype}
\usepackage[numbers,sort&compress]{natbib}   
\usepackage[capitalize,noabbrev]{cleveref}

\theoremstyle{plain}
\newtheorem{theorem}{Theorem}
\newtheorem{proposition}[theorem]{Proposition}
\newtheorem{corollary}[theorem]{Corollary}

\theoremstyle{definition}

\theoremstyle{remark}
\newtheorem{remark}[theorem]{Remark}

\let\qed\relax
\newcommand{\qed}{\hfill\ensuremath{\square}}

\newcommand{\citetx}[2]{#1~\cite{#2}}

\definecolor{emcol}{HTML}{1f77b4}
\definecolor{madcol}{HTML}{d62728}
\definecolor{healcol}{HTML}{2ca02c}
\definecolor{greycol}{HTML}{7f7f7f}
\definecolor{rescol}{HTML}{5E3C99}
\definecolor{blindcol}{HTML}{E66101}

\newcommand{\R}{\mathbb{R}}
\newcommand{\E}{\mathbb{E}}
\newcommand{\KL}{\mathrm{KL}}
\newcommand{\given}{\,|\,}
\newcommand{\bx}{{\bm{x}}}
\newcommand{\by}{{\bm{y}}}
\newcommand{\bA}{{\bm{A}}}

\newcommand{\bF}{{\bm{F}}}
\newcommand{\bJ}{{\bm{J}}}
\newcommand{\bN}{{\bm{N}}}
\newcommand{\bM}{{\bm{M}}}

\newcommand{\bP}{{\bm{P}}}
\newcommand{\bG}{{\bm{G}}}
\newcommand{\bS}{{\bm{S}}}
\newcommand{\truth}{p_\star}
\newcommand{\reg}{\rho}
\newcommand{\loopop}{\mathcal{T}}
\newcommand{\blind}{\mathcal{B}}

\begin{document}

\title{Priors learned from legacy reconstructions inherit undetectable overconfidence}

\author{
  Ali Siahkoohi \\
  Department of Computer Science\\
  University of Central Florida\\
  Orlando, FL, USA \\
  \texttt{alisk@ucf.edu} \\
  \And
  Sina Alemohammad \\
  Department of Electrical and Computer Engineering\\
  The University of Texas at Austin\\
  Austin, TX, USA \\
  \texttt{sina.alemohammad@austin.utexas.edu} \\
}
\date{}
\maketitle

\begin{abstract}
Where truths are scarce (e.g., seismic and medical imaging), the learned prior that regularizes an ill-posed imaging inverse problem is trained on an archive of \emph{legacy reconstructions}---i.e., an older method's outputs---and the uncertainty read from its posterior samples is taken as data-driven. What such a prior reports can be identified exactly. In the population limit an archive of posterior samples---i.e., several draws per survey---is the regularizer that produced it, advanced one expectation--maximization step toward the truth. On the directions the operator resolves that step improves the old assumption, and on its \emph{blind subspace} the step is the identity, so the assumption survives there unchanged however many times the archive is rebuilt from fresh measurements. An archive of single-best reconstructions, i.e., one per survey, keeps no spread there at all, since the blind interval collapses to zero width whatever the penalty was, so the error runs toward overconfidence. In neither case is the assumption ever stated as one, since it enters as the archive and leaves as a reported spread, and nothing available in deployment tests it. Two truths differing only there share the data law, simulation-based calibration is neutral by construction, and no procedure reading survey and archive alone can both report a finite blind interval and guarantee its coverage over the truths the data cannot distinguish. What settles the question is information the survey does not carry, so we provide a \emph{resolvability statement} that names the affected directions from the operator alone, say how many reference truths it takes to test what a prior reports on them, and use those same references to build an interval that contains the truth as often as it claims to, however wrong the prior turns out to be. On controlled synthetic experiments with two deployed operators, seismic and groundwater imaging, the archive-trained prior's intervals contain the truth less often on the blind subspace than on the directions the operator resolves, while the truth-trained control shows no gap of that sign or size, and on the seismic operator the two are of a kind on a subspace of the same size drawn at random, so the separation follows the operator rather than the split. On the groundwater operator, rebuilding the archive from the survey together with a handful of boreholes brings what the prior reports back toward the truth on the directions those boreholes reach, and leaves the rest where it was.
\end{abstract}
\section{Introduction}
\label{sec:intro}

Due to their nonuniqueness, ill-posed inverse problems call for a prior. A learned prior (i.e., a deep generative model of plausible solutions) is increasingly used in place of a handcrafted one~\citep{arridge2019solving}, and the posterior it induces yields sharper reconstructions~\citep{jalal2021robust,chung2023dps} with uncertainty read off posterior samples~\citep{SiahkoohiRizzutiHerrmann_2022,OrozcoSiahkoohiLouboutinEtAl_2025}. Training one, however, requires examples of the truth, which seismic and medical imaging can rarely supply. The common recourse is to train the prior on an archive of \emph{legacy reconstructions}---i.e., the outputs of an older, regularized reconstruction method~\citep{jalal2021robust}---and then treat it as data-driven, its reported uncertainty included. We call that whole process---reconstructing each survey with the old method, collecting the results into an archive, and fitting a prior to them---\emph{curation}, and the prior it produces the \emph{curated} prior. In seismic imaging, generative priors are trained on archives of migrated or inverted sections and on legacy velocity models~\citep{SiahkoohiRizzutiLouboutinEtAl_2021,SiahkoohiRizzutiHerrmann_2022,SiahkoohiRizzutiOrozcoEtAl_2023,BaldassariSiahkoohiGarnierEtAl_2023}; in medical imaging, priors for accelerated MRI, the standard low-dose CT benchmarks, and transcranial-ultrasound models are likewise built on reconstructed scans rather than on ground truth~\citep{jalal2021robust,leuschner2021lodopab,naftchiardebili2026tusnet}. Training and evaluating on already-processed data inflates apparent reconstruction quality, an effect documented for accelerated MRI by \citetx{Shimron et al.}{shimron2022implicit}.

\begin{figure}[!t]
\centering
\includegraphics[width=0.9\linewidth]{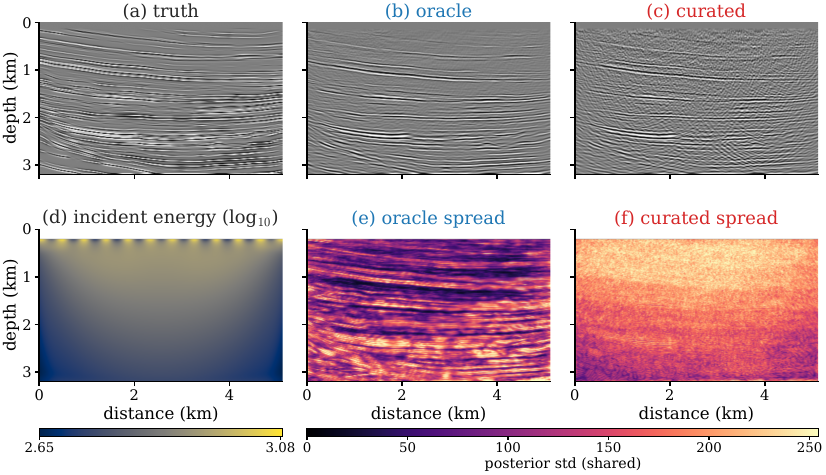}
\caption{Seismic Born imaging on one survey~(\cref{ss:seismic-dep}): truth-trained \textcolor{emcol}{oracle} and legacy-trained \textcolor{madcol}{curated} diffusion priors, i.e., the control and the object of study~(\cref{ss:priors}). (a)--(c)~the truth and both posterior means, which share the same reflectors; (d)~the incident wavefield energy on a log scale, i.e., a source-side proxy; the resolved/blind split itself is set by the operator's per-direction sensitivity $\|\bA\bm v\|^2$~(\cref{ss:setup}); (e,f)~pointwise posterior standard deviation on one shared linear scale; the \textcolor{madcol}{curated} level is the higher, and the gradients run opposite, i.e., the \textcolor{emcol}{oracle}'s widens with depth as illumination weakens while the \textcolor{madcol}{curated}'s tightens there. Code reproducing all results: \url{https://github.com/luqigroup/resolvability}.}
\label{fig:teaser}
\end{figure}

The uncertainty such a prior reports is governed by the same split the operator induces on the model space. Fitting a prior to the archive updates what it reports on the directions the operator resolves, while on the directions no measurement reaches the fit leaves the reported conditional exactly where the regularizer that produced the archive had it~(\cref{fig:teaser}). Averaged over the measurements collected, the population law of an archive of posterior samples---i.e., several draws per survey rather than one reconstruction---is the old regularizer pushed one expectation--maximization (EM) step toward the truth~(\cref{p:law}), and on the directions the operator never sees that step is the identity, so the assumption is left frozen where it was~(\cref{p:blind}). The harder case is the one practice actually keeps, since real archives store a single-best (maximum-a-posteriori) reconstruction rather than a posterior sample, e.g., one migrated section per survey or one reconstructed image per patient~\citep{yilmaz2001seismic,Veritas2005,WesternGeco2012,zbontar2018fastmri,leuschner2021lodopab}, and such an archive reports zero blind width whatever the penalty was, which is certainty where the truth genuinely varies~(\cref{p:map}).

The error therefore has a direction, and where an archive keeps samples that direction turns on the penalty, since a looser prior would merely over-cover while the penalties that make a reconstruction look clean, e.g., smoothness and sparsity, suppress variation indiscriminately, and the classical asymptotics record the same signed failure~\citep{knapik2011}. In the linear--Gaussian case we give the resulting shortfall in closed form~(\cref{p:cover}). What makes the shortfall serious is that nothing available in deployment reveals it. A check against the recorded data has no power, since two truths differing only on the blind subspace share the data law~(\cref{c:nonident}), and simulation-based calibration~\citep{talts2018validating,hermans2022trust}, the field standard among self-consistency checks, is neutral by construction, its ranks uniform under any prior~(\cref{r:selfcheck}). Rebuilding the archive from fresh surveys moves the report on the resolved directions and leaves the blind conditional exactly where it was, however many rounds are run~(\cref{c:invariant}), and collecting more surveys requires a number of them that grows without bound as the illumination of a direction fails~(\cref{r:exchange}). These failures are not independent, since no procedure reading the survey and the archive alone reports a finite blind interval that is earned rather than assumed~(\cref{c:honest}), so any correction must draw on information that neither contains.

What we propose reporting follows from the same results. The blind directions are named by the operator alone, before any prior is trained and without a single truth, and that report is what we ship as a \emph{resolvability statement}~(\cref{sec:reportcard}). Testing what a prior claims on those directions takes only a handful of reference truths, which is information the survey does not issue~(\cref{p:audit}), and given those references a split-conformal band contains the truth as often as it claims to, however wrong the prior is, because the prior sets only the band's width and never its validity~(\cref{p:certify}). Where a second instrument is available, rebuilding the archive against it as well de-freezes exactly the directions the pair resolves and no others~(\cref{c:augment}), which turns the result into a design statement about what to measure.

We test all of this on two deployed operators, seismic linearized-Born imaging and groundwater flow, each with a legacy archive of the single-best kind and a prior trained on truths as the control. Whether the idealized overconfidence survives estimation depends on the generative model's capacity, since a prior too weak to fit the archive would mask the inherited assumption with spread of its own~\citep{koehler2021representational}; priors expressive enough to fit it, i.e., a diffusion model~\citep{ho2020denoising} and a deep hierarchical coupling flow~\citep{kruse2021hint}, recover the pattern the theory predicts~(\cref{sec:demos}). On both operators the archive-trained prior's intervals contain the truth less often on the blind subspace than on the directions the operator resolves while the control shows no such gap, by margins that widen where the blind subspace contains an exact kernel and narrow on a purely low-illumination tail, which is the gradation \cref{r:nearnull} predicts. On the groundwater operator we also carry out the repair, rebuilding the archive against a handful of boreholes. \Cref{sec:related} places the work among its neighbors, and \cref{sec:setup} sets up the problem. \Cref{sec:theory} identifies what fills the gap and shows that no measurement can check it, \cref{sec:demos} tests the effect and its repair on two physical operators, and \cref{sec:reportcard} turns these results into what to report in practice, with \cref{sec:discussion} setting out the limits.

\section{Related work}
\label{sec:related}

\paragraph{Priors trained on the wrong data} The closest precursors train a prior on data that stand in for the truth and find it carries the substitute's errors: \citetx{Barco et al.}{barco2025correcting} correct such a prior by retraining against real measurements, and \citetx{Siahkoohi and Sabeddu}{SiahkoohiSabeddu_2026} document learned geophysical priors memorizing their archives. There the defect belongs to the training set and retraining reaches it. Our point is the part it cannot reach, since on the operator's blind subspace the measurements say nothing and the inherited assumption sits beyond correction of that kind~(\cref{p:recover}).

\paragraph{The resolved and blind subspaces}
The split into directions the data inform and directions they leave to the prior is standard, under a different name in each field, appearing as the likelihood-informed subspace of \citetx{Cui et al.\ and Spantini et al.}{cui2014lis,spantini2015optimal}, the active subspace of \citetx{Constantine et al.}{constantine2014active}, and, in geophysics, the resolution analysis of \citetx{Backus and Gilbert}{backus1968resolving}, whose model-resolution operator names which combinations the data determine. That line uses the split to compress, treating the blind complement as a benign truncation. In learned reconstruction (the program running from \citetx{Adler and \"Oktem}{adler2017solving} to the learned regularizers of \citetx{Li et al.}{li2020nett}), it is instead the part a network is asked to supply, as in the null-space decomposition of \citetx{Schwab et al.}{schwab2019deepnullspace}, where the measurements cannot supervise what the network puts there. We put the split to a third use and audit what the prior reports on it.

\paragraph{Identifiability, and the empirical-Bayes iteration}
The sensing theorems of \citetx{Tachella, Chen, and Davies}{tachella2023sensing} settle when the signal distribution is learnable from incomplete measurements, and beyond the range space, only under operator diversity or a group invariance tying the blind directions to the resolved. \citetx{Rozet et al.}{rozet2024learning} learn a diffusion prior by exactly that map (i.e., one step of the classical nonparametric-maximum-likelihood iteration for a mixing density~\citep{vardi1985}, the empirical-Bayes program of \citetx{Robbins}{robbins1956empirical} in its modern, ``learn the prior'' form), run to a fixed point. A legacy archive is the same iteration stopped after one step, and we work on its complement, one fixed operator with no invariance assumed, and ask what that step reports on the fiber no step ever moves~(\cref{p:blind}), where the blind conditional cannot be learned~(\cref{c:nonident}). Operator diversity returns as the design statement of \cref{c:augment}.

\paragraph{Distribution-free uncertainty}
Conformal prediction~\citep{vovk2005} derives finite-sample validity from exchangeability alone, and simulation-based calibration~\citep{talts2018validating} checks a sampler against the generative model it was built from. Neither reaches a blind subspace on the terms usually assumed, since the latter reads nothing from the recorded data, so its power against the freeze equals its size~(\cref{r:selfcheck}); the former does reach it, but only given reference truths, which the survey does not supply; this is the exchange \cref{p:certify} states and \cref{p:audit} quantifies.

\paragraph{Coverage of credible sets}
Whether a Bayesian credible set has frequentist coverage is the subject of the Bernstein--von Mises theory for inverse problems~\citep{knapik2011,szabo2015}, which establishes coverage along the directions the data eventually resolve. Our result is the structural complement, since along a genuine blind subspace the shortfall is constant in the sample size and not a function of the data marginal, and its undetectability is the non-identifiability of the blind parameters under a nontrivial kernel~(\cref{c:nonident}), a regime injectivity precludes. Honest coverage is recovered there under conditions tying the unseen scales to the seen, and on a genuine blind subspace every such condition is itself untestable~(\cref{c:nonident}). The same audit, i.e., whether the truth stays inside the reported interval, was recently posed for finite-element discretization error by \citetx{Poot et al.}{poot2026bfem} and, for learned imaging priors, by \citetx{Thong et al.}{thong2024trustworthy}, who find by Monte Carlo replication that diffusion and plug-and-play posteriors generally misreport their probabilities; we supply the mechanism and prove the shortfall undetectable on the operator's blind subspace.

\paragraph{The impossibility, and its ancestry}
That some questions admit no effective test or estimator however many observations arrive goes back to \citetx{Bahadur and Savage}{bahadur1956nonexistence}; that a confidence set for a poorly identified parameter must have infinite expected diameter to hold its level uniformly is \citetx{Gleser and Hwang}{gleser1987nonexistence}'s theorem, sharpened for nonparametric functionals by \citetx{Low}{low1997nonparametric}; and geophysics reached the same wall from its own side, in the inference theory of \citetx{Backus}{backus1970inference} and the strict-bounds program of \citetx{Stark}{stark1992inference}, where an unconstrained functional of an underdetermined Earth supports no finite interval. What \cref{c:honest} inherits from that line is the shape of the conclusion, which is infinite expected length as what uniform level over truths the data cannot distinguish requires. Three things are particular to the instance here. The unidentified set is the operator's blind subspace, pinned by nothing but the likelihood's fiber-constancy, and the quantifier covers every procedure reading the survey \emph{and} the archive, the archive being itself a function of the survey. The constructive half then quantifies and uses the reference route the lineage leaves open~(\cref{p:audit,p:certify}), and names the constraints route~(\cref{ss:undetectable}).

\paragraph{Point estimates and the posteriors they summarize}
How a single-best reconstruction relates to the posterior it summarizes has been examined with care. \citetx{Gribonval}{gribonval2011penalized} shows the minimum-mean-square estimate is itself a MAP estimate under another prior, \citetx{Burger and Lucka}{burger2014map} that MAP estimates under log-concave priors are proper Bayes estimators for a Bregman cost (a defense of the point estimate), and \citetx{Bassett and Deride}{bassett2019map} that MAP estimators need not arise as limits of Bayes estimators at all. Our theorem indicts none of these estimators; it indicts the protocol that archives their outputs and refits the archive as a prior, since even the Bayes-optimal point estimate, so archived, reports zero blind conditional spread~(\cref{p:map}), because a point estimate never carried spread and the refit turns that absence into an assumption.

\paragraph{Neighbouring failure modes}
Two failures of learned reconstruction sit close to ours without being it. Training a model on its own outputs collapses it over many recursive rounds~\citep{alemohammad2024self,shumailov2024curse}, whereas ours is a single step against \emph{real} measurements, and what that step leaves frozen is the hazard rather than any drift to degeneracy. Learned posteriors~\citep{adler2018deep} are also miscalibrated in ways calibration tests can catch~\citep{hermans2022trust}, and ours is the dual of that, being a miscalibration those tests cannot reach.

\paragraph{What these results reach}
The results below reach priors curated from archives of reconstructions (posterior-sample or single-best) and, through the every-measurable-rule clause of \cref{p:map}, any deterministic single-best pipeline, however far from a variational form. They do not reach supervised end-to-end reconstruction maps trained on paired truths and measurements, data-consistency architectures, or the null-space networks above, which \emph{assign} the null space to a network trained against truths, because those estimators are supervised by exactly the information an archive lacks, and they carry their own failure modes~\citep{antun2020instabilities,gottschling2025troublesome}. What we study is the unsupervised route, which is what a field is left with when truths cannot be had.

\section{The inverse problem and the learned prior}
\label{sec:setup}

\subsection{The inverse problem and its resolved and blind directions}
\label{ss:split}

\paragraph{The inverse problem}
An unknown $\bx\in\R^d$ is observed only through indirect, noisy measurements $\by$, where a known forward operator $\bF$ together with a known Gaussian noise model specify the likelihood
\begin{equation}
p(\by\given\bx)=\mathcal{N}(\bF(\bx),\Gamma),
\label{eq:likelihood}
\end{equation}
where $\Gamma$ denotes the noise covariance, nondegenerate ($\Gamma\succ0$). The Gaussian form is written for simplicity, since the closed forms need it (the coverage law of \cref{p:cover} and the near-null bounds of \cref{p:nearnull}), while the freeze, the single-best collapse, and the non-identifiability ask only that the noise reach $\bx$ through $\bF(\bx)$. Pairing this likelihood with a prior is the Bayesian view of an ill-posed inverse problem~\citep{tarantola2005inverse,lassas2009discretization,stuart2010inverse}. Every statement below takes a closed form in the \emph{linear--Gaussian} case $\bF(\bx)=\bA\bx$.

\paragraph{The directions the data resolve and the directions they leave blind}
For a linear operator the split is global and exact,
\begin{equation}
\blind=\ker\bA,\qquad \blind^{\perp}=\mathrm{row}(\bA),
\label{eq:split}
\end{equation}
the \emph{blind subspace} $\blind$ and the \emph{resolved subspace} $\mathrm{row}(\bA)$ (constructed for the seismic operator of \cref{sec:demos}). We write $\bN$ for a matrix whose orthonormal columns span $\ker\bA$, $\bP_R$ for the orthogonal projector onto $\mathrm{row}(\bA)$, and split the unknown as $\bx=(\bx_R,\bx_B)$, resolved and blind; in blocks and integrals these are coordinates in the subspaces' orthonormal bases ($\bx_B=\bN^\top\bx$ in particular), the projections $\bP_R\bx$ and $\bN\bN^\top\bx$ being their embeddings. More generally, the models a measurement cannot distinguish form the operator's blind \emph{fibers}, the sets sharing a forward image $\bF(\bx')=\bF(\bx)$; for a linear operator these are the affine leaves $\bx+\ker\bA$ of \eqref{eq:split}. A nonlinear operator's blind directions form a global subspace exactly where it carries a genuine invariance $\bF(\bx+\bm n)=\bF(\bx)$, and there the statements of \cref{sec:theory} apply unchanged; absent one, the split is the local one fixed by the Jacobian, where a first-derivative statement survives~(\cref{r:nonlinear}). The split is equally well defined in function space (a separable Hilbert carrier, the operator bounded), where the freeze and the non-identifiability hold at their full carrier whenever the null space is nontrivial. 

\paragraph{Near-null directions, and where the exact statements idealize}
An operator whose singular values decay but never vanish (e.g., the seismic operator of \cref{sec:demos}, whose blind end is a low-illumination tail) has near-null directions that are slowly resolved rather than blind, i.e., eventually identifiable~\citep{knapik2011,szabo2015} but not cheaply. As a result, at realistic survey counts a weakly illuminated direction behaves like a kernel direction, and the exact statements of \cref{sec:theory} idealize a field report. The structural, undetectable form studied here needs a genuine blind subspace, which the band-limited physical measurement supplies; \cref{p:nearnull} grades the passage and fixes the survey count any test would need. Where the spectrum decays, the split is set by a cutoff, and we report our conclusions across a range of it~(\cref{fig:seismic-cutoff}). 

\paragraph{Three recurring words}
The rest of the paper uses three words in fixed senses. \emph{Blind} means the operator's exact kernel or, for a decaying spectrum, the near-null below the acquisition's noise level. \emph{Undetectable} means undetectable from the survey and the archive alone, the content of \cref{c:nonident,c:honest}; reference truths sit outside that pair and are what the certificate of \cref{sec:reportcard} requires. \emph{Certified} means covered, on average over the reference draw, by a split-conformal band whose validity rests on exchangeability of those references and on nothing about the prior~(\cref{p:certify}).

\subsection{The truth, the priors, and the legacy pipeline}\label{ss:priors}

\paragraph{Truth, regularizer, and two learned priors}
The unknown is drawn from a true distribution $\truth$, \emph{the truth}, whose draws are the scarce examples of \cref{sec:intro}, each one \emph{a truth}; in the linear--Gaussian case $\truth=\mathcal{N}(\mu_\star,\Sigma_\star)$. A classical method combines the likelihood with a handcrafted \emph{regularizer} $\reg$~\citep{engl1996regularization} (e.g., a smoothness or sparsity preference; $\reg=\mathcal{N}(\mu_\reg,\Sigma_\reg)$ in the Gaussian case) to form the legacy posterior $\pi_\reg(\bx\given\by)\propto p(\by\given\bx)\reg(\bx)$. We compare two learned priors: the \textcolor{madcol}{\emph{curated prior}}, trained on legacy reconstructions, and, as a control available only where the truth can be sampled, the \textcolor{emcol}{\emph{oracle prior}}, trained on draws from the \textcolor{healcol}{truth} itself.

\paragraph{When a reported uncertainty is observation-driven}
A reported spread is \emph{observation-driven} when it depends on the measurement $\by$. On a resolved direction the posterior tightens as $\by$ becomes informative, whereas on a blind direction the posterior conditional---i.e., the law of the blind component given the resolved coordinates---equals the prior's for \emph{every} $\by$, so whatever spread is reported there is fixed before any measurement is taken. At the level of a single posterior this is classical, since it is \citetx{Poirier}{poirier1998}'s non-updating theorem for a likelihood that depends on the parameters through one coordinate alone, and the resolved--blind split is the transparent reparameterization of \citetx{Gustafson}{gustafson2015}. What follows concerns not one posterior but the law of an archive of them, refit as a prior and redeployed. Scoring a Bayesian interval by how often it contains the truth is deliberate, since in practice the interval is acted on as a statement about this survey's subsurface. Frequentist coverage of credible sets is how the inverse-problems literature audits such statements~\citep{knapik2011,szabo2015,monard2021bvm,nickl2023}. \emph{Coverage} is therefore measured only in controlled settings that supply the truth.

\paragraph{The legacy pipeline and the curated prior}
Three assumptions describe the data-scarce setting. First, the truth is never observed directly, so $\truth$ cannot be sampled. Second, the forward operator and the noise model are known and many real measurements are on hand, their law written $\truth(\by)$, while the record count remains finite and below the number of surveys a near-null test would need~(\cref{ss:split}). Third, for each measurement the legacy pipeline returns either a posterior \emph{sample}, giving a \emph{posterior-sample archive}, or the single-best reconstruction $\mathrm{MAP}_\reg(\by)$, giving a \emph{single-best archive}. A prior fit to a whole posterior-sample archive converges (in the population limit of a nonparametric fit) to the \emph{data-averaged-posterior map} applied to $\reg$,
\begin{equation}
\loopop[\pi](\bx):=\E_{\by\sim\truth(\by)}\big[\pi(\bx\given\by)\big],
\qquad
q_\reg:=\loopop[\reg],
\label{eq:curated}
\end{equation}
the \emph{curated prior} $q_\reg$~(\cref{p:law}). A fit to a single-best archive converges instead to the pushforward of the measurement law through the deterministic map $\mathrm{MAP}_\reg$~(\cref{p:map}).

\section{What an archive-trained prior reports, and why no measurement can check it}
\label{sec:theory}

Averaged over the real measurements, an archive that keeps posterior samples is the handcrafted regularizer advanced one EM step toward the truth (i.e., a marginal-likelihood ascent, in the sense \cref{p:law} makes precise) and frozen where the data cannot see. Three statements follow: the freeze, the undetectable coverage shortfall together with the number of reference truths needed to detect it, and the single-best collapse. Read together they fix what may be claimed. Each is carried to the near-null a real operator presents and as far past linearity as its invariances reach. Each statement carries its scope in its bracketed title, so unmarked hypotheses mean any operator, noise model, and prior for which the curation \eqref{eq:curated} is defined (i.e., any positive noise density, the Gaussian \eqref{eq:likelihood} included). All proofs are given in \cref{app:proofs}.

The results below and in \cref{app:proofs} are machine-checked in Lean~4 against mathlib, i.e., kernel-verified with nothing admitted and nothing domain-specific axiomatized, a few analytic and modeling steps entering the development as explicit hypotheses. Proved in the paper alone are the closing asymptotic approximation in \eqref{eq:auditk} and its rank-one search-penalty aside, the general-mollifier form of the smoothing floor~(\cref{r:map-lg}), the no-better-score aside of \cref{p:certify}, and two classical steps internal to proofs, i.e., the Gaussian sufficiency reduction in the proof of \cref{c:augment} and the descent-and-compactness assembly in \cref{p:recover}(ii); the development records the correspondence statement by statement.

\subsection{What fills the gap: one EM step, and the blind-fiber freeze}\label{ss:emstep}
\noindent The interval a deployed prior prints on a blind direction is supplied rather than inferred, and for an archive of legacy reconstructions what supplies it can be identified exactly.

\begin{figure}[t]
\centering
\includegraphics[width=0.62\linewidth]{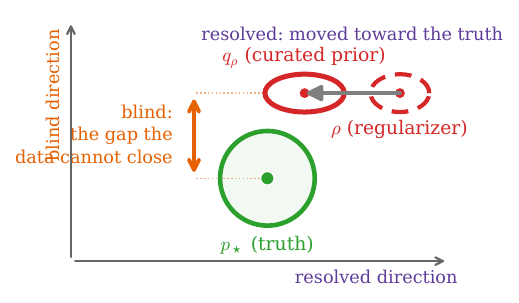}
\caption{The one-EM-step geometry, in resolved--blind coordinates. An archive of posterior samples advances the handcrafted regularizer $\reg$ one EM step to the curated prior $q_\reg$~(\cref{p:law}); the post-step ellipse is that map applied to $\reg$: on the resolved directions the prior moves toward the truth $\truth$ (i.e., a marginal-likelihood ascent, not convergence), while on the blind direction the conditional does not move at all, its mean and spread frozen at the regularizer's~(\cref{p:blind}). The overconfidence reported here is that this frozen blind spread is tighter than the truth's.}
\label{fig:emstep}
\end{figure}

\begin{theorem}[A legacy archive is one guess-then-correct step from the regularizer \emph{(posterior-sample archive; any operator, noise and full-support regularizer)}]\label{p:law}
Under a posterior-sample archive, for any forward model, noise, and full-support regularizer $\reg$ with finite data-marginal divergences (the step's own divergence from $\reg$ finite alongside them), the legacy reconstructions are independent draws from $q_\reg=\loopop[\reg]$ of~\eqref{eq:curated}, which is exactly the regularizer advanced one EM step for the mixing density $\reg$ against the truth's measurements~\citep{dempster1977em,vardi1985}:
\begin{equation}
q_\reg(\bx)=\loopop[\reg](\bx)=\reg(\bx)\,\E_{\by\sim\truth(\by)}\!\left[\frac{p(\by\mid\bx)}{\reg(\by)}\right],
\qquad \reg(\by)=\int p(\by\mid\bx)\,\reg(\bx)\,d\bx,
\label{eq:emstep}
\end{equation}
the Vardi/Lucy--Richardson/NPMLE multiplicative update. That step does not decrease the data-marginal likelihood---the precise sense of ``toward the truth,'' a marginal-likelihood ascent rather than pointwise convergence.
\end{theorem}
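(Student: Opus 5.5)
The argument has two parts: identify the archive's law, which is a marginalization, and then show the likelihood does not decrease, which is the standard EM monotonicity inequality applied to a single step. For the law, note how the posterior-sample archive is generated. Draw $\by_i\sim\truth(\by)$ i.i.d., then for each survey draw $\bx_i\sim\pi_\reg(\cdot\given\by_i)$ independently. The pairs $(\by_i,\bx_i)$ are therefore i.i.d. from the joint $\truth(\by)\,\pi_\reg(\bx\given\by)$. Integrating out $\by$ (Tonelli, all integrands nonnegative) shows that each $\bx_i$ has density $\E_{\by\sim\truth}[\pi_\reg(\bx\given\by)]=\loopop[\reg](\bx)=q_\reg(\bx)$, and the $\bx_i$ are independent. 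Substituting Bayes' rule $\pi_\reg(\bx\given\by)=p(\by\given\bx)\reg(\bx)/\reg(\by)$, and pulling $\reg(\bx)$ out of the expectation, gives \eqref{eq:emstep} directly. Full support of $\reg$ together with a positive noise density gives $\reg(\by)>0$ everywhere, so the ratio is defined. The same integration shows $q_\reg$ has unit mass: integrate $p(\by\given\bx)\reg(\bx)/\reg(\by)$ in $\bx$ first. This is exactly the Vardi/Lucy--Richardson multiplicative update of the mixing density $\reg$ against the data law $\truth(\by)$, so the identification with one EM step is definitional once the complete data $(\bx,\by)$ and the mixing model $\reg(\by)=\int p(\by\given\bx)\reg(\bx)\,d\bx$ are fixed.

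For the ascent, I would prove in population form that $L(\pi):=\E_{\by\sim\truth}[\log \pi(\by)]$, with $\pi(\by)=\int p(\by\given\bx)\pi(\bx)\,d\bx$, satisfies $L(q_\reg)\ge L(\reg)$. Equivalently, $\KL(\truth(\by)\,\|\,q_\reg(\by))\le\KL(\truth(\by)\,\|\,\reg(\by))$. Write
\begin{equation*}
\log\frac{q_\reg(\by)}{\reg(\by)}=\log\int \pi_\reg(\bx\given\by)\frac{q_\reg(\bx)}{\reg(\bx)}\,d\bx
\ge \int \pi_\reg(\bx\given\by)\log\frac{q_\reg(\bx)}{\reg(\bx)}\,d\bx ,
\end{equation*}
which is Jensen's inequality applied to the concave logarithm under the legacy posterior. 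Take $\E_{\by\sim\truth}$ and exchange the order of integration. The right side becomes $\int q_\reg(\bx)\log(q_\reg(\bx)/\reg(\bx))\,d\bx=\KL(q_\reg\,\|\,\reg)\ge 0$. Hence $L(q_\reg)-L(\reg)\ge\KL(q_\reg\|\reg)\ge0$, which also gives a quantitative increment.

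The main obstacle is integrability. Jensen and Fubini must be applied to signed log terms, so I need the hypotheses exactly as stated. Finiteness of the two data-marginal divergences $\KL(\truth(\by)\|\reg(\by))$ and $\KL(\truth(\by)\|q_\reg(\by))$ lets me write $L(q_\reg)-L(\reg)$ as a difference of finite quantities. Finiteness of $\KL(q_\reg\|\reg)$ makes the right side's Fubini exchange legitimate: split into positive and negative parts, and note that the negative part of $\log(q_\reg/\reg)$ is integrable against $q_\reg$ because $x\log x\ge -1/e$. I would handle the Jensen step with an extended-real convention, so that an infinite inner integral cannot occur under these hypotheses. Finally, the ``toward the truth'' clause is only this monotonicity, so no convergence argument is needed.
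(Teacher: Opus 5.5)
Your proposal is correct and follows the paper's own proof. The paper likewise obtains \eqref{eq:emstep} by substituting Bayes' rule into the $\by$-average of the legacy posterior, and gets the ascent $\ell(\loopop[\reg])\ge\ell(\reg)$ from Jensen's inequality on the E/M decomposition, which is your Jensen step under $\pi_\reg(\cdot\given\by)$ followed by $\KL(q_\reg\,\|\,\reg)\ge 0$. Your write-up is more explicit than the paper's (independence of the draws, unit mass, integrability of the signed log terms, and the quantitative increment $L(q_\reg)-L(\reg)\ge\KL(q_\reg\,\|\,\reg)$). Running that increment bound with Tonelli on positive and negative parts separately, the negative part being at most $1/e$, shows that finiteness of $\KL(q_\reg\,\|\,\reg)$ follows from the two data-marginal divergences, so it need not be assumed separately.
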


The multiplicative update itself is classical, and so is the observation that it leaves the unidentified component at its initialization: \citetx{Vardi and Lee}{vardi1993} record it for positive linear inverse problems, and \citetx{Lindsay}{lindsay1983}'s gradient function is the reweight in \eqref{eq:emstep} under another name~\citep{kiefer1956}. The identification is this paper's, since an archive curated by a pipeline that intended no such step is exactly one step of it, taken from the handcrafted regularizer, and that archive is then refit and deployed as a prior~(\cref{fig:emstep}). The identity is a population idealization, assuming exact posterior sampling and a nonparametric refit; the freeze below asks less of the archive's sampler, and is the more robust half.

The step's fixed points name the hazard, since a full-support regularizer is unchanged by it when its measurement law matches the truth's (and, for such a regularizer, only then, as the accompanying formalization proves). Since the data law of a linear operator constrains only the resolved marginal, the resting states contain the truth and every assumption differing from it only where the measurements cannot see.

\begin{proposition}[On the blind fibers the learned prior is the assumption \emph{(exact; any factoring noise)}]\label{p:blind}
On a blind fiber the curated prior's conditional equals the regularizer's, exactly, for any forward operator with a blind subspace, under any prior and any noise kernel that reaches $\bx$ only through $\bF(\bx)$---the Gaussian \eqref{eq:likelihood} in particular, and the step moves the reported assumption only across the fibers the instrument separates, never within one, \emph{however much real data fed the curation}. In the linear--Gaussian case this is the blind-block precision identity
\begin{equation}
\bN^\top\Sigma_{q_\reg}^{-1}\bN=\bN^\top\Sigma_\reg^{-1}\bN,\qquad \mathrm{range}\,\bN=\ker\bA.
\label{eq:freeze}
\end{equation}
\end{proposition}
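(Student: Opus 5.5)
The plan is to start from the multiplicative form \eqref{eq:emstep} of \cref{p:law} and show that the reweighting factor is constant on each blind fiber. By hypothesis the noise kernel reaches $\bx$ only through $\bF(\bx)$, so $p(\by\given\bx)=k(\by,\bF(\bx))$ for some kernel $k$. The factor
\[
w(\bx):=\E_{\by\sim\truth(\by)}\!\left[\frac{p(\by\mid\bx)}{\reg(\by)}\right]
\]
then depends on $\bx$ only through $\bF(\bx)$, and is therefore constant on every fiber $\{\bx':\bF(\bx')=\bF(\bx)\}$. Because $q_\reg=\reg\,w$, conditioning on a fiber gives
\[
q_\reg(\bx\given \text{fiber}) = \frac{\reg(\bx)\,w}{\int_{\text{fiber}}\reg\,w} = \reg(\bx\given\text{fiber}),
\]
since the constant $w$ cancels. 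The positivity and finiteness needed for the cancellation come from the full-support and finite-divergence hypotheses already assumed in \cref{p:law}. This gives the general claim. The qualifier ``however much real data'' is automatic, because $\truth(\by)$ enters only through $w$, and every choice of data law still yields a fiber-constant $w$.

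The most careful step is measure-theoretic. Fibers of a general $\bF$ can have Lebesgue measure zero, so ``conditional on a fiber'' has to be read through disintegration. I would phrase the statement as equality of regular conditional distributions given the $\sigma$-algebra generated by $\bF$. Since $dq_\reg/d\reg=w$ is $\sigma(\bF)$-measurable, a one-line Bayes-formula argument then shows that the conditional laws given $\sigma(\bF)$ agree $q_\reg$-almost surely. This version avoids geometry entirely, and I expect it to be the main (though modest) obstacle. In the linear case the fibers are the affine leaves $\bx+\ker\bA$, and the disintegration is simply conditioning on $\bx_R$.

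For the linear--Gaussian identity \eqref{eq:freeze}, note that $\bF(\bx)=\bA\bx$ depends only on $\bx_R$, so $w$ is a function of $\bx_R$ alone. It is the exponential of a quadratic in $\bx_R$: it is the expectation of a Gaussian in $\by$ against the Gaussian $\truth(\by)$, divided by the Gaussian $\reg(\by)$. Hence $\log q_\reg=\log\reg+(\text{quadratic in }\bx_R)+\text{const}$. Write the precision $\Sigma^{-1}$ in resolved--blind blocks. A quadratic in $\bx_R$ alone can change only the $RR$ block and the resolved part of the linear term; the $BB$ block $\bN^\top\Sigma^{-1}\bN$ and the cross block stay as they were. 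This is \eqref{eq:freeze}, and the cross-block invariance also shows that the conditional mean map $\bx_R\mapsto\E[\bx_B\given\bx_R]$ is frozen. One routine check remains: that $q_\reg$ is a normalizable Gaussian. This follows from the finite-divergence hypothesis, or directly because the added quadratic has a precision that does not make the $RR$ block indefinite.
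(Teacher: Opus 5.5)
Your proposal is correct and follows the paper's proof. The reweight $\E_{\by\sim\truth}[p(\by\mid\bx)/\reg(\by)]$ depends on $\bx$ only through $\bF(\bx)$, so it cancels from the ratio that defines the blind-fiber conditional; your $\sigma(\bF)$-disintegration reading matches the paper's function-space remark. The only difference is in the linear--Gaussian step: the paper reads \eqref{eq:freeze} directly off the equality of conditionals, since the precision of $\bx_B\mid\bx_R$ is the blind block of the joint precision, whereas you show the reweight is the exponential of a quadratic in $\bA\bx$, which leaves the blind and cross blocks unchanged and so also exhibits the frozen conditional-mean map.
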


\noindent For a nonlinear operator without a genuine invariance, however, only a local, first-derivative form survives, at the base point rather than on a neighborhood~(\cref{r:nonlinear}). The groundwater example of \cref{sec:demos}, whose Jacobian null rotates with the state, is read empirically for this reason.

\subsection{What the freeze costs}\label{ss:shortfall}
\noindent While the freeze fixes \emph{where} the inherited assumption sits, what it costs is a matter of width, since a frozen conditional tighter than the truth's under-covers, and by how much is a closed form.

\begin{proposition}[A closed-form coverage shortfall \emph{(linear--Gaussian)}]\label{p:cover}
Take a blind direction; write $s_\reg,s_\star$ for the curated prior's and the truth's blind-fiber \emph{conditional} spreads along it~(\cref{p:cover-na}), and $\delta=(m_\reg-m_\star)/s_\star$ for the standardized gap between their conditional means. The freeze fixes the \emph{entire} conditional (mean as well as spread) to the regularizer's~(\cref{p:blind}), so the credible interval the prior reports at level $1-\alpha$ ($z=z_{1-\alpha/2}$ the standard-normal quantile) is centered at $m_\reg$ and covers the truth at
\begin{equation}
C \;=\; \Phi\!\big(\delta+z\,s_\reg/s_\star\big)-\Phi\!\big(\delta-z\,s_\reg/s_\star\big).
\label{eq:coverage-gen}
\end{equation}
A correctly centered regularizer ($\delta=0$) gives the width-only form
\begin{equation}
C \;=\; 2\,\Phi\!\left(z\,\frac{s_\reg}{s_\star}\right) - 1 ,
\label{eq:coverage}
\end{equation}
overconfident ($C<1-\alpha$) exactly when the inherited assumption is tighter than the truth, $s_\reg<s_\star$; a frozen mean error $\delta\neq0$ only worsens it.
\end{proposition}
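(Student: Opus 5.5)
The plan is to reduce the coverage computation to a one-dimensional Gaussian probability, using \cref{p:blind} to fix what the deployed prior reports along the blind direction. In the linear--Gaussian case the curated prior $q_\reg$ is Gaussian. The regularizer is Gaussian, and the map \eqref{eq:curated} averages Gaussian posteriors over the Gaussian data law $\truth(\by)$, whose posterior means are affine in $\by$. The deployed posterior $\pi_{q_\reg}(\bx\given\by)$ is therefore Gaussian. Its conditional law of a blind coordinate $\bx_B$ given the resolved coordinates $\bx_R$ does not depend on $\by$, because the likelihood depends on $\bx$ only through $\bA\bx=\bA\bP_R\bx$. By \cref{p:blind} this conditional is exactly the regularizer's conditional, with the same precision block \eqref{eq:freeze} and, as we will check from the Gaussian conditioning formula, the same conditional mean map. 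Along the chosen blind direction we therefore get $\mathcal N(m_\reg,s_\reg^2)$, with $m_\reg$ affine in $\bx_R$. The reported level-$(1-\alpha)$ credible interval is then $[m_\reg-z s_\reg,\,m_\reg+z s_\reg]$.

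For the truth we use the analogous fact. Under $\truth=\mathcal N(\mu_\star,\Sigma_\star)$, the blind coordinate given $\bx_R$ is $\mathcal N(m_\star,s_\star^2)$, with $s_\star$ fixed by \cref{p:cover-na} and $m_\star$ affine in $\bx_R$. Coverage is the probability that $x_B\sim\mathcal N(m_\star,s_\star^2)$ falls in the reported interval, conditionally on $\bx_R$. We standardize with $u=(x_B-m_\star)/s_\star\sim\mathcal N(0,1)$. The event becomes $u\in[\delta-z s_\reg/s_\star,\ \delta+z s_\reg/s_\star]$, and applying $\Phi$ at the endpoints gives \eqref{eq:coverage-gen}. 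When $\delta$ does not depend on $\bx_R$, as in the stated setting where $\delta$ is a fixed standardized gap, the same value holds unconditionally. Setting $\delta=0$ and using $\Phi(-t)=1-\Phi(t)$ gives \eqref{eq:coverage}.

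The two qualitative claims are monotonicity arguments. First, $2\Phi(zr)-1$ with $r=s_\reg/s_\star$ is strictly increasing in $r$ and equals $2\Phi(z)-1=1-\alpha$ at $r=1$. Hence $C<1-\alpha$ exactly when $s_\reg<s_\star$. Second, for fixed half-width $w=zr$, the function $g(\delta)=\Phi(\delta+w)-\Phi(\delta-w)$ is even. Its derivative $\varphi(\delta+w)-\varphi(\delta-w)$ is negative for $\delta>0$, because $|\delta+w|>|\delta-w|$ when $w>0$. So $g$ is maximized at $\delta=0$, and any nonzero frozen mean error lowers coverage.

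The main obstacle is the first step: justifying that the \emph{entire} reported blind conditional, including its mean, is the regularizer's. \eqref{eq:freeze} as stated controls only the precision block. I would first invoke the measure-level statement of \cref{p:blind}, that the conditional of $q_\reg$ on each fiber equals $\reg$'s, which gives the mean as well. Failing that, I would verify it directly from \eqref{eq:emstep}. The reweighting factor $\E_{\by}[p(\by\given\bx)/\reg(\by)]$ is a function of $\bA\bx$ alone, hence of $\bx_R$ alone. Multiplying $\reg$ by it leaves $\reg(\bx_B\given\bx_R)$ unchanged. The likelihood factor at deployment is likewise a function of $\bx_R$ alone, so the posterior's blind conditional is still the regularizer's.
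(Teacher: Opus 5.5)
Your proposal is correct and follows essentially the same route as the paper's proof. Like the paper, you invoke the measure-level freeze \eqref{eq:freeze-cond}, not just the precision identity, and combine it with Gaussian conditioning to obtain the curated conditional $\mathcal N(m_\reg,s_\reg^2)$. You then standardize against the truth's $\mathcal N(m_\star,s_\star^2)$ and sign the derivative $\phi(\delta+z s_\reg/s_\star)-\phi(\delta-z s_\reg/s_\star)$. Your evenness remark, which covers $\delta<0$ explicitly, and your check that the deployed posterior's blind conditional equals the prior's are small completions the paper leaves implicit, not a different argument.
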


\noindent Equation~\eqref{eq:coverage} is the aligned case of a law that needs no alignment~(\cref{p:cover-na}); scoring a blind-\emph{marginal} band is a different protocol~(\cref{ss:metrics}).

\subsection{Single-best curation collapses the blind interval}\label{ss:singlebest}
\noindent That width is what an archive of posterior samples inherits. What an archive most often stores is a single-best reconstruction rather than a posterior sample~\citep{yilmaz2001seismic,Veritas2005,WesternGeco2012,leuschner2021lodopab}, and on the blind fiber the discard is total.

\begin{theorem}[Single-best curation collapses the blind-fiber uncertainty \emph{(any forward operator with a blind subspace, any factoring noise, any variational estimator)}]\label{p:map}
Under a single-best archive let it store any variational reconstruction
\begin{equation}
\hat\bx(\by)\in\arg\min_{\bx}\ D\big(\by,\bF(\bx)\big)+R(\bx),
\label{eq:varest}
\end{equation}
a data-fidelity term plus a penalty---the Bayesian $\mathrm{MAP}_\reg(\by)\in\arg\max_\bx p(\by\mid\bx)\,\reg(\bx)$ among its instances (on function space the maximizer is read through small-ball probabilities, the Onsager--Machlup definition of the MAP estimator~\citep{dashti2013map,helin2015map}, the density form here being its finite-dimensional reading), as are the penalized reconstructions of \cref{sec:demos} and, with an improper penalty, total-variation reconstruction. Assume a minimizer exists (the measurable selection where it is not unique, and the comparison $C^{\mathrm{sb}}\le C^{\mathrm{ps}}$ of the blind coverage a single-best archive attains against a posterior-sample one's, are handled in \cref{app:proofs}); we write $C$ for blind coverage throughout, superscribed $\mathrm{sb}$ for the single-best archive and $\mathrm{ps}$ for the posterior-sample one. Then $\hat\bx(\by)$ is a \emph{deterministic} function of $\by$, so the curated ensemble carries none of the within-$\by$ posterior spread. On a blind fiber---the leaf $\bx+\blind$ of the operator's blind subspace $\blind$, with $\bF(\bx+\bm n)=\bF(\bx)$ and $\blind=\ker\bA$ for a linear operator---the data term is flat, so the blind component of the reconstruction is the penalty's conditional minimizer given the resolved coordinates---a function of those coordinates alone, never of the measurement; in the split coordinates,
\begin{equation}
\hat\bx_B(\bx_R)\in\arg\min_{\bx_B}R(\bx_R,\bx_B),\qquad C^{\mathrm{sb}}=0,
\label{eq:mapcollapse}
\end{equation}
a single point with zero within-fiber spread. The conditional credible interval on a blind functional $\theta=\bm v^\top\bx$ ($\bm v$ in the blind subspace) therefore has zero width and never covers a truth whose fiber-conditional law of $\theta$ is nonatomic (a truth with a density along the fiber, in particular).
\end{theorem}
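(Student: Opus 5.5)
The argument rests on one observation. On a blind fiber the data-fidelity term is constant, so the variational objective \eqref{eq:varest} splits into a part that depends on $\by$ only through the resolved coordinates and a part, the penalty, that alone selects the blind component. I would carry this out in four steps.

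\emph{Determinism.} First fix a measurable selection $\hat\bx:\by\mapsto\hat\bx(\by)$ from the argmin correspondence. The argmin correspondence has closed values and a jointly measurable objective, so a Kuratowski--Ryll-Nardzewski / Aumann selection theorem applies when the minimizer is not unique. The archive is then the pushforward of $\truth(\by)$ through a deterministic map. Conditional on $\by$, the stored reconstruction is a Dirac mass, so the ensemble carries no within-$\by$ spread.

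\emph{Factorization on the fiber.} Write $\bx=(\bx_R,\bx_B)$. The hypothesis $\bF(\bx+\bm n)=\bF(\bx)$ for $\bm n\in\blind$ gives $D(\by,\bF(\bx_R,\bx_B))=D(\by,\bF(\bx_R,\bm 0))=:\tilde D(\by,\bx_R)$; in the linear case this is just $\bA\bx=\bA\bP_R\bx$. The joint minimization therefore becomes
\[
\min_{\bx_R}\Big[\tilde D(\by,\bx_R)+\min_{\bx_B}R(\bx_R,\bx_B)\Big].
\]
Any joint minimizer $(\hat\bx_R,\hat\bx_B)$ has $\hat\bx_B$ minimizing $R(\hat\bx_R,\cdot)$. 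If some $\bx_B'$ gave a strictly smaller penalty at the same $\hat\bx_R$, the full objective would strictly decrease, because the data term is unchanged; that contradicts optimality. This gives the first part of \eqref{eq:mapcollapse}. I would choose the selection $\hat\bx_B(\bx_R)$ measurable in $\bx_R$ alone, so that the blind component is a function of $\hat\bx_R$ and never of $\by$ separately. The MAP instance follows by taking $D=-\log p(\by\mid\cdot)$ and $R=-\log\reg$, and the factoring-noise assumption supplies exactly the fiber-constancy of $D$.

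\emph{The collapse.} Conditional on the resolved coordinates, the curated ensemble's blind component is the single point $\hat\bx_B(\bx_R)$. Any prior fit to this archive in the population limit therefore has a Dirac fiber conditional, and its conditional credible interval for $\theta=\bm v^\top\bx$ with $\bm v\in\blind$ is the degenerate interval $\{\bm v^\top\hat\bx\}$. Under a truth whose fiber-conditional law of $\theta$ is nonatomic, the probability of hitting a single point is zero, conditionally and hence after integration. So $C^{\mathrm{sb}}=0$, and the comparison $C^{\mathrm{sb}}\le C^{\mathrm{ps}}$ holds because $C^{\mathrm{ps}}\ge0$.

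\emph{Main obstacle.} The step I expect to be delicate is measurability combined with non-uniqueness. I need a selection that is simultaneously measurable in $\by$ and whose blind part depends only on the resolved coordinates. My first attempt would be to select $\hat\bx_R(\by)$ from the reduced problem and then compose it with a measurable selection of $\arg\min_{\bx_B}R(\bx_R,\cdot)$, requiring lower semicontinuity of $R$ and coercivity along fibers.

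The improper-penalty and function-space (Onsager--Machlup) readings need separate care. For those I would state that the fiber argument uses only the flatness of $D$ along the fiber and the existence of the minimizer, both of which are assumed.
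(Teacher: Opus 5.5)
Your proposal is correct and follows the paper's proof essentially step for step. Both arguments cancel the fiber-flat data term in any within-fiber comparison, use a profiled Kuratowski--Ryll-Nardzewski selection to make the blind component a function of the resolved coordinates alone, and then invoke nonatomicity of the truth's fiber conditional to give zero coverage of the resulting Dirac report. One step you leave tacit in ``conditionally and hence after integration'' the paper makes explicit: the factoring likelihood makes $\by$ and $\bx_B$ conditionally independent given $\bx_R$. By the tower property, that independence is what gives zero coverage when the reported point comes from the record's own measurement (any measurable rule $c(\by)$) rather than from the truth's resolved readout.
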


\noindent The fiber structure of \eqref{eq:mapcollapse}---i.e., the blind component sitting at the penalty's conditional minimizer---is the classical geometry of generalized Tikhonov and minimum-penalty selections~\citep{engl1996regularization}; what the theorem adds is what that structure does to a curated archive. A generative estimator fit to the collapsed archive reports not the data's spread but its own smoothing scale~(\cref{r:map-lg}), and on a real operator's near-null the collapse persists for penalties strongly convex along the blind direction, mere convexity not sufficing~(\cref{r:mapnearnull}).

\subsection{Why nothing available detects it}\label{ss:undetectable}
\noindent Both readings above describe what a deployed prior reports; what makes them serious is that no measurement can contradict either. The results in this subsection ask nothing about how the prior was obtained, so they hold for a learned prior and a handcrafted one alike.

\begin{corollary}[The shortfall is not identifiable from the measurements \emph{(any likelihood that factors through the forward operator)}]\label{c:nonident}
For any noise model that reaches the unknown only through the operator, $p(\by\mid\bx)=p(\by\mid\bF(\bx))$, the data marginal $\truth(\by)$ depends on the truth only through its resolved marginal (for an operator with blind subspace $\blind$, the law of its projection along $\blind$), so the truth's \emph{entire} blind-fiber conditional law $\truth(\bx_B\mid\bx_R)$ is not a functional of the data marginal, so neither its blind spread nor its blind mean is, and neither therefore is any comparison between them and the prior's~(\cref{p:cover}). No statistic computed from the measurements---no goodness-of-fit test, no held-out check against the recorded data---can detect or correct the inherited overconfidence (proved in \cref{app:proofs}; it rests only on the likelihood's fiber-constancy, witnessed for a linear operator by $\bA\bN=\mathbf 0$). In the linear--Gaussian case the witness is explicit: two truths---two candidate laws for the unknown---differing only on the blind subspace,
\begin{equation}
\Sigma_\star\ \longmapsto\ \Sigma_\star+\bN\bm W\bN^\top+\bN\bm C\bP_R+\bP_R\bm C^\top\bN^\top,\qquad \mu_\star\ \longmapsto\ \mu_\star+\bN\bm w
\label{eq:witness}
\end{equation}
($\bm W\succeq0$; $\bm C$ small enough that the perturbed covariance stays positive definite), moving the blind conditional's spread, mean, and dependence on the readout at will, induce the identical measurement law. Under single-best curation the failure is sharper still, because $\mathrm{MAP}_\reg$ is a truth-independent deterministic map of $\by$, the two truths yield identically distributed archives and train the identical learned prior; for a fixed set of measurements the archive is unchanged.
\end{corollary}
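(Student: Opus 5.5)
The plan is to prove the general statement first with a one-line disintegration argument, then verify the explicit linear--Gaussian witness \eqref{eq:witness}, and finally treat the single-best clause.

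\textbf{The general statement.} Write the data marginal as
\[
\truth(\by)=\int p(\by\mid\bF(\bx))\,\truth(d\bx).
\]
Because the integrand depends on $\bx$ only through $\bF(\bx)$, the change-of-variables formula turns this into an integral against the pushforward $\bF_\#\truth$. Hence $\truth(\by)$ is a functional of $\bF_\#\truth$ alone. For a linear operator, $\bF(\bx)=\bA\bx=\bA\bP_R\bx$ because $\bA\bN=\mathbf 0$, so $\bF_\#\truth$ is in turn a functional of the resolved marginal $(\bP_R)_\#\truth$. In the nonlinear case with a genuine invariance the same holds for the law of the fiber index.

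Disintegrate $\truth$ as the resolved marginal times the fiber conditional $\truth(\bx_B\mid\bx_R)$. Keep the marginal and swap in any other conditional kernel. The resulting joint law has the same resolved marginal, and therefore the same $\truth(\by)$. It follows that the conditional, and every functional of it (blind mean, blind spread, and the $\delta$ and $s_\star$ of \cref{p:cover}), is not a functional of the data marginal.

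\textbf{Consequence for statistics of the measurements.} Any statistic of i.i.d.\ measurements has a law determined by $\truth(\by)$. So no test has power exceeding its size between two such truths, and no estimator can be consistent for a quantity that differs between them. This is the undetectability claim.

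\textbf{The Gaussian witness.} Here I verify directly that $\bA\Sigma'\bA^\top=\bA\Sigma_\star\bA^\top$. The three added terms each carry a factor $\bA\bN=\mathbf 0$ on the appropriate side, so they vanish. Likewise $\bA\mu'=\bA\mu_\star$. The measurement law $\mathcal N(\bA\mu,\bA\Sigma\bA^\top+\Gamma)$ is therefore unchanged.

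Positive definiteness holds for $\bm W\succeq0$ and small $\bm C$ by continuity of eigenvalues. To confirm that the perturbation moves the blind conditional "at will," I compute the Gaussian conditional in $(R,B)$ coordinates:
\begin{itemize}
\item the conditional mean shifts by $\bm w$, plus a term in $\bm C$ that changes the regression on $\bx_R$;
\item the Schur complement changes by $\bm W$, up to terms in $\bm C$.
\end{itemize}
This shows the spread, the mean and the dependence on the readout can each be moved.

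\textbf{The single-best clause.} Under single-best curation the archive is $\{\hat\bx(\by_i)\}$, with $\hat\bx$ a fixed, truth-independent, measurable map (\cref{p:map}). Its law is $\hat\bx_\#\truth(\by)$, which is identical for the two truths. For a fixed set of measurements the archive is literally the same set of points. Any training procedure, including randomized ones whose randomness is independent of the truth, then yields the same law of learned priors.

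\textbf{Expected obstacle.} The main difficulty is measure-theoretic rigor in the general statement: the existence of regular conditional distributions along fibers, and the claim that the swapped kernel gives a valid joint law on the fibers. Working on a standard Borel space and with the linear or invariant fiber structure should settle this. A secondary point is the positive-definiteness bound on $\bm C$, which I would handle with a Schur-complement argument.
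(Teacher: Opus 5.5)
Your proposal is correct and follows the paper's proof essentially step for step. In both, fiber-constancy together with $\bA=\bA\bP_R$ makes the data marginal a functional of the resolved marginal alone; every block of the witness is annihilated by $\bA\bN=\mathbf 0$; and the single-best archive is the pushforward of a common measurement law through a truth-independent map. Your explicit Schur-complement check that $\bm w$, $\bm W$ and $\bm C$ move the conditional mean, the spread and the regression on $\bx_R$ fills in what the paper simply asserts.
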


\begin{corollary}[No finite blind interval is an inference \emph{(any procedure; linear--Gaussian witness)}]\label{c:honest}
Fix a blind direction $\bm v$ and let $\mathcal P$ be the truths consistent with the measurement law, which by \cref{c:nonident} carry an arbitrary blind-fiber conditional; the witness \eqref{eq:witness} with $\bm W=w\,\bm v\bm v^\top$ places in $\mathcal P$ truths whose conditional spread along $\bm v$ is arbitrarily large. Let $\hat I$ be \emph{any} interval rule for $\theta=\bm v^\top\bx$ computed from the recorded measurements and the archive---learned or classical, randomized or not. Its law is the same for every member of $\mathcal P$, while a witness truth's conditional density along $\bm v$ is at most $(s\sqrt{2\pi})^{-1}$ at conditional spread $s$. Hence
\begin{equation}
\inf_{p\in\mathcal P}\Pr\nolimits_p\big(\theta_\star\in\hat I\big)\;\le\;\frac{\E|\hat I|}{\sqrt{2\pi}\,\sup_{p\in\mathcal P} s}\;=\;0
\qquad\text{whenever }\ \E|\hat I|<\infty,
\end{equation}
so an interval with coverage $1-\alpha$ uniformly over $\mathcal P$ has infinite expected length. Every finite interval reported on a blind direction, by any procedure reading only those two, is therefore prior assumption rather than an inference---the curated prior of \cref{p:cover} being one instance, and its particular width the subject of that proposition. Information from outside the pair escapes the bound, which is what \cref{p:certify} uses reference truths to obtain.
\end{corollary}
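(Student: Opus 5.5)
The plan is the standard Gleser--Hwang argument: an interval of fixed expected length cannot cover a parameter whose law can be spread arbitrarily thin, unless the data reveal how thin. It runs in three steps.

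\textbf{Step 1: the interval's law does not depend on the blind conditional.} I invoke \cref{c:nonident}. Every $p\in\mathcal P$ induces the same measurement law $\truth(\by)$. The archive is built from those measurements, either by posterior sampling under the fixed $\reg$ or by the deterministic $\mathrm{MAP}_\reg$, so it is a possibly randomized function of the measurements. Its auxiliary randomness is independent of the truth. Hence the joint law of (survey, archive), and with it the law of $\hat I$, is the same for every $p\in\mathcal P$.

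One point needs care: $\theta_\star$ is not independent of $\by$. So I condition on the resolved coordinates $\bx_R$. Because the likelihood factors through $\bF(\bx)$, the data $\by$ depend on $\bx$ only through $\bx_R$ (for linear $\bA$, $\bA\bx=\bA\bP_R\bx$). Given $\bx_R$, the blind component $\theta_\star$ is therefore independent of $\hat I$. Moreover, under the witness \eqref{eq:witness} with $\bm C=0$, $\bm w=0$ and $\bm W=w\,\bm v\bm v^\top$, the law of $\bx_R$ is unchanged.

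\textbf{Step 2: the density bound.} Under the witness, the blind-fiber conditional law of $\theta_\star$ given $\bx_R$ is Gaussian. Its variance is $s_0^2+w$, since adding $\bN\bm W\bN^\top$ adds $w$ to the conditional variance along $\bm v$ without changing the regression on $\bx_R$. Its density is therefore bounded by $(s\sqrt{2\pi})^{-1}$ with $s=\sqrt{s_0^2+w}$.

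Conditioning on $(\bx_R,\hat I)$, the probability that $\theta_\star$ lands in $\hat I$ is at most $|\hat I|/(s\sqrt{2\pi})$. Taking expectations gives
\[
\Pr\nolimits_p(\theta_\star\in\hat I)\le \frac{\E|\hat I|}{s\sqrt{2\pi}},
\]
where $\E|\hat I|$ is the same for every $p$ by Step 1. The infimum over $\mathcal P$ is at most the value at each witness. Letting $w\to\infty$, so that $\sup_{\mathcal P}s=\infty$, drives the bound to $0$ whenever $\E|\hat I|<\infty$.

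\textbf{Step 3: the contrapositive.} A rule with coverage $1-\alpha>0$ uniformly over $\mathcal P$ must have $\E|\hat I|=\infty$. The remark on the curated prior then follows as a special case.

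\textbf{Main obstacle.} The delicate step is Step 1's conditional-independence claim when the archive was built using $\by$ from many surveys. I expect to handle it by treating the full archive as a measurable function of the independent surveys plus independent auxiliary noise. Conditioning on this survey's $\bx_R$ then leaves $\theta_\star$ independent of everything the rule reads, because the other surveys are independent of this truth.
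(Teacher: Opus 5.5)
Your proposal is correct and takes the same route as the paper's proof. The law of $\hat I$ is constant on $\mathcal P$ because the archive is a randomized function of the record, the Gaussian conditional-density bound is integrated over that law, and the witness sends $\sup_{\mathcal P}s$ to infinity. Your explicit conditioning on $\bx_R$, which uses $\theta_\star\perp\hat I\mid\bx_R$ from the likelihood's fiber-constancy, is exactly what the paper's phrase ``the density bound integrated over that law'' leaves implicit.
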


\noindent \Cref{c:honest} concerns every procedure at once, not one inference under a prior that is stated outright~(\cref{sec:intro}); \cref{p:cover} turns that width into a coverage shortfall and fixes its sign from the spread ratio alone.

\noindent Reference truths are not the only outside information that escapes the bound. A hard physical constraint (e.g., positivity, or a norm or energy bound of the kind strict-bounds inference assumes~\citep{backus1970inference,stark1992inference}) caps the witness spread $\sup_{p\in\mathcal P}s$, so finite intervals return, at a width the constraint sets rather than the data. Constraints and references are thus the two routes to an honest blind interval; \cref{ss:price} says how much of the second is required.

\begin{remark}[Self-consistency checks are neutral by construction]\label{r:selfcheck}
Simulation-based calibration draws a test truth from the model's own prior, simulates a measurement from it, samples the posterior and ranks
the truth among the draws, so it reads nothing from the recorded data and \cref{c:nonident} does not reach it. A separate argument does:
whenever the sampler targets the prior that generated the test truth, that truth and the posterior draws are exchangeable, so the rank of any
one-dimensional functional with almost surely tie-free draws---any smoothing scale supplies this---is uniform whatever the prior~\citep{talts2018validating}. This check scores a prior's consistency with its own posterior, never its fidelity to the truth, and its law is the same for the curated prior as for a truth-trained one, its power against the freeze equal to its size. Every diagnostic available in deployment is a functional of the recorded data or of the prior and likelihood alone; \cref{c:nonident} settles the first family and exchangeability the second.
\end{remark}

\noindent A prior suspected of staleness is normally retrained, so it is worth asking what a second round of curation against fresh surveys moves.

\begin{corollary}[The blind conditional is a loop invariant, and labels the limit \emph{(invariance general; completeness linear--Gaussian)}]\label{c:invariant}
For any likelihood reaching the unknown only through the operator, any regularizer, and any law by which each round's measurements are averaged, the blind-fiber conditional is unchanged by every iterate of the loop---so it is an invariant, and no amount of curation against fresh real data disturbs it~(\cref{p:recover}(i)). In the linear--Gaussian case, under that proposition's hypotheses, it is moreover \emph{complete} for the limit, in that the orbit converges to the law carrying the truth's resolved marginal together with the initial regularizer's blind conditional, so two regularizers reach the same limit if and only if they share a blind-fiber conditional, and nothing else about the starting point survives. Consequently the loop recovers the full truth from every \emph{full-support} regularizer---the hypothesis is on the regularizer, not only on the operator---if and only if the operator has trivial kernel.
\end{corollary}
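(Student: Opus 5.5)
The plan is to split the statement into its three claims (invariance, completeness of the blind conditional for the limit, and the recovery dichotomy) and treat each in turn, reusing \cref{p:blind} for the first.

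\textbf{Invariance.} Write a general iterate of the loop as $\pi_{k+1}=\loopop_k[\pi_k]$. Here $\loopop_k$ is the data-averaged-posterior map \eqref{eq:curated}, taken against whatever measurement law $\nu_k$ round $k$ averages over. By the multiplicative form \eqref{eq:emstep},
\[
\pi_{k+1}(\bx)=\pi_k(\bx)\,g_k(\bF(\bx)),\qquad g_k(\bm u)=\E_{\by\sim\nu_k}\big[p(\by\mid\bm u)/\pi_k(\by)\big].
\]
The reweighting factor is therefore constant on each blind fiber. Conditioning on the fiber, so on $\bx_R$ in the linear case, it cancels between numerator and normalizer. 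This gives $\pi_{k+1}(\bx_B\mid\bx_R)=\pi_k(\bx_B\mid\bx_R)$, which is exactly the mechanism of \cref{p:blind}, here applied with an arbitrary $\nu_k$ in place of $\truth(\by)$. Induction on $k$ then shows the blind conditional equals $\reg(\bx_B\mid\bx_R)$ at every iterate.

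\textbf{Completeness in the linear--Gaussian case.} Gaussianity is preserved by each step, and the iterates factor as a resolved marginal times the frozen blind conditional. The resolved marginal evolves under the same EM map restricted to $\mathrm{row}(\bA)$, where the reduced operator is injective. I would invoke \cref{p:recover}(ii) (assumed available) to conclude that on this injective problem the resolved marginal converges to the unique law whose measurement law matches $\truth(\by)$. That law is the truth's resolved marginal, since injectivity makes the data law determine it.
\begin{itemize}
\item The limit is then the truth's resolved marginal times $\reg(\bx_B\mid\bx_R)$.
\item If two regularizers have the same blind conditional, they have identical limits.
\item Conversely, the limit's blind conditional equals the starting one by invariance, so identical limits force identical blind conditionals.
\item No other feature of $\reg$ survives, because the resolved marginal of the limit is the truth's regardless of the start.
\end{itemize}
The main obstacle is this convergence on the resolved block, in particular uniqueness of the fixed point and attraction from every full-support start. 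My first attempt would be to show monotone ascent of the marginal likelihood via \cref{p:law}, then use compactness of the Gaussian parameter set along the orbit, and finally identify the limit through the fixed-point characterization stated after \cref{p:law}.

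\textbf{The recovery dichotomy.} If $\ker\bA=\{0\}$, there is no blind block, so the limit is the truth's full law. If $\ker\bA\neq\{0\}$, exhibit a full-support Gaussian regularizer whose blind conditional differs from $\truth$'s. For example, rescale the truth's blind conditional variance by a factor $2$, which stays positive definite. By invariance its limit has the wrong blind conditional, so it is not $\truth$.
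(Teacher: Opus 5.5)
Your proposal is correct and follows the paper's own route. The invariance is the fiber-constant cancellation of \cref{p:blind} iterated by induction, which is \cref{p:recover}(i); the completeness is read off from \cref{p:recover}(ii); and the dichotomy is that proposition's closing argument, with your doubled-blind-spread regularizer making the `only if' direction concrete. One such regularizer is $\Sigma_\reg=\Sigma_\star+\bN\bm S\bN^\top$, where $\bm S$ is the truth's blind Schur complement; it leaves $\bA\Sigma_\reg\bA^\top$ unchanged and so meets that proposition's hypotheses.

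If you prove the resolved convergence yourself rather than cite it, ascent plus compactness is not enough. The rank-deficient faces (e.g.\ $\bA\Sigma\bA^\top=\mathbf 0$) are invariant and contain fixed points of the loop, so a limit point could sit there. The paper therefore first proves a uniform lower bound on the smallest eigenvalue of the whitened data-space covariance along the orbit. That keeps limit points in the interior, where the fixed point is unique.
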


\noindent What a pipeline inherits is therefore not a residue that more data would wash out but a conserved quantity, exactly the part of the reported uncertainty that \cref{c:honest} shows no procedure can earn.

\subsection{How many reference truths it takes to settle it}\label{ss:price}
What no measurement can supply, a little ground truth can. What is required is a sample of the blind functional itself (e.g., an occasional fully sampled reference acquisition), a stronger demand than it first appears, since a borehole reading is a sample of the \emph{field}, whose overlap with a given blind direction may be slight~(\cref{sec:demos}). Since each reference is expensive, the practical question is not whether an audit is possible but what it costs.

\begin{proposition}[How many reference truths an audit needs \emph{(Gaussian fibers, noiseless references)}]\label{p:audit}
Let $\bN$ span the blind subspace, of dimension $b$, and let the curated prior supply its blind-fiber conditional mean map $\bm m_\reg(\bx_R)$ and conditional covariance $\Sigma_{B\mid R}^{\reg}$. Suppose $k$ references supply the truth exactly, drawn independently with a Gaussian fiber conditional---each at its own resolved readout, which the Schur complement $\Sigma_{B\mid R}^{\reg}$ does not depend on---and form the standardized residuals
\begin{equation}
\bm u_i \;=\; (\Sigma_{B\mid R}^{\reg})^{-1/2}\bN^\top\big(\bx_i-\bm m_\reg(\bx_{R,i})\big)\in\R^{b}.
\label{eq:auditstat}
\end{equation}
If the prior's blind report is correct then $\bm u_i\sim\mathcal N(\bm 0,\bm I_b)$ independently, so $T=\sum_i\|\bm u_i\|^2\sim\chi^2_{kb}$. Against a truth whose blind spread exceeds the reported one by a factor $r$ throughout the subspace, the size-$\alpha_{\mathrm{t}}$ test rejecting at $\chi^2_{kb,1-\alpha_{\mathrm{t}}}$ has power
\begin{equation}
1-\beta \;=\; \Pr\!\Big(\chi^2_{kb} > \chi^2_{kb,\,1-\alpha_{\mathrm{t}}}\big/ r^2\Big),
\qquad\text{so}\qquad
k \;\approx\; 1+\frac{\big(z_{1-\alpha_{\mathrm{t}}}+z_{1-\beta}\big)^2}{2\,b\,\log^2 r}.
\label{eq:auditk}
\end{equation}
The leading $1$ is a rounding-up of the asymptotic solution, not part of it. The reference count falls in inverse proportion to the blind dimension, so the larger the subspace cannot be checked, the cheaper it is to check it~(\cref{tab:audit})---against this alternative, which grows with $b$.
\end{proposition}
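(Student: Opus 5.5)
The plan has three steps: establish the null distribution of $T$, compute its law under the inflated alternative, and read off the approximate reference count.

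\emph{Null law.} For each reference $i$, condition on its resolved readout $\bx_{R,i}$. The claim that the prior's blind report is correct says that the truth's fiber conditional of $\bx_B=\bN^\top\bx$ given $\bx_R$ is $\mathcal N(\bm m_\reg(\bx_R),\Sigma_{B\mid R}^{\reg})$. Hence $\bN^\top(\bx_i-\bm m_\reg(\bx_{R,i}))$, conditionally on $\bx_{R,i}$, is centred Gaussian with covariance $\Sigma_{B\mid R}^{\reg}$. Whitening by $(\Sigma_{B\mid R}^{\reg})^{-1/2}$ then gives $\bm u_i\mid\bx_{R,i}\sim\mathcal N(\bm 0,\bm I_b)$.

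This conditional law is free of $\bx_{R,i}$, because the Gaussian Schur complement does not depend on the conditioning value. So the law is also the unconditional law, and $\bm u_i$ is independent of $\bx_{R,i}$. Independence of the references carries over to the $\bm u_i$, and summing $k$ independent $\chi^2_b$ variables gives $T\sim\chi^2_{kb}$. Noiseless references are used here so that $\bx_i$ enters exactly, with no added measurement variance.

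\emph{Alternative law and power.} Read ``blind spread exceeds the reported one by a factor $r$ throughout the subspace'' as a truth fiber conditional $\mathcal N(\bm m_\reg(\bx_R),r^2\Sigma_{B\mid R}^{\reg})$. That is, the mean is correct and the covariance is inflated by $r^2$, which is the pure-width alternative matching \cref{p:cover}. The same computation gives $\bm u_i\sim\mathcal N(\bm 0,r^2\bm I_b)$, so $T/r^2\sim\chi^2_{kb}$. The test rejects at $c=\chi^2_{kb,1-\alpha_{\mathrm t}}$, so its power is $\Pr(T>c)=\Pr(\chi^2_{kb}>c/r^2)$, which is the first display in \eqref{eq:auditk}. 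This part is exact.

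\emph{Sample-size approximation.} This is the step I expect to be delicate, because it is only asymptotic. I would use the Wilson--Hilferty-type log approximation: for large $n=kb$, $\log(\chi^2_n/n)$ is approximately $\mathcal N(0,2/n)$.

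Under the null, the critical value satisfies $\log(c/n)\approx z_{1-\alpha_{\mathrm t}}\sqrt{2/n}$. Under the alternative, $\log(T/n)\approx 2\log r+\mathcal N(0,2/n)$. Requiring power $1-\beta$ then gives
\[
2\log r\approx\big(z_{1-\alpha_{\mathrm t}}+z_{1-\beta}\big)\sqrt{2/n},
\]
so $n\approx(z_{1-\alpha_{\mathrm t}}+z_{1-\beta})^2/(2\log^2 r)$. Dividing by $b$ yields $k$, and the leading $1$ is added as a ceiling-style rounding, as the statement notes. The variance-stabilising log scale makes the null and alternative variances equal, which is why the two quantiles add with no ratio factor.

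The inverse proportionality in $b$ is then read directly from $n=kb$. I would state explicitly that this holds against the uniform-inflation alternative, whose total departure grows with $b$.
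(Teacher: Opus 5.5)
Your proposal is correct and follows the paper's proof essentially step for step. The prior-supplied standardization gives $T\sim\chi^2_{kb}$ under the null, inflating the conditional covariance to $r^2\Sigma_{B\mid R}^{\reg}$ gives $T\sim r^2\chi^2_{kb}$ and the exact power, and treating $\log T$ as normal with mean separation $2\log r$ and variance $2/(kb)$ gives the reference count. The paper additionally notes that the rejection probability increases with the inflation, so the test keeps level $\alpha_{\mathrm t}$ over the composite null $r\le 1$; also, your log approximation is the delta method, not Wilson--Hilferty, which is the cube-root transform.
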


\noindent Standardizing by the prior's own mean map, rather than estimating a mean from the references, lets the test see the whole of \eqref{eq:coverage-gen}, since a frozen conditional mean displaces the residuals and inflates $T$ exactly as a widened spread does, so mean and spread are audited together. The pooled statistic is powered against a misreport spread through the subspace, which is what a smoothness or sparsity penalty produces, having no reason to prefer a direction; a misreport concentrated in one unknown direction instead dilutes in the pool, and is caught by the largest standardized residual direction, at the one-dimensional count plus a search penalty logarithmic in $b$.

\begin{table}[t]
\centering
\caption{What it takes to audit the blind subspace: the smallest number of references for which the exact power in \eqref{eq:auditk} reaches $0.8$ at one-sided level $0.05$, against a misreport spread through a blind subspace of dimension $b$~(\cref{p:audit}). The columns anchor to the paper's own subspaces: a single direction, the subspace the added measurement of \cref{ss:augment-dep} resolves, the exact kernel of the groundwater operator of \cref{ss:darcy}, and a larger subspace extending the trend.}
\label{tab:audit}
\begin{tabular}{lllll}
\toprule
spread ratio $r$ & $b=1$ & $b=6$ & $b=69$ & $b=256$ \\
\midrule
$2$    & $7$    & $2$   & $1$  & $1$ \\
$1.5$  & $18$   & $3$   & $1$  & $1$ \\
$1.2$  & $91$   & $16$  & $2$  & $1$ \\
$1.05$ & $1286$ & $215$ & $19$ & $6$ \\
\bottomrule
\end{tabular}
\end{table}

\begin{remark}[Surveys and references compared, and why only one stays finite]\label{r:exchange}
The same misreport can be pursued either way, and the two requirements behave oppositely. Write the inflation of the blind spread along $\bm v$ as $\Sigma_\star\mapsto\Sigma_\star+w\,\bm v\bm v^\top$, so that the spread ratio of \cref{p:audit} is $r=\sqrt{1+w/s_\star^2}$. Obtaining the discrimination from surveys requires at least the crossover of \cref{p:nearnull},
\begin{equation}
n_\star \;\gtrsim\; \Big(\frac{\lambda_{\min}(\Gamma)}{w\,\|\bA\bm v\|^{2}}\Big)^{2},
\end{equation}
which diverges as the inverse square of the illumination $\|\bA\bm v\|^2$ as the direction goes blind and is infinite on a genuine kernel. Obtaining it from references requires the one-dimensional $k$ of \eqref{eq:auditk} plus the search penalty of \cref{p:audit}, since a rank-one misreport is the case that dilutes in the pool. What matters is that this count does not involve $\bA$ at all, and it is the same whether the direction is faintly illuminated or exactly blind. The two therefore do not trade off at a fixed rate, since precisely on the directions where no number of surveys suffices, the number of references needed is unchanged and small.
\end{remark}

\noindent These counts are a working-model guide (i.e., Gaussian fibers, noiseless references, and the prior's own conditional maps are assumed throughout). As such, the deployable instrument is the certified band of \cref{p:certify}, which asks nothing beyond exchangeability and still requires what the survey cannot supply.

\subsection{A measurement the survey does not make}\label{ss:augment}
\noindent Reference truths are one way to reach the blind directions. A second instrument is the other, to reach the same directions, and the result below says exactly which ones it recovers.

\begin{corollary}[An added measurement de-freezes exactly the directions it resolves \emph{(de-freezing general; recovery linear--Gaussian)}]\label{c:augment}
Let $\bm B_1$ have orthonormal columns spanning a subspace $\blind_1\subseteq\ker\bA$ of the blind subspace, observed through a new measurement $\by'=\bm B_1^\top\bx+\bm\varepsilon'$ whose noise law reaches $\bx$ only through $\bm B_1^\top\bx$ and is independent of the survey noise given $\bx$. Curating \emph{jointly} against $(\by,\by')$ is the map $\loopop$ of \eqref{eq:curated} for the stacked operator $[\bA;\bm B_1^\top]$, whose kernel is $\ker\bA\cap\blind_1^\perp$; since $\ker\bA=\blind_1\oplus(\ker\bA\cap\blind_1^\perp)$, exactly $\blind_1$ leaves the frozen set, and nothing else. In the linear--Gaussian case, provided $\Sigma_\star\succ0$ and the regularizer carries spread on the enlarged resolved space---$\bA_r\Sigma_\reg\bA_r^\top\succ0$ for a full-row-rank $\bA_r$ spanning $\mathrm{row}(\tilde\bA)$, $\tilde\bA=[\bA;\bm B_1^\top]$, which a full-support regularizer supplies---the repeated loop moreover corrects the prior on $\mathrm{row}(\bA)\oplus\blind_1$: de-freezing is general, recovery is linear--Gaussian.
\end{corollary}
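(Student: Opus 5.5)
The plan has two halves, matching the statement's split between "de-freezing general" and "recovery linear--Gaussian."

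\textbf{Kernel of the stacked operator.} First I would set up the joint measurement as a single instrument. Because $\by$ and $\by'$ are conditionally independent given $\bx$, and each reaches $\bx$ only through $\bA\bx$ and $\bm B_1^\top\bx$ respectively, the joint likelihood is
\[
p(\by,\by'\mid\bx)=p(\by\mid\bA\bx)\,p(\by'\mid\bm B_1^\top\bx).
\]
This factors through $\tilde\bA\bx$ with $\tilde\bA=[\bA;\bm B_1^\top]$, so joint curation is the map $\loopop$ of \eqref{eq:curated} for the stacked operator and data law $\truth(\by,\by')$.

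Next I would show $\ker\tilde\bA=\ker\bA\cap\ker\bm B_1^\top$, and note that $\ker\bm B_1^\top=\blind_1^\perp$ since the columns of $\bm B_1$ span $\blind_1$. Because $\blind_1\subseteq\ker\bA$, the orthogonal decomposition of $\ker\bA$ along $\blind_1$ gives
\[
\ker\bA=\blind_1\oplus(\ker\bA\cap\blind_1^\perp).
\]
The new blind subspace is therefore the second summand. Applying \cref{p:blind} (and \cref{c:invariant} for iterates) to $\tilde\bA$, the conditional on that summand stays frozen at every round. Directions in $\blind_1$ are no longer in the kernel, so they lie in $\mathrm{row}(\tilde\bA)=\mathrm{row}(\bA)\oplus\blind_1$. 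This proves the "exactly $\blind_1$ leaves, nothing else" claim for any factoring noise.

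\textbf{Recovery in the linear--Gaussian case.} Here I would invoke the completeness clause of \cref{c:invariant}, i.e.\ \cref{p:recover}, for the operator $\tilde\bA$. By that result the orbit converges to the law carrying the truth's $\tilde\bA$-resolved marginal together with the regularizer's conditional on $\ker\tilde\bA$. This yields correction on $\mathrm{row}(\bA)\oplus\blind_1$, provided its hypotheses hold.

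The main obstacle is checking those hypotheses, because $\tilde\bA$ may be rank deficient or redundant. I would first replace $\tilde\bA$ by a full-row-rank $\bA_r$ with the same row space, using Gaussian sufficiency: $\tilde\bA^\top\tilde\Gamma^{-1}\tilde\by$ is a sufficient statistic, and it is a noisy linear readout of $\bA_r\bx$. Curating against the full data and against this reduced statistic then gives the same posteriors, hence the same $\loopop$. The two remaining hypotheses are exactly the stated assumptions:
\begin{itemize}
\item $\Sigma_\star\succ0$ gives a nondegenerate target marginal;
\item $\bA_r\Sigma_\reg\bA_r^\top\succ0$ keeps the resolved covariance iteration away from singularity.
\end{itemize}
The Gaussian covariance recursion on the resolved block then converges, by the descent-and-compactness argument of \cref{p:recover}(ii), to $\bA_r\Sigma_\star\bA_r^\top$. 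The mean is handled the same way.
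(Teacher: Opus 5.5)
Your proposal is correct and follows the paper's proof essentially step for step. You factor the joint likelihood through $\tilde\bA$, read off $\ker\tilde\bA=\ker\bA\cap\blind_1^\perp$, and apply the fiber cancellation of \cref{p:blind} round by round; you then reduce $(\tilde\bA,\tilde\Gamma)$ by Gaussian sufficiency to a full-row-rank pair and invoke \cref{p:recover}(ii).

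One step should be made explicit. The $d$-dimensional statistic $\tilde\bA^\top\tilde\Gamma^{-1}\tilde\by$ has singular noise covariance, so it does not directly meet the nondegenerate-noise hypothesis of \cref{p:recover}. As the paper does, factor $\tilde\bA=\bm L\bA_r$ and pass to $\bm t=\Gamma_r\bm L^\top\tilde\Gamma^{-1}\tilde\by$ with $\Gamma_r=(\bm L^\top\tilde\Gamma^{-1}\bm L)^{-1}$. This gives $\bm t\mid\bx\sim\mathcal N(\bA_r\bx,\Gamma_r)$ with $\Gamma_r\succ0$, which is what that proposition requires.
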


This is the only repair the results above license, in that a blind direction is corrected by information from outside the survey, never by more of the same.

\section{Numerical experiments}
\label{sec:demos}

These experiments ask what a trained deep generative prior reports on the blind subspace, where the population idealization of \cref{sec:theory} no longer holds. We score the two quantities the theory speaks to (i.e., blind spread and coverage) rather than blind-band energy, which confounds the first moment with the calibration statement. Throughout, \emph{spread} is the sample standard deviation of the scored projection $\bm v^\top\bx$ over a prior or posterior ensemble, or the pointwise posterior standard deviation where a map rather than a projection is displayed. Where a physical operator's blind subspace is a near-null rather than an exact kernel, it is read as a kernel~(\cref{r:nearnull},~\cref{ss:split}). Throughout, \textcolor{emcol}{blue} marks the truth-trained oracle prior and \textcolor{madcol}{red} the curated prior, \textcolor{rescol}{purple} the resolved and \textcolor{blindcol}{orange} the blind subspace, and \textcolor{healcol}{green} the truth.

\subsection{Forward operator setup}
\label{ss:setup}

\subsubsection{The seismic operator} The first physical operator linearizes the acoustic wave equation. For pressure $u(\bm r,t)$ at position $\bm r\in\Omega\subset\mathbb R^2$, $t\in[0,T]$, squared slowness $m(\bm r)=1/c(\bm r)^2$, and a source at $\bm r_s$,
\begin{equation}
m(\bm r)\,\partial_t^2 u-\Delta u=w(t)\,\delta(\bm r-\bm r_s),\qquad u(\bm r,0)=\partial_t u(\bm r,0)=0,
\label{eq:wave}
\end{equation}
with $w$ a Ricker wavelet of peak frequency $f_0=30$~Hz and absorbing (damping-layer) conditions on $\partial\Omega$; the data are the traces $u(\bm r_r,t)$ at receivers $\bm r_r$. Writing $m=m_0+\delta m$ about a smooth background $m_0$ and $u=u_0+\delta u$, the Born (single-scattering) approximation gives
\begin{equation}
m_0\,\partial_t^2 u_0-\Delta u_0=w(t)\,\delta(\bm r-\bm r_s),\qquad m_0\,\partial_t^2\delta u-\Delta\delta u=-\,\delta m\,\partial_t^2 u_0,
\label{eq:born}
\end{equation}
under the same conditions. The forward (demigration) operator maps reflectivity to shot records, $(\bA\,\delta m)_{s,r}(t)=\delta u(\bm r_r,t)$; its adjoint is reverse-time migration, and the legacy reconstruction is least-squares migration~\citep{nemeth1999leastsquares} under an $\ell_1$ penalty, $\tfrac12\|\bA\,\delta m-\by\|_2^2+\lambda\|\delta m\|_1$ approximately minimized by a fixed budget of stochastic proximal iterations.

In practice, we solve \eqref{eq:wave}--\eqref{eq:born} by $16$th-order finite differences~\citep{devito-api,devito-compiler} with a $40$-cell damping layer over the background model of the Kirchhoff-migrated Parihaka-3D field survey~\citep{parihaka_segwiki,Veritas2005} ($m_0$, $256\times256$ grid at $20\times12.5$~m spacing, record length $T=2$~s), with a shallow line of $12$ sources and $24$ receivers. Due to finite aperture and source band, the illumination $\|\bA\bm v\|^2$ decays sharply, e.g., for deep, steeply dipping, or sub-wavelength reflectivity, so here the blind subspace is the operator's \emph{low-illumination} end---i.e., directions with $\|\bA\bm v\|^2$ below $1\%$ of the resolved median---and \cref{r:nearnull,r:mapnearnull} grade how the exact statements carry over. The oracle prior is trained on broadband reflectivity built above the archive's band from the survey's own tracked horizons~\citep{WuFomel2018}, the curated prior on the least-squares-migration archive; \cref{app:experimental} gives the acquisition, reconstruction, and training details.

This is a controlled synthetic Born experiment over a field-derived background rather than a field acquisition, and its geometry is deliberately sparse. Twelve sources along the $256\times20$~m~$\approx5.1$~km line put the source spacing near half a kilometre, far coarser than marine practice. The $2$~s record then bounds on its own the depth from which a reflection can return in time to be recorded, so part of the deep blind end is blind because the recording ends. What reflection data of any geometry do and do not determine is the subject of \citetx{Symes}{symes2009seismic}'s survey of the reflection inverse problem.

\subsubsection{The groundwater operator}\label{ss:darcy} The second physical operator is the canonical elliptic groundwater problem in the setting of \citetx{Beskos et al.}{beskos2017geometric}, which we use to carry the same experiment to a learned prior of a different family, a HINT normalizing flow~\citep{kruse2021hint}, one of the flow priors deployed for imaging regularization~\citep{altekruger2023patchnr}. On $\Omega=[0,1]^2$ the log-permeability $u$ sets the hydraulic head $H$,
\begin{equation}
-\nabla\cdot\!\big(e^{u}\nabla H\big)=0,\quad H|_{x_2=0}=x_1,\ \ H|_{x_2=1}=1-x_1,\ \ \partial_{x_1}H|_{x_1\in\{0,1\}}=0,
\label{eq:darcy}
\end{equation}
solved on a $40\times40$ mesh, with the head read at $33$ sensors ($32$ on a circle, one at the center) under Gaussian noise. The unknown is the whitened Karhunen--Lo\`eve coefficient vector $\bm\xi$ of a Gaussian log-permeability prior (a Laplacian-eigenbasis covariance $(-\Delta)^{-1.1}$, truncated at $10\times10$ modes). The map $\bm\xi\mapsto\by$ is nonlinear, so its split is the local one fixed by the Jacobian, which we linearize once at the prior mean and hold fixed thereafter; away from that base state the null rotates with the field, which is why this example's blind reading is empirical~(\cref{r:nonlinear}). Two components make up this blind subspace. With $33$ sensors against $d$ coefficients the Jacobian has an exact kernel of at least $d-33$ directions, and to it we add the tail whose singular values fall below the noise level, which the sensors see but cannot distinguish from noise. The legacy reconstruction is the single-best MAP estimate under the correct Gaussian prior, the field standard, which returns the prior mean on the blind subspace to first order; the oracle prior is trained on the truths' Karhunen--Lo\`eve coefficients, the curated on the MAP archive.

\subsection{What is scored, and how}
\label{ss:metrics}
For a functional $\theta=\bm v^\top\bx$ ($\bm v$ resolved or blind) and a set of prior or posterior samples $\{\bx^{(i)}\}$, the central-$90\%$ interval is the empirical $[0.05,0.95]$ quantile band of $\{\bm v^\top\bx^{(i)}\}$, and its coverage is the fraction of held-out truths whose $\theta_\star$ lands in it.

Because the data fix the resolved scale, a single scalar $\kappa$ puts each seismic prior on a common footing, matching its simulated-survey energy, noise-subtracted, to the observed surveys'. This source-calibration step uses only the known operator and noise of~(A2), with no ground truth. The built-in fairness check is that the two priors stay close at the standard levels on the resolved subspace, where the data can adjudicate between them. Since $\kappa$ rescales a prior's resolved and blind spreads together, this also bounds the calibration's own influence, since a mis-set $\kappa$ would register on the resolved subspace before it could manufacture a blind shortfall. The reported coverage is a blind-\emph{marginal} quantity, whereas \cref{p:cover,p:cover-na} score the fiber-\emph{conditional} band. Since the two coincide absent resolved--blind coupling, while in general a marginal read can overstate or mask the conditional shortfall~(\cref{p:cover-na}), the reported coverages are protocol-matched comparisons between priors, not measurements of the closed-form law. The seismic read is on the operator's illumination-ranked probe atoms ($16$ certified blind and $32$ resolved), a few of which carry negligible truth variation and are retained, since dropping them widens the reported shortfall.

Blind- and resolved-subspace coverage is read from $384$ prior samples per trained prior (DDIM, $50$ steps) against $160$ held-out truths on the seismic operator, and over $24$ held-out truths per training seed on the groundwater operator. The groundwater posterior is drawn by preconditioned Crank--Nicolson (pCN) chains of $4.4\times10^4$ steps at $\beta=0.08$, with $10^3$ discarded as burn-in and the remainder thinned by five. Every record regenerates from the accompanying code. \Cref{tab:coverage} collects the achieved coverages at the shared levels.

\subsection{The trained priors, and the capacity gate}
\label{ss:deployed}
On each of the two operators we train the curated and oracle priors of \cref{ss:priors}, matched in architecture and budget within each pair~(\cref{app:experimental}). A prior too weak to fit its target would add spread of its own on the blind subspace and mask the inherited assumption, so the protocol gates on capacity. Such an over-dispersion would be a property of the model rather than of the archive, appearing whether the prior is trained on the archive or on the truth, and it is what the spectral bias of neural networks would produce~\citep{rahaman2019spectral}. The expressive diffusion and flow priors used below do fit their targets, since the diffusion priors' validation losses fall to a plateau and the curated flow's is still descending at the end of its budget, while the oracle flow begins at its Gaussian entropy floor and stays there~(\cref{app:experimental}). Their blind reports therefore reflect the archive rather than under-training.

Because the blind posterior conditional is the prior's~(\cref{sec:setup}), the blind report is read from the prior's marginal~(\cref{ss:metrics}). This blind read is a property of the prior rather than of the sampler, since the likelihood residual cannot reach the null, $\bA\bN=\mathbf 0$. Where a per-pixel normalization would re-route it, as in the seismic illumination basis, the applied update's scored-blind component is removed at every step, and its residual action is measured by the leakage gate of \cref{app:experimental}.

\subsection{Seismic imaging: the curated prior tightens where illumination fails}\label{ss:seismic-dep} On the seismic operator of \cref{ss:setup}, blind coverage falls short of nominal for the curated prior from the $0.80$ level upward, beyond the bootstrap intervals, whose blind spread lies on the migration archive's, far tighter than the truth's~(\cref{fig:gap}c). As expected, the oracle instead errs wide on the footing the source-calibration $\kappa$ gives it and does not under-cover, while on the resolved subspace both priors over-cover, the curated one further~(\cref{fig:reliability}, top).

Neither prior sits on the diagonal at every level, so what these curves support is a comparison between the two priors rather than a calibration claim about either. Made precise, the comparison is a difference in differences, in which we subtract each prior's resolved coverage from its own blind coverage, which cancels whatever that prior gets generically wrong, and compare the two remainders~(\cref{fig:gap}a,b). On both operators the truth-trained control's gap stays flat and close to zero at every level, while the archive-trained prior's sits well below it, with a scatter over training seeds far smaller than the separation. Scoring the legacy archive itself the same way places it alongside the prior trained on it, i.e., the prior reproduces the archive's own deficit rather than merely degrading.

What makes this a statement about the operator is where the separation becomes large. Where the data can adjudicate, the archive-trained prior carries a spread as near the truth's as the control does~(\cref{fig:gap}c), and the two part by a wide margin only where the data cannot reach, the curated prior over-covering where the measurements can contradict it and running tight where they cannot. A cut of the same size taken at random from the probed directions, rather than sorted by illumination, leaves the two indistinguishable~(\cref{fig:gap}c, middle), so what separates them is the operator's blind subspace and not the act of splitting the space in two. What the curated prior demonstrates is therefore not a worse prior, but one that is right wherever the measurements reach and collapsed where they do not, which is the theory's own prediction rather than a calibration claim at each level.

The low-level resolved distortion has a protocol diagnosis, since the source-calibration $\kappa$ matches energies, not quantiles, and the seismic row scores prior samples, so a prior marginal whose shape differs from the truth's sits off nominal at the central levels while carrying the right scale, a distortion shared by both priors and orthogonal to the blind comparison. The seismic split is a low-illumination tail rather than an exact kernel, so the collapse is read there in its near-null form~(\cref{p:mapnearnull}), which grades with the illumination instead of vanishing at it.

Read on one survey, the same behavior is a picture rather than a curve~(\cref{fig:seismic-zoom}). Deep in the section both posterior means still carry reflectors, since it is the prior that supplies them where the record barely reaches, and the two priors differ in how faithfully those reflectors follow the truth rather than in whether they are printed at all. Where the priors part is on what they claim about them. Ordering every pixel by the operator's own illumination, the curated prior claims the wider spread of the two over most of that order, which taken alone would read against the claim made here; toward the blind end its claim falls while the oracle's rises, until the two meet. The level a prior reports was never the quantity at issue, only whether it opens where the measurements stop. That ordering is computed from the operator without truth, which is what makes a reading of this kind available before a prior is trained at all~(\cref{sec:reportcard}).

\begin{figure}[!tb]
\centering
\includegraphics[width=0.9\linewidth]{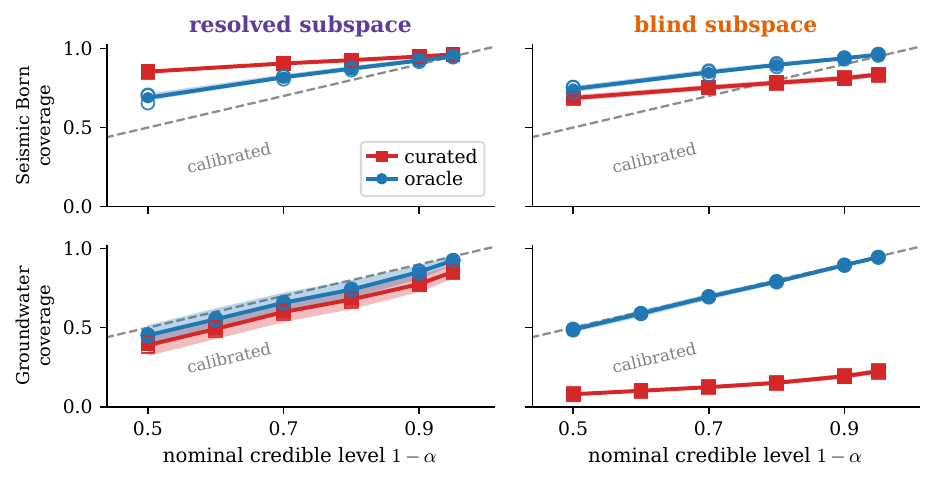}
\caption{Central-$(1-\alpha)$ coverage against nominal, on the resolved and blind subspaces. Top: seismic Born imaging, diffusion priors. Both panels score prior samples, which the blind panel licenses since the data update cannot reach the prior there~(\cref{ss:deployed}), and which the resolved panel carries as a protocol choice discussed in \cref{ss:seismic-dep}. Bottom: groundwater flow, normalizing-flow priors, scoring the pCN posterior. Open markers are training seeds; bands are bootstrap intervals, the seismic row resampling the sample axis as well.}
\label{fig:reliability}
\end{figure}

\begin{figure}[!tb]
\centering
\includegraphics[width=\linewidth]{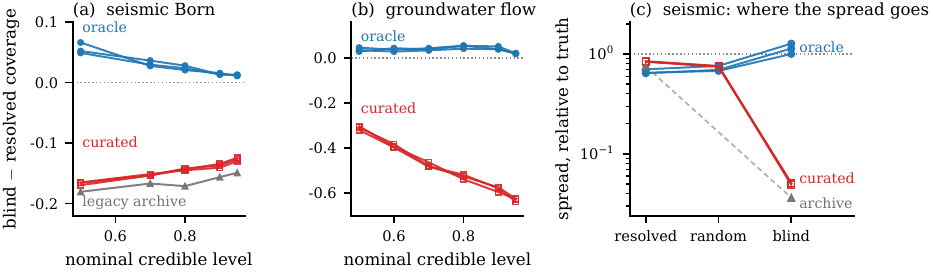}
\caption{Each prior held to its own behavior. (a,b)~The gap between a prior's blind coverage and its own resolved coverage, one line per training seed, on the seismic Born and the groundwater operator. Differencing within a prior cancels whatever it gets generically wrong and leaves what is specific to the directions no measurement reaches: the truth-trained \textcolor{emcol}{oracle} stays flat and close to zero, while the \textcolor{madcol}{curated} prior sits well below it, and the legacy archive scored the same way (grey, panel a) carries the deficit of the prior trained on it. (c)~The same three sources by spread, relative to the truth's. The middle column is a control, built from subspaces of the blind subspace's own dimension, drawn at random from the probed directions, i.e.\ the same directions no longer sorted by illumination. The two priors are of a kind with each other there, as on the resolved subspace, and part only on the blind one. The archive is measured at the two ends alone (dashed).}
\label{fig:gap}
\end{figure}

\begin{figure}[!tb]
\centering
\includegraphics[width=\linewidth]{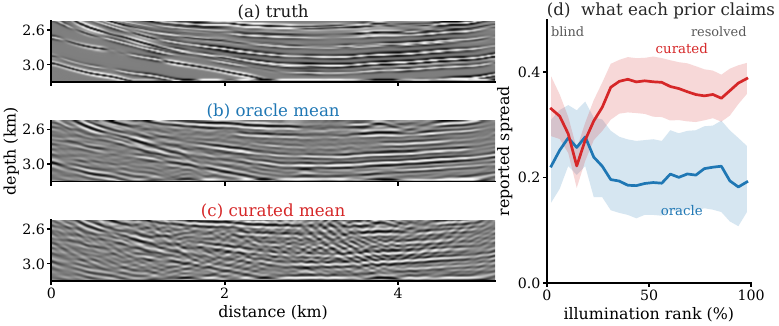}
\caption{One survey, read on the operator's own illumination~(\cref{ss:seismic-dep}). (a)--(c)~the truth and the two posterior means over the deep end of the section, at true physical proportions and on the grayscale of \cref{fig:teaser}. Both priors print reflectors here, and both print the prior's, since the record barely reaches them. (d)~each prior's reported spread against illumination rank, i.e., every pixel below the water mute ordered by $\|\bA\bm{e}_j\|^2$, the energy the receivers return from a unit reflector there, which is the operator alone and uses no truth; lines are bin medians and bands the quartiles, on the truth's own amplitude scale. Over most of that order the \textcolor{madcol}{curated} prior claims the wider spread of the two, and toward the blind end its claim falls while the \textcolor{emcol}{oracle}'s rises, until the two meet. Depth is not the coordinate, since at a fixed depth the illumination varies across the section by decades.}
\label{fig:seismic-zoom}
\end{figure}

\begin{table}[t]
\centering
\caption{Achieved central-$(1-\alpha)$ coverage at the levels the two operators share, averaged over the training seeds; the per-seed markers and bootstrap bands are those of \cref{fig:reliability}. The two priors part on the blind subspace, mildly on the seismic near-null and severely on the groundwater kernel, and they differ on the resolved subspace too, for the protocol reason \cref{ss:seismic-dep} gives; the table compares the two priors under a matched protocol rather than claiming absolute calibration at every level~(\cref{ss:seismic-dep}).}
\label{tab:coverage}
\small
\begin{tabular}{lllccccc}
\toprule
 & & & \multicolumn{5}{c}{nominal central-$(1-\alpha)$ level} \\
\cmidrule(lr){4-8}
operator & subspace & prior & $.50$ & $.70$ & $.80$ & $.90$ & $.95$ \\
\midrule
Seismic Born & resolved & \textcolor{emcol}{oracle} & $.69$ & $.82$ & $.87$ & $.92$ & $.95$ \\
 &  & \textcolor{madcol}{curated} & $.85$ & $.91$ & $.93$ & $.95$ & $.96$ \\
 & blind & \textcolor{emcol}{oracle} & $.74$ & $.85$ & $.90$ & $.94$ & $.96$ \\
 &  & \textcolor{madcol}{curated} & $.69$ & $.75$ & $.78$ & $.81$ & $.84$ \\
Groundwater & resolved & \textcolor{emcol}{oracle} & $.45$ & $.66$ & $.74$ & $.85$ & $.93$ \\
 &  & \textcolor{madcol}{curated} & $.39$ & $.60$ & $.68$ & $.78$ & $.85$ \\
 & blind & \textcolor{emcol}{oracle} & $.49$ & $.70$ & $.79$ & $.89$ & $.94$ \\
 &  & \textcolor{madcol}{curated} & $.08$ & $.12$ & $.15$ & $.19$ & $.22$ \\
\bottomrule
\end{tabular}
\end{table}

The posterior read also makes the defect physical. Sampling one survey's posterior under each prior, by diffusion posterior sampling~\citep{chung2023dps} on the Born operator, returns reconstructions that share the same reflectors~(\cref{fig:teaser}a--c), and both meet the misfit gate of \cref{app:experimental}. As observed from \cref{fig:teaser}e,f, their uncertainty instead parts along the operator's illumination, the two spreads' gradients running opposite, a property of the section rather than of a few pixels. Since the linear--Gaussian hypotheses of \cref{p:cover} do not hold for a migration archive and a diffusion prior, that closed-form law is not the yardstick here, and the theory supplies the direction of the effect and the freeze that causes it, not its magnitude~(\cref{r:mapnearnull}).

\subsection{Groundwater flow: the shortfall survives a correctly specified regularizer}\label{ss:darcy-dep} The groundwater operator of \cref{ss:darcy} carries the same experiment to a different prior family and a different sampler, in that the learned prior is a normalizing flow, and the posterior is drawn by pCN in the flow's Gaussian latent~\citep{cotter2013mcmc}. None of this changes the pattern~(\cref{fig:reliability}, bottom), where the curated flow under-covers the operator's blind subspace by a wide margin at every level, while the oracle tracks nominal and the two priors stay close on the resolved subspace.
\begin{figure}[!tb]
\centering
\includegraphics[width=0.9\linewidth]{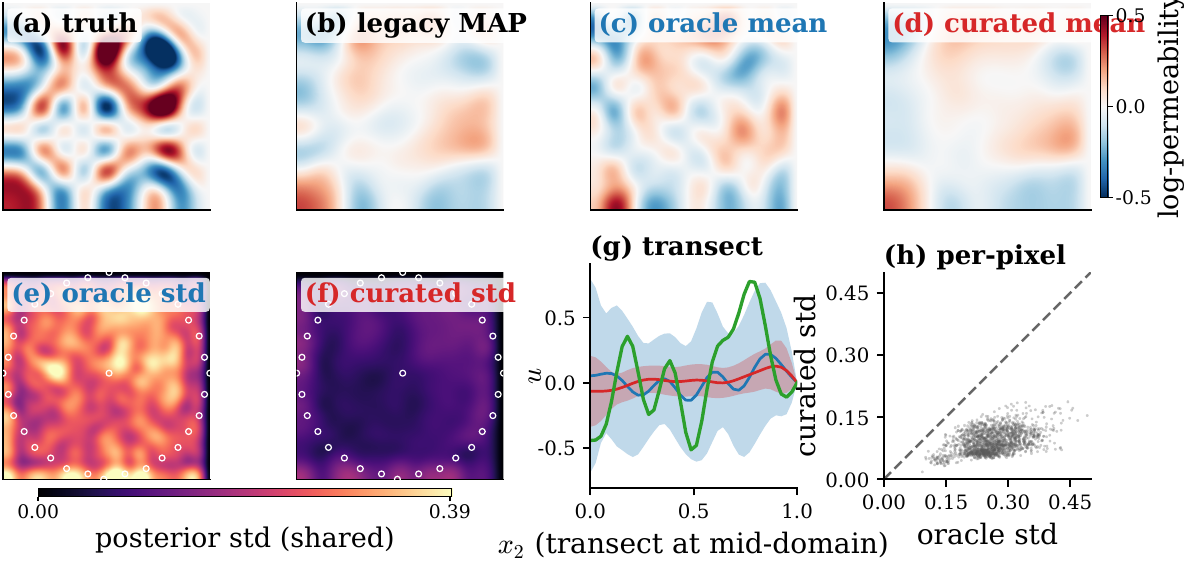}
\caption{Groundwater posteriors by pCN in the flow latent, on the fixed truth of \citetx{Beskos et al.}{beskos2017geometric}. (a)~truth; (b)~legacy MAP; (c,d)~posterior means under the \textcolor{emcol}{oracle} and \textcolor{madcol}{curated} priors, both drawn from the same sensor-conditioned posterior; (e,f)~posterior standard deviation (white circles: the sensors); (g)~the spread of (e,f) along a mid-domain transect (central-90\% band, Gaussian read), where the \textcolor{madcol}{curated} band admits few of the \textcolor{healcol}{truth}'s excursions and the \textcolor{emcol}{oracle}'s all but the sharpest; (h)~each pixel's pair of spreads, every one below the diagonal, i.e., the \textcolor{madcol}{curated} prior tighter everywhere.}
\label{fig:darcy-posterior}
\end{figure}

The two readings are not equally stark, and the difference is the one the theory asks for. This operator's blind subspace contains an exact kernel, where the likelihood is flat and the freeze is exact, whereas the seismic operator's is a low-illumination tail the data reach faintly rather than not at all. The shortfall there is therefore real but smaller, and concentrated at the credible levels usually reported~(\cref{r:nearnull}).

On the resolved subspace both priors sit slightly below nominal, since the flow is an approximation and it shows where the data pull the posterior tight, though the held-out count reads this deficit only coarsely. What distinguishes the blind subspace is that the shortfall is of a different order, since on the truth's leading blind mode the curated flow's marginal sits on the single-best MAP archive's, far tighter than the truth's, while the oracle matches the truth.

What makes this example sharp is that its regularizer is \emph{correctly specified}, since the legacy MAP maximizes the posterior under the true Gaussian prior (L-BFGS on whitened coefficients) and the oracle trains on that prior's own draws, so the regularizer is nowhere tighter than the truth and the width mechanism of \cref{p:cover} is absent. The single-best collapse remains~(\cref{p:map}, exact at a kernel and read empirically here per \cref{r:nonlinear}), and it is curation, not misspecification, that empties the blind interval. On a single survey, the two priors' posteriors part~(\cref{fig:darcy-posterior}). Since neither a diffusion prior nor a guided sampler is in play here, the shortfall rests on the estimator's inherited assumption rather than on either.

\subsection{An added measurement corrects the directions it resolves, and only those}\label{ss:augment-dep}
This last reading tests the repair of \cref{c:augment} well beyond the corollary's hypotheses. The operator is the nonlinear groundwater map rather than a linear one, the added measurement is a raw physical reading rather than an instrument built on the kernel, the archive is the single-best one of \cref{p:map} rather than the posterior-sample loop the corollary curates with, and what must move is a trained flow's report rather than a population law's.
\begin{figure}[tb]
\centering
\includegraphics[width=\linewidth]{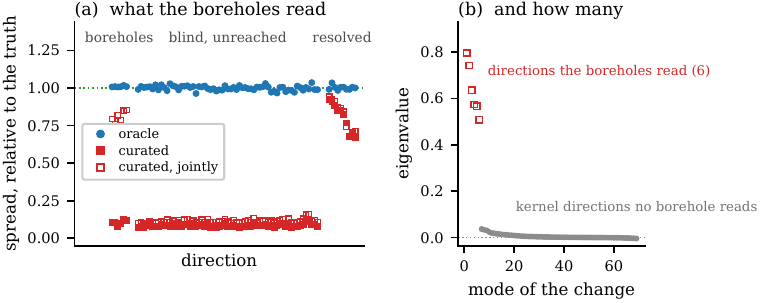}
\caption{Joint curation against an added measurement, on the groundwater operator, with normalizing-flow priors. (a)~each direction's reported spread, one marker per direction, against the \textcolor{healcol}{truth}'s own spread (dotted), for the \textcolor{emcol}{oracle} and for the \textcolor{madcol}{curated} prior under the survey alone (filled) and jointly with the boreholes (open). (b)~the same without reference to a basis, i.e., the modes of the change in what the prior reports, restricted to the operator's exact kernel~(\cref{c:augment}).}
\label{fig:darcy-augment}
\end{figure}

The added measurement is a handful of boreholes, each reading the log-permeability directly at one point of the domain, sited by the operator alone to resolve directions the sensors leave blind. Since a borehole reads the field directly, its rows are exact and, unlike the survey's, do not turn with the state. Curating jointly means the archive itself is rebuilt under survey and boreholes together, each record keeping its own survey reading so that the added measurement is the only thing that changes; the borehole readings are simulated~(\cref{app:experimental}). Against the survey-only prior and the truth-trained oracle, this gives three flows matched in architecture and budget.

Before any prior is fit, the operator's exact null already divides in two, one part holding the directions the boreholes resolve and the other holding the blind directions they never reach, which serve as the experiment's control. \Cref{fig:darcy-augment}a shows what becomes of each, since the directions the boreholes read, collapsed when the archive was curated against the survey alone, come back toward the truth's spread, while the blind directions stay collapsed and the resolved subspace, which the boreholes also read, moves only by the small amount the joint estimate predicts. A prior that had simply widened everywhere would have moved all three together, so the fact that it did not is what makes the comparison a fair one, and the oracle, trained on truths rather than on any archive, sits at the truth on all three and shows what the protocol alone can produce.

The same confinement can be read without choosing a basis, since the change in what the prior reports, restricted to the exact kernel, has as many appreciable modes as the added measurement has dimensions and then falls away~(\cref{fig:darcy-augment}b), along the directions the boreholes resolve. The archive lands where the joint estimate's linear--Gaussian algebra predicts, and its corrected directions stop short of the truth by the shrinkage a single-best estimate keeps~(\cref{p:map}) rather than by a limit of the repair.

The blind directions the boreholes never reach do rise a little, far less than the corrected ones move. Both trained priors sit near the same floor there (at two of the three seeds the jointly curated one somewhat higher, at the third, which fits its archive worst, not), and both sit well above the archive's own collapsed level. This places the movement on the estimator rather than on information arriving, since a flow fit to a collapsed archive reports its own smoothing scale~(\cref{r:map-lg}), and the exactly linear--Gaussian account of this operator puts no movement there at all. The split is the base-state kernel of \cref{ss:darcy}, read as fixed~(\cref{r:nonlinear}); across the archive's own fields the survey sees these directions no better than its own noise level, and the sub-noise tail is set aside here because its blind status moves with the state.

Each reading carries its own controls, collected in \cref{app:experimental}. They are the leakage and misfit gates on the seismic sampler, the cutoff sweep that moves the split the blind read is scored on~(\cref{fig:seismic-cutoff}), and, for the added measurement, a placebo carrying no blind information together with a sweep of its noise, which together separate information arriving from the estimator moving on its own.

\FloatBarrier
\section{What to report on the blind subspace}
\label{sec:reportcard}

What precedes is negative about what can be checked and constructive about what can be named. No statistic of the measurements reveals whether the blind report is right~(\cref{c:nonident}), but the blind subspace itself is not hidden at all, since it is fixed by the forward operator and the noise level alone, so it can be computed before any prior is fit and shipped alongside any reconstruction.

That fixes what to print. The operator names the directions; \cref{p:audit} says how many reference truths it takes to test them; \cref{c:augment} states which instrument would recover them; and \cref{p:certify} establishes what may be reported once a few references are in hand. The rule is therefore a short one, since with references one may certify, and without them the honest course is to name the directions and decline to quantify them. We release both halves with the code~\citep[cf.][]{riis2024cuqipy}.

\subsection{Naming the directions: inputs and outputs}
\noindent Computing the split is classical, in the resolution analysis of \citetx{Backus and Gilbert}{backus1968resolving} and its descendants~\citep{cui2014lis,constantine2014active}; what \cref{c:nonident} adds is the reason to print it beside a learned reconstruction, since it is exactly the set on which no measurement can check what the prior reports. For an operator small enough to factor, the \emph{resolvability statement} takes the dense matrix and splits the model space by the full singular value decomposition, never a thin-factorization artifact, at whatever cut the caller sets. At machine precision that is the exact kernel, and at the acquisition's noise level it is the near-null the experiments of \cref{sec:demos} are scored on. For an operator too large to factor it takes instead a set of directions certified blind by probing, e.g., the illumination ranking of \cref{ss:setup}.

It returns three things. The first is the resolved/blind split itself. The second is, for each functional one intends to report, the share of that functional's energy lying in the blind subspace, and hence a verdict of resolved, mixed, or blind; and (iii)~for a set of prior or posterior samples, the share of the reported variance lying on the blind subspace, i.e., the part of the reported uncertainty that no measurement constrains.

\paragraph{What the statement rests on.} For a Born operator the split is computed from a background $m_0$, which in a real shop is itself the product of an earlier migration, so the statement inherits whatever that background got wrong. We perturb it across the range a velocity analysis would move it, i.e., the gradient, the near-surface velocity, the water layer's depth and a lateral tilt, and rebuild the illumination symbol from scratch each time~(\cref{tab:m0}). The directions the statement names stay in the deepest part of every perturbed operator's own ranking. What does not survive is the ordering inside that part, since which atoms are the very blindest reshuffles freely, the tail being nearly degenerate, so a statement that named a ranking rather than a set would not reproduce. The water layer is the one knob that moves it appreciably, and it is a geometry the acquisition itself records.

\begin{table}[t]
\centering
\caption{The resolvability statement under a background it did not choose. Each row rebuilds the seismic illumination symbol from scratch under a perturbed $m_0$ and reports where the \emph{nominal} blind directions sit in that operator's own ranking, as a fraction of the reflectivity band, so $0$ is the most blind direction the perturbed operator has and $1$ the least. The last column, the overlap of the two lowest-ranked sets, is included to show that it is the wrong quantity, since the symbols agree closely while which atoms occupy the very bottom reshuffles in a near-degenerate tail.}
\label{tab:m0}
\begin{tabular}{lllll}
\toprule
background & median rank & worst rank & symbol corr. & shared of $48$ \\
\midrule
gradient +5\% & $0.035$ & $0.084$ & $0.977$ & $4$ \\
gradient -5\% & $0.051$ & $0.137$ & $0.971$ & $10$ \\
top +3\% & $0.048$ & $0.098$ & $0.967$ & $12$ \\
top -3\% & $0.034$ & $0.156$ & $0.968$ & $12$ \\
water layer x2 & $0.087$ & $0.352$ & $0.897$ & $6$ \\
lateral tilt 4\% & $0.045$ & $0.112$ & $0.973$ & $12$ \\
\bottomrule
\end{tabular}
\end{table}

\paragraph{Cost.} The dense path costs one singular value decomposition; the probed path costs one operator application per probe direction and no factorization, which is what makes it usable on an operator given only as code.

\paragraph{What it certifies, and what it does not.} The statement names which reported intervals are prior-supplied. It does not report whether the prior's spread on those directions is correct, and by \cref{c:nonident} nothing computable from the measurements could. The two paths differ in one respect, and the difference is one-sided. A full factorization enumerates the whole blind subspace, so its accounting is exact, whereas probing certifies only directions below its illumination cut, making its accounting a lower bound that understates how much of a reported uncertainty is inherited, which is the direction of error a diagnostic should have.

Run on the two operators, the statement recovers the splits the experiments of \cref{sec:demos} are scored on, at the noise-level cut those readings use and at the exact kernel where \cref{ss:augment-dep} reads instead, which is the point, since those splits were never a property of the priors being compared, so they can be obtained without a truth-trained control. Applying its third output to the trained priors reports what share of each one's reported uncertainty is prior-supplied, a property of the operator and the prior together that is available in deployment, where the coverage curves of \cref{sec:demos} are not.

What the statement cannot do is decide whether the share it reports is too large, since on the groundwater priors the curated prior reports a far \emph{smaller} blind share than the oracle, having collapsed those directions rather than inflated them, and reading the share alone would not know which case they were in, as \cref{c:honest} establishes. Here the comparison happens to be legible, since the coordinates are whitened and a calibrated prior must place exactly the blind dimension share of its variance there, which the oracle does; that legibility is a property of the parameterization, not something to count on.

\subsection{Certifying them: a band that does not trust the prior}\label{ss:certify}
With a few references one can do more than name the blind directions, and do it without trusting the prior at all. Let $\theta=\bm v^\top\bx$ be blind, and score a candidate value by the curated prior's own standardized blind residual, i.e., the very quantity \cref{p:audit} audits. Given references exchangeable with the deployment truth, the split-conformal band built from that score covers at the nominal level in finite samples, marginally over the draw of references and truth, and the prior enters the band's \emph{width} and never its validity.

Split conformal prediction, for a reader meeting it here first, is the following device~\citep{vovk2005}. A \emph{score} ranks how surprising a candidate value would be; computing it for each of $k$ held-out references and for the candidate gives $k+1$ numbers that, when references and truth are exchangeable (ties having probability zero), are equally likely to arrive in any order. As a result, the band collecting every candidate whose score sits below the appropriate order statistic covers the truth at the reported frequency, on average over exchangeable draws. This rank argument needs no distributional form, no asymptotics, and no assumption on how the score was built.

The score may therefore come from the curated prior itself. That a paper premised on scarce truths now asks for references is the exchange \cref{ss:price} sets out, namely a handful (e.g., the occasional fully sampled acquisition), not a training set, whose size \cref{p:audit} sets and \cref{r:exchange} shows is the same however faint the illumination. The certificate escapes \cref{c:honest} by changing the probability model, not by beating the theorem, since the deployment truth is made exchangeable with the references, and coverage is guaranteed over their joint draw rather than at a fixed truth.

\begin{proposition}[A certified blind interval, whatever the prior reports \emph{(exchangeable references)}]\label{p:certify}
Let $\bx_1,\dots,\bx_k$ be references exchangeable with the deployment truth and held out of the archive the curated prior was trained on, with scores almost surely tie-free (any density for the references supplies this), and score a candidate value by the prior's own standardized blind residual $\eta(\theta)=|\theta-m_\reg(\bx_R)|/s_\reg$. The band $\{\theta: \eta(\theta)\le \eta_{(\lceil (k+1)(1-\alpha)\rceil)}\}$ is an interval, and its coverage of $\theta_\star$ is at least $1-\alpha$ and at most $1-\alpha+1/(k+1)$, \emph{marginally over the joint exchangeable draw of the references and the deployment truth}, which is a prediction-interval guarantee, holding for every prior and every truth distribution that makes the draw exchangeable, with no further distributional assumption and no appeal to the operator---not a fixed-truth confidence statement, which for a finite band \cref{c:honest} forbids. Where the truth's resolved and blind coordinates are independent (a Gaussian truth with $\Sigma_{\star,RB}=\mathbf 0$ in particular), two further features are particular to a blind functional. Conformal validity, ordinarily marginal rather than conditional on the observation, holds conditionally too, since $\theta$ and $\by$ are then independent and no acquisition is more informative about $\theta$ than any other; and for the same reason no score adapted to the recorded data improves on this one. The band is finite once $k\ge\lceil 1/\alpha\rceil-1$, and below that it is infinite rather than wrong.
\end{proposition}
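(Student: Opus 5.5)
The proof follows the standard split-conformal rank argument, applied to the score $\eta$, with the prior entering only through $m_\reg$ and $s_\reg$. Because the curated prior is trained on an archive from which the references are held out, the score map $\bx\mapsto\eta_{\bx}:=|\bm v^\top\bx-m_\reg(\bx_R)|/s_\reg$ is a fixed measurable function conditional on the archive. Condition on the archive throughout. Exchangeability of $(\bx_1,\dots,\bx_k,\bx_\star)$ is preserved under a fixed coordinatewise map, so the scores $(\eta_1,\dots,\eta_k,\eta_\star)$ are exchangeable. Since they are almost surely tie-free, the rank of $\eta_\star$ among the $k+1$ scores is uniform on $\{1,\dots,k+1\}$. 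With $j=\lceil(k+1)(1-\alpha)\rceil\le k$, the event $\theta_\star$ lies in the band is the event $\eta_\star\le\eta_{(j)}$, where $\eta_{(j)}$ is the order statistic of the references alone. This event holds exactly when the rank of $\eta_\star$ is at most $j$, so it has probability $j/(k+1)$. The bounds $1-\alpha\le j/(k+1)<1-\alpha+1/(k+1)$ follow from the ceiling, and integrating over the archive makes the guarantee marginal.

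Two side claims are elementary. First, the band is an interval. For fixed $\bx_R$ the set $\{\theta:|\theta-m_\reg(\bx_R)|\le s_\reg\,\eta_{(j)}\}$ is the symmetric interval $m_\reg(\bx_R)\pm s_\reg\eta_{(j)}$. Second, the band is finite exactly when $j\le k$, that is, when $\lceil(k+1)(1-\alpha)\rceil\le k$, which rearranges to $k\ge\lceil1/\alpha\rceil-1$. Below this threshold one sets $\eta_{(j)}=+\infty$ by convention, so the band is all of $\R$ and still covers. Neither argument uses the operator, the truth law beyond exchangeability, or any correctness of $(m_\reg,s_\reg)$. This is the sense in which validity does not depend on the prior. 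The contrast with \cref{c:honest} should be stated explicitly: that corollary fixes the truth law over $\mathcal P$ and reads only the survey and the archive, whereas here the probability is taken jointly over references drawn from the same unknown law, so no contradiction arises.

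For the conditional claim, assume the resolved and blind coordinates of the truth are independent. For Gaussians this is $\Sigma_{\star,RB}=\mathbf 0$, via the block covariance in the split coordinates. Then $\by$, which by the factoring likelihood depends on $\bx_\star$ only through $\bF(\bx_\star)$ and hence only through $\bx_R$ (using $\bA\bN=\mathbf 0$ in the linear case), is independent of $\theta_\star=\bm v^\top\bx_\star$. The step I expect to need the most care is this one, because the score also depends on $\bx_{R,\star}$ through $m_\reg$, and $\bx_R$ is correlated with $\by$. So the plan is to condition on $\by$ and show that the conditional law of $(\eta_1,\dots,\eta_k,\eta_\star)$ remains exchangeable. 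For the conditional validity claim I would first try the case where $m_\reg(\bx_R)$ is constant along the resolved readout, as the freeze in \cref{p:blind} makes it for a curated prior whose regularizer has independent blocks. In that case $\eta_\star$ is a function of $\theta_\star$ alone, so it is independent of $\by$, and the rank argument goes through conditionally on $\by$ because the references are independent of the deployment survey. For a general $m_\reg$ I would record conditional validity as holding given $\bx_R$ and defer the general case.

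The ``no better score'' assertion is flagged in the paper as proved outside the formal development. I would argue it informally: because $\theta\perp\by$, the conditional law of $\theta$ given $\by$ equals its marginal law, so any score built from $\by$ carries no information about $\theta$ beyond its marginal, and an oracle-optimal band cannot be shortened by adapting to the data.
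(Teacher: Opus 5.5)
Your argument for the main claims is the paper's, written out more explicitly. You condition on the archive, which holding the references out licenses. Exchangeability and tie-freeness then make the rank of $\eta_\star$ uniform, so coverage is exactly $j/(k+1)$ with $j=\lceil (k+1)(1-\alpha)\rceil$. The band is the symmetric interval $m_\reg(\bx_R)\pm s_\reg\,\eta_{(j)}$, the concrete form of the paper's quasi-convexity remark, and $j\le k$ rearranges to $k\ge\lceil 1/\alpha\rceil-1$. That part is complete, and your strict upper bound is slightly sharper than the stated one.

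\emph{The conditional-validity clause.} Here you stop short, but the obstacle you name is real, and the paper's proof does not get past it. The paper shows only that fiber-constancy plus independence of the truth's resolved and blind coordinates gives $\theta\perp\by$. Coverage given $\by$ needs more: the conditional law of $\eta_\star$ given $\by$ must match the references' score law, and $\eta_\star$ sees $\by$ through $m_\reg(\bx_{R,\star})$. Your constant-$m_\reg$ case is exactly where the argument works. Outside it the clause is false, so there is nothing to defer to.

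A counterexample: take $\bA=[1\ \ 0]$, scalar $y=x_1+\varepsilon$, a truth $\mathcal N(\bm 0,\bm I_2)$, and a regularizer $\mathcal N\big(\bm 0,\left[\begin{smallmatrix}1&c\\ c&1\end{smallmatrix}\right]\big)$ with $0<|c|<1$. By the freeze, $m_\reg(x_1)=c\,x_1$. The reference residuals $\theta_i-c\,x_{1,i}$ are $\mathcal N(0,1+c^2)$. Given $y$, the deployment residual is Gaussian with mean $-c\,\E[x_1\mid y]$ and variance $1+c^2\operatorname{Var}(x_1\mid y)<1+c^2$. Once the band is finite, coverage given $y$ therefore exceeds $j/(k+1)$ at $y=0$ and tends to zero as $|y|\to\infty$; it equals $j/(k+1)$ only on average. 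The right completion is to add the hypothesis that $m_\reg$ does not depend on the readout, which in the Gaussian case is a regularizer aligned with the split, $\Sigma_{\reg,RB}=\mathbf 0$.

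Drop the fallback ``conditional validity given $\bx_R$'' as well. Conditioning on $\bx_{R,\star}$ fixes the center $m_\reg(\bx_{R,\star})$, while the reference scores average over readouts, so coverage varies with $\bx_R$ for the same reason. What holds given $\bx_R$ is only the conditional independence $\theta\perp\by\mid\bx_R$ that the paper cites.

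The same example shows the no-better-score aside needs that hypothesis too. There a constant-center score is also valid, with population half-width $z$ against $z\sqrt{1+c^2}$. So your informal reading, that adapting to $\by$ buys nothing, is the defensible one.
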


\noindent The band's width, divided by the width the prior reports, is what can actually be reported in practice, namely how much wider the truth is on this direction than the prior claims, measured rather than assumed. Its cost is the references themselves; the computation is $k$ scores and a sort, with no further operator applications and no refit of the prior. At a small reference count the guarantee should be read as stated, marginally, since the coverage realized under one draw of references spreads around the nominal level (a Beta law over calibration sets, tightening as references accrue), so the certificate promises its level on average over reference draws, not for each fixed reference set.

\begin{figure}[t]
\centering
\includegraphics[width=\linewidth]{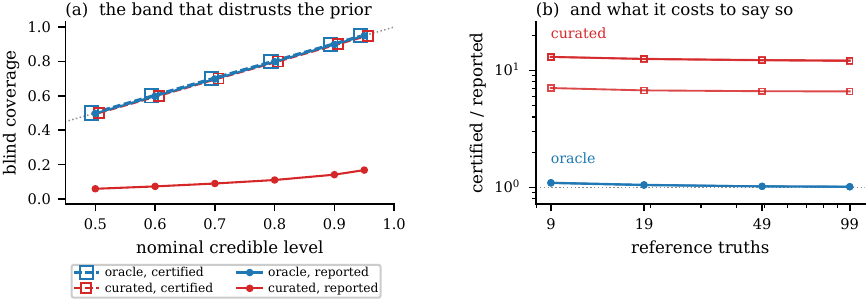}
\caption{The certified band of \cref{p:certify} on the groundwater operator, over that operator's exact null. (a)~blind coverage against nominal, for the band each prior reports (filled) and for the certified band built from held-out reference truths (open). (b)~the certified half-width divided by the reported one, i.e.\ the axis label's ratio, against the number of references, one line per training seed, their ratio being the reportable quantity~(\cref{p:certify}).}
\label{fig:darcy-certify}
\end{figure}

Run on the groundwater operator, over the exact kernel and against references held out of every archive, the band behaves as \cref{p:certify} states~(\cref{fig:darcy-certify}). Its coverage tracks the nominal level for the curated and the truth-trained prior alike, while the band a prior reports tracks it only for the latter, so the certificate repairs the reported interval without being told which prior it was handed. As expected, the cost appears in the width, where the oracle needs almost no correction and the curated prior a large one, and the certificate turns a shortfall no measurement could expose into a number that can be read off. The ratio is stable in the reference count once past the handful \cref{p:audit} calls for, and the training seed that fits its archive worst needs the least correction, which is the ordering the mechanism predicts.

Exchangeability of the references with the deployment population is an assumption, and by \cref{c:nonident} no measurement can check it. That is the exchange these results offer, in which what the prior reports, unverifiable and distributed silently across every blind direction, is exchanged for one named assumption about a small reference set, which one can argue about, document, and revisit.

\section{Discussion and conclusions}
\label{sec:discussion}

Having named the blind directions from the operator alone and established what may be reported on them once a few reference truths are in hand, we can say what changes when a learned prior is deployed. The question that admits an answer is not whether the prior is a good one, since no measurement settles that on the directions at issue, but which of the numbers it prints may be read as measurements and which are supplied. That split is fixed by the forward operator before any prior is trained, so drawing it costs nothing beyond a model already in hand, and everything that follows is a matter of what to do about the half that is supplied.

Several natural remedies stay inside the survey, and for that reason reach less than they appear to. Retraining on fresh observations from the \emph{same} operator, as in the corrections of \citetx{Barco et al.}{barco2025correcting} and \citetx{Rozet et al.}{rozet2024learning}, whose population idealization is the loop of \cref{p:recover}, heals the resolved subspace and leaves the blind conditional exactly where it was. Archiving posterior samples rather than a single-best reconstruction spares the blind report the zero-width collapse of \cref{p:map}, though what remains is the regularizer's own spread, overconfident wherever that spread is tight. Pooling several methods reports their disagreement rather than the truth, since methods that share an operator share its blind subspace.

What does reach those directions is a measurement of a different kind. A well log reads the subsurface directly where the survey only illuminates it, so its rows do not pass through the operator at all, and \Cref{c:augment} turns that into a design statement, since curating jointly against an added measurement or a second survey de-freezes exactly the directions the pair resolves, and no others. A recent dataset pairing seismic with real well logs~\citep{bhar2026openseisml} makes such a measurement available, so the practical question becomes which directions a second instrument should be chosen to resolve, which is a question the operator can be asked before the instrument is bought.

One validation practice these results reach directly is scoring a fast posterior approximation by its agreement with a Markov chain Monte Carlo benchmark under the same prior, the standard check of variational seismic and ultrasound inversion~\citep{zhang2020variational,li2026stein}. Agreement of that kind cannot surface inherited overconfidence, since a prior wrong on the blind subspace makes the benchmark wrong there in the same way~(\cref{p:blind}), and the two then agree about an assumption neither has checked~(\cref{c:nonident}). The certified band of \cref{p:certify} is the complement of such a check, because it never trusts the prior, and its verdict is informative exactly where agreement is not.

Scoring coverage at all requires a known truth, which bounds the \emph{realism} of the image class rather than the availability of a reference. The truths in our experiments are constructed for that reason, since a migrated section is band-limited, so the broadband reflectivities are built above the archive's band, and any field modality whose only reference is itself a legacy reconstruction faces the same circularity. The archives are constructed for the same reason, and a field pipeline adds muting, gain, and interpretation that no variational estimator models.

The conclusion survives that idealization, since the proof of \cref{p:map} runs for every measurable reconstruction rule, so any deterministic pipeline, however far from a MAP, empties the blind interval the same way. What the variational model adds is which assumption fills it, i.e., the penalty's conditional minimizer. Because the blind subspace is fixed by the operator rather than by the distribution the truth is drawn from, the image class sets the \emph{size and sign} of the shortfall and never where it strikes, which is why a report computed from the operator remains valid when the image class is argued about.

The directions an inverse problem leaves blind are precisely the directions where a learned prior's reported uncertainty cannot be checked against the measurements, and a prior trained on legacy reconstructions reports the old regularizer's assumption there with the credibility of data. Naming those directions requires only the forward model, testing what is reported on them requires a handful of reference truths, and certifying an interval over them requires nothing more than that those references be exchangeable with the deployment truth. Doing all three trades an assumption distributed silently across every blind direction for one that is written down, small, and open to argument.

\FloatBarrier

\section*{Acknowledgments}
{Claude (Anthropic) assisted with the editing, the visualizations, the implementation, and the generation of the Lean code from the theoretical statements. The authors assume full responsibility for all content of this manuscript.} \section*{Funding}
{AS acknowledges support from the Institute for Artificial Intelligence, University of Central Florida.} \section*{Code and data availability}
{The code accompanying this paper---the scripts that regenerate every figure and record, the resolvability statement and certified band of \cref{sec:reportcard}, and the Lean~4 development, is available at \url{https://github.com/luqigroup/resolvability}; the archived datasets, operators, and checkpoints it consumes are fetched automatically on first use.}

\bibliographystyle{unsrtnat}
\bibliography{refs}
\FloatBarrier
\appendix
\section{Proofs and supporting results}\label{app:proofs}

\subsection{Proofs of the results in \cref{sec:theory}}

\begin{proof}[Proof of \Cref{c:honest}]
The archive is a function of the record and the pipeline's own randomness, so $\hat I$ is measurable with respect to information whose law is constant on $\mathcal P$; the coverage bound is the density bound integrated over that law, and the supremum of the conditional spread over $\mathcal P$ is infinite by the witness.
\end{proof}

\begin{proof}[Proof of \Cref{c:invariant}]
Both halves are \cref{p:recover}, whose part~(i) gives the invariance for an arbitrary prior and averaging law and whose part~(ii) gives the resolved convergence; the limit therefore carries the truth's resolved marginal and, by~(i), the regularizer's blind conditional. The final equivalence is that proposition's closing statement.
\end{proof}

\begin{proof}[Proof of \Cref{p:audit}]
The residuals are standardized by quantities the prior supplies, so under the null they are standard normal without anything estimated, and $T$ is a sum of $kb$ independent squares. Inflating the conditional covariance to $r^2\Sigma_{B\mid R}^{\reg}$ scales each residual by $r$, giving $T\sim r^2\chi^2_{kb}$ and hence the stated power; the rejection probability increases in the inflation, so the test is level $\alpha$ over the composite null. For the sample size, $\log T$ is asymptotically normal with mean separation $2\log r$ and variance $2/(kb)$, and solving gives \eqref{eq:auditk}.
\end{proof}

\begin{proof}[Proof of \Cref{p:certify}]
Exchangeability of the references with the deployment truth makes the scores exchangeable, so the test score's rank among the $k+1$ is uniform (exactly uniform under tie-free scores, which bounds the coverage above as well as below), and the empirical-quantile band covers at the nominal level~\citep{vovk2005}; the score's dependence on the prior is immaterial, since holding the references out of the archive lets the argument condition on it. The band is an interval because $\eta(\theta)$ is quasi-convex in $\theta$. The two further claims combine the fiber-constancy of the likelihood used in \cref{p:blind} with independence of the resolved and blind coordinates, which upgrades conditional independence given $\bx_R$ to independence; without it $\by$ still informs $\theta$ through the resolved coordinates. \qed
\end{proof}

\begin{proof}[Proof of \Cref{p:law}]
By Bayes' rule $\reg(\bx\mid\by)=p(\by\mid\bx)\reg(\bx)/\reg(\by)$ with $\reg(\by)=\int p(\by\mid\bx)\reg(\bx)\,d\bx$, so
\begin{equation}
\loopop[\reg](\bx)=\int \reg(\bx\mid\by)\,\truth(\by)\,d\by
=\reg(\bx)\int \frac{p(\by\mid\bx)}{\reg(\by)}\,\truth(\by)\,d\by
=\reg(\bx)\,\E_{\by\sim\truth}\!\Big[\tfrac{p(\by\mid\bx)}{\reg(\by)}\Big].
\label{eq:emstep-app}
\end{equation}
Writing the marginal log-likelihood $\ell(\pi)=\E_{\by\sim\truth}[\log\pi(\by)]$, Jensen's inequality applied to the E/M decomposition gives the standard EM ascent $\ell(\loopop[\reg])\ge\ell(\reg)$, equivalently
\[
\KL\!\big(\truth(\by)\,\big\|\,\loopop[\reg](\by)\big)\le\KL\!\big(\truth(\by)\,\big\|\,\reg(\by)\big),
\]
the marginal-likelihood ascent.
\end{proof}

\begin{proof}[Proof of \Cref{p:blind}]
Write $\bx=(\bx_R,\bx_B)$ in the $\mathrm{row}(\bA)\oplus\ker\bA$ split. Since the likelihood reaches $\bx$ only through $\bA\bx=\bA\bx_R$ (the Gaussian $\mathcal N(\bA\bx,\Gamma)$ of \eqref{eq:likelihood} in particular), the reweight in \eqref{eq:emstep-app},
\begin{equation}
g(\bx):=\E_{\by\sim\truth}\!\big[p(\by\mid\bx)/\reg(\by)\big]=g(\bx_R),
\end{equation}
is fiber-constant. Hence the curated blind-fiber conditional is
\begin{equation}
q_\reg(\bx_B\mid\bx_R)=\frac{q_\reg(\bx_R,\bx_B)}{\int q_\reg(\bx_R,\bx_B')\,d\bx_B'}
=\frac{\reg(\bx_R,\bx_B)\,g(\bx_R)}{g(\bx_R)\int \reg(\bx_R,\bx_B')\,d\bx_B'}
=\reg(\bx_B\mid\bx_R).
\label{eq:freeze-cond}
\end{equation}
The same computation covers any blind subspace $\blind$: $\bF(\bx)=\bF(\bx_R)$ makes the reweight a function of $\bx_R$ alone, so \eqref{eq:freeze-cond} applies verbatim. In the Gaussian case the conditional $\bx_B\mid\bx_R$ has precision $\bN^\top\Sigma^{-1}\bN$, the blind block of the precision matrix, so \eqref{eq:freeze-cond} for $q_\reg$ and $\reg$ reads block by block as \eqref{eq:freeze}.
\end{proof}

\noindent\emph{The computation in function space.} The display \eqref{eq:freeze-cond} is the finite-dimensional reading, written with Lebesgue densities, which a separable Hilbert carrier does not supply. There the same conclusion holds by disintegration: the carrier is Polish and the data finite-dimensional, so the regularizer disintegrates into regular conditional probabilities along the blind fibers, and the curated measure is the regularizer reweighted by the density $g$, measurable with respect to the resolved $\sigma$-algebra by the likelihood's fiber-constancy. Since such a reweighting changes the resolved marginal and leaves the fiber conditionals unchanged almost everywhere, \cref{p:blind} holds at the full carrier. This form is machine-checked: the development's Hilbert-space module obtains the disintegration from mathlib's conditional-kernel construction and proves the curated fiber kernel equal to the regularizer's almost everywhere.

\begin{proof}[Proof of \Cref{p:cover}]
Fix $\bm v\in\ker\bA$, $\|\bm v\|=1$, $\theta=\bm v^\top\bx$, and partition $\Sigma_\reg$ in the $\mathrm{row}(\bA)\oplus\ker\bA$ split,
\begin{equation}
\Sigma_\reg=\begin{bmatrix}\Sigma_{RR}&\Sigma_{RB}\\[2pt]\Sigma_{BR}&\Sigma_{BB}\end{bmatrix}.
\end{equation}
The Gaussian conditioning formula gives the regularizer's blind-fiber conditional at readout $\bx_R$ as $\mathcal N\big(\bm\mu_{B\mid R}(\bx_R),\Sigma_{B\mid R}\big)$ with
\begin{equation}
\bm\mu_{B\mid R}(\bx_R)=\mu_B+\Sigma_{BR}\Sigma_{RR}^{-1}(\bx_R-\mu_R),\qquad
\Sigma_{B\mid R}=\Sigma_{BB}-\Sigma_{BR}\Sigma_{RR}^{-1}\Sigma_{RB},
\label{eq:schur}
\end{equation}
the Schur complement; projecting on $\bm v$ gives $m_\reg=\bm v^\top\bm\mu_{B\mid R}$, $s_\reg^2=\bm v^\top\Sigma_{B\mid R}\bm v$, and identically $m_\star,s_\star$ for the truth. By the freeze \eqref{eq:freeze-cond} the curated conditional of $\theta$ is $\mathcal N(m_\reg,s_\reg^2)$; scored against the true $\theta\sim\mathcal N(m_\star,s_\star^2)$, the reported interval $m_\reg\pm z s_\reg$ covers with probability
\begin{equation}
C=\Pr_{\mathcal N(m_\star,s_\star^2)}\!\big(|\theta-m_\reg|\le z s_\reg\big)
=\Phi\big(\delta+z s_\reg/s_\star\big)-\Phi\big(\delta-z s_\reg/s_\star\big),\quad \delta=\tfrac{m_\reg-m_\star}{s_\star},
\end{equation}
which is \eqref{eq:coverage-gen}; $\delta=0$ gives \eqref{eq:coverage}, and $C<2\Phi(z)-1=1-\alpha$ exactly when $s_\reg<s_\star$.

Differentiating in the mean error,
\begin{equation}
\frac{\partial C}{\partial\delta}=\phi(\delta+z s_\reg/s_\star)-\phi(\delta-z s_\reg/s_\star)<0\quad(\delta>0),
\end{equation}
a frozen-mean error only lowers coverage. The derivation used the blind-fiber \emph{conditional} spreads \eqref{eq:schur} and no block-diagonal assumption, i.e., the alignment-free \cref{p:cover-na}; under alignment $\Sigma_{RB}=0$ the conditional equals the marginal, and \eqref{eq:coverage} follows.
\end{proof}

\begin{proof}[Proof of \Cref{c:nonident}]
\emph{General case.} If $p(\by\mid\bx)=p(\by\mid\bF(\bx))$ and the operator has blind subspace $\blind$, then
\begin{equation}
\bF(\bx)=\bF(\bP_{\blind^\perp}\bx),
\end{equation}
so the data marginal is a functional of the resolved marginal alone: two truth distributions sharing it but carrying different blind-fiber conditionals induce the identical measurement law, the \emph{entire} blind conditional is non-identifiable. For a linear operator $\bA=\bA\bP_R$, so $\truth(\by)$ is a functional of the resolved marginal $\bP_{R\#}\truth$ alone.

\emph{Linear--Gaussian witness.} Here the data marginal is
\begin{equation}
\truth(\by)=\int p(\by\mid\bx)\,\truth(\bx)\,d\bx=\mathcal N\big(\bA\mu_\star,\ \bA\Sigma_\star\bA^\top+\Gamma\big)=\mathcal N(\bA\mu_\star,\bS_y),
\end{equation}
a function of $\bA\mu_\star$ and $\bA\Sigma_\star\bA^\top$ only. Since $\bA\bN=\mathbf 0$ and $\bA\bP_R=\bA$, every added block in \eqref{eq:witness} is annihilated,
\begin{equation}
\bA(\bN\bm W\bN^\top)\bA^\top=\mathbf 0,\qquad
\bA(\bN\bm C\bP_R+\bP_R\bm C^\top\bN^\top)\bA^\top=\mathbf 0,\qquad
\bA\bN\bm w=\mathbf 0,
\end{equation}
so $\bA\Sigma_\star\bA^\top$ and $\bA\mu_\star$, hence $\truth(\by)$, are unchanged, while $s_\star$, $\delta$, and the truth's resolved--blind cross-covariance are all free to move; none of them is therefore a function of the measurement law. For a single-best archive, $\mathrm{MAP}_\reg$ is a fixed function of $\by$ and the two truths share the measurement law, so the archives are identically distributed, and pathwise identical for a fixed set of measurements.
\end{proof}

\begin{proof}[Proof of \Cref{p:map}]
\emph{Collapse, general.} The data term $D(\by,\bF(\bx))$ of \eqref{eq:varest} reaches $\bx$ only through $\bF(\bx)$, hence is constant on every fiber of the forward operator; this covers every noise kernel $p(\by\mid\bx)=k(\by,\bF(\bx))$ through $D=-\log k$, in particular any additive noise $\nu(\by-\bF(\bx))$, and every penalty $R$, proper or improper. Comparing candidates within one fiber cancels the data term: every minimizer minimizes the penalty over its own fiber, whatever $\bF$.

For a linear operator the fibers are the leaves $\bx+\ker\bA$; decompose $\bx=(\bx_R,\bx_B)$, $\bx_R=\bP_R\bx$, $\bx_B=\bN\bN^\top\bx$, and profile the objective over the fiber,
\begin{equation}
\hat\bx(\by)\in\arg\min_{\bx_R,\bx_B}\ D\big(\by,\bF(\bx_R)\big)+R(\bx_R,\bx_B),\qquad
\hat\bx_B(\bx_R)\in\arg\min_{\bx_B}R(\bx_R,\bx_B).
\end{equation} Where the $\arg\min$ is a set (the fiber sections of a total-variation penalty need not be strictly convex) the profiled selection makes $\hat\bx_B$ a single-valued function of the resolved coordinates (the convention the Dirac fiber conditional below uses, since a measurement-dependent tie-break could spread the conditional over the fiber's $\arg\min$ set), while the zero-coverage conclusion holds for every measurable selection, and a measurable selection exists by the Kuratowski--Ryll-Nardzewski theorem~\citep{kuratowski1965selectors}.

For the Bayesian MAP, $R=-\log\reg$ and $\reg(\bx_B\mid\bx_R)\propto\reg(\bx_R,\bx_B)$ at fixed $\bx_R$, so the conditional minimizer is the regularizer's conditional mode $\hat\bx_B(\bx_R)\in\arg\max_{\bx_B}\reg(\bx_B\mid\bx_R)$, and the outer minimizer is a deterministic function of $\by$. Hence, conditional on the resolved readout, the curated blind law is the Dirac $\delta_{\hat\bx_B(\bx_R)}$: the conditional credible interval on any $\theta=\bm v^\top\bx$, $\bm v\in\ker\bA$, has zero width, and for a truth whose fiber-conditional law of $\theta$ is nonatomic (an absolutely continuous fiber conditional suffices, for every $\bm v\neq\bm 0$) its coverage $C^{\mathrm{sb}}=\Pr(\theta_\star=\bm v^\top\hat\bx_B)=0$, with $C^{\mathrm{sb}}\le C^{\mathrm{ps}}$ whenever the penalty is a proper prior's, so that a posterior-sample archive exists; an improper penalty such as a total-variation seminorm has no normalizable prior and no posterior-sample counterpart, and the collapse stands on its own. No Gaussian or alignment hypothesis is used.

\emph{Reach.} A subspace invariance $\bF(\bx+\bm n)=\bF(\bx)$, $\bm n\in\blind$, is the case $\bF=\bF\circ\bP_{\blind^\perp}$: the data term is a function of the resolved coordinate $\bx_R=\bP_{\blind^\perp}\bx$ alone, the profiling above applies verbatim, and \eqref{eq:mapcollapse} holds with $\ker\bA$ replaced by $\blind$.

The likelihood is a function of $\bx_R$, so $\by$ and $\bx_B$ are conditionally independent given $\bx_R$, while the archive's report is a function of $\by$ alone. When the truth's conditional law of $\theta$ given the resolved coordinates is nonatomic, the report is conditionally a single point of probability zero, and the tower property gives
\begin{equation*}
C^{\mathrm{sb}}=\E\big[\Pr\big(\theta_\star=c(\by)\mid\bx_{R\star},\by\big)\big]=0
\end{equation*}
for every measurable reconstruction rule $c$; the variational structure decides only where the point sits. The same cancellation runs along the level sets of any operator (e.g., the global phase of phase retrieval), and the $\arg\min$ form is not load-bearing: an archive of posterior means, the denoised outputs real pipelines also store, is a deterministic function of the measurement, so the display above already gives it zero blind coverage; for a linear operator under Gaussian noise, with a prior of finite first moment holding no resolved direction a.e.\ constant, it is again a deterministic function of its own resolved component, with the same zero-width fiber report.
\end{proof}

\begin{proof}[Proof of \Cref{c:augment}]
\emph{De-freezing, general.} Given $\bx$, the survey and the added measurement are independent, reaching $\bx$ only through $\bA\bx$ and $\bm B_1^\top\bx$, so the joint likelihood factors, $p(\by,\by'\mid\bx)=p(\by\mid\bA\bx)\,p'(\by'\mid\bm B_1^\top\bx)$, and depends on $\bx$ only through the stacked image $\tilde\bA\bx$, $\tilde\bA=[\bA;\bm B_1^\top]$.
Joint curation, i.e., averaging the $\reg$-posterior over the true joint law of $(\by,\by')$, is therefore the map \eqref{eq:curated} for $\tilde\bA$, so the cancellation \eqref{eq:freeze-cond} applies verbatim along its fibers and the curated conditional is frozen at the regularizer's exactly on $\ker\tilde\bA=\ker\bA\cap\ker\bm B_1^\top=\ker\bA\cap\blind_1^\perp$.
Since $\blind_1\subseteq\ker\bA$, $\ker\bA=\blind_1\oplus(\ker\bA\cap\blind_1^\perp)$,
so the frozen set shrinks by exactly $\blind_1$; iterating the joint loop keeps every direction of $\ker\bA\cap\blind_1^\perp$ frozen for all rounds (\cref{p:blind}, applied at each round). Nothing here is Gaussian: any factoring noise on either measurement, any prior.

\emph{Recovery, linear--Gaussian.} With $\bm\varepsilon'\sim\mathcal N(0,\Gamma')$, $\Gamma'\succ0$, the stacked model is linear--Gaussian with operator $\tilde\bA$ and block-diagonal noise covariance. We do not argue that the stack inherits full row rank; it need not, and neither need $\bA$. Instead the sufficiency reduction below replaces $(\tilde\bA,\tilde\Gamma)$ by a full-row-rank pair $(\bA_r,\Gamma_r)$ spanning the same row space, leaving the loop unchanged, and \cref{p:recover}(ii) applies to that pair under $\Sigma_\star\succ0$ and $\bA_r\Sigma_\reg\bA_r^\top\succ0$: the iterated loop recovers the truth on $\mathrm{row}(\tilde\bA)=\mathrm{row}(\bA)\oplus\blind_1$, while the smaller kernel stays frozen.

The nondegeneracy asked of the regularizer is on the \emph{enlarged} resolved space, and does not follow from the corresponding condition for the survey alone. Along the recursion of \cref{p:recover}(ii) the data-space covariance satisfies $\mathrm{range}(\bm D_{t+1})\subseteq\mathrm{range}(\bm D_t)$, so the range can never grow: a direction that the added measurement newly resolves, but on which the regularizer has no spread to begin with, stays at zero variance for every round. Such a direction leaves the frozen set and is then never corrected: de-freezing without recovery. A full-support regularizer supplies the condition.

\emph{Several surveys.} Joint curation against conditionally independent surveys with likelihoods $p_j(\by_j\mid\bA_j\bx)$ factors the same way through the stack $[\bA_1;\dots;\bA_J]$, so the freeze holds exactly on the common kernel $\bigcap_j\ker\bA_j$. A redundant Gaussian stack loses full row rank, but the sufficiency reduction leaves the loop unchanged. Write the stacked data and its full-row-rank factorization as
\[
\tilde\by=\tilde\bA\bx+\tilde{\bm\varepsilon},\quad\tilde{\bm\varepsilon}\sim\mathcal N(0,\tilde\Gamma),\qquad\tilde\bA=\bm L\bA_r,
\]
with $\bA_r$ of full row rank spanning the common row space and $\bm L$ of full column rank, and set
\[
\bm t=\Gamma_r\bm L^\top\tilde\Gamma^{-1}\tilde\by,\qquad\Gamma_r=(\bm L^\top\tilde\Gamma^{-1}\bm L)^{-1},\qquad\bm t\mid\bx\sim\mathcal N(\bA_r\bx,\Gamma_r).
\]
The $\bx$-dependent factor of the Gaussian likelihood,
\[
\exp\big(\bx^\top\bA_r^\top\Gamma_r^{-1}\bm t-\tfrac12\,\bx^\top\bA_r^\top\Gamma_r^{-1}\bA_r\bx\big),
\]
reaches $\tilde\by$ through $\bm t$ alone, and the $\bx$-free factor cancels between $p(\tilde\by\mid\bx)$ and $\reg(\tilde\by)$ in \eqref{eq:emstep-app}. Hence the loop for $(\tilde\bA,\tilde\Gamma)$ is the loop for the reduced full-row-rank pair $(\bA_r,\Gamma_r)$, and \cref{p:recover} applies to the latter.

\emph{Why the fusion must be joint.} Pooling an archive curated against $\bA_1$ with one curated against $\bA_2$ gives the readout-tilted mixture
\[
\frac{1}{K}\sum_k\reg\,g_k,\qquad g_k(\bx)=\E_{\by_k}\!\big[p_k(\by_k\mid\bx)/\reg(\by_k)\big],
\]
with $g_k$ the $k$-th survey's reweight. Each $g_k$ depends on $\bx$ only through $\bA_k\bx$, hence is constant along fibers of the \emph{common} kernel $\bigcap_j\ker\bA_j$, so the cancellation \eqref{eq:freeze-cond} passes to the mixture and its conditional there stays frozen at the regularizer's. Off the common kernel the pooled spread reports the surveys' disagreement, centered on the average of what they report, not the joint correction.
\end{proof}

\subsection{Supporting results}

Throughout, the measurements live in $\R^m$ ($\by\in\R^m$), $\bN$ has orthonormal columns spanning $\ker\bA$ (the blind subspace), and $\bP_R$ projects onto $\mathrm{row}(\bA)$ (the resolved subspace). In the linear--Gaussian specialization $\truth=\mathcal N(\mu_\star,\Sigma_\star)$, $\reg=\mathcal N(\mu_\reg,\Sigma_\reg)$, and $p(\by\mid\bx)=\mathcal N(\bA\bx,\Gamma)$, the posterior under $\reg$ is $\mathcal N(\mu_{\mathrm{post}}(\by),\Sigma_{\mathrm{post}})$ with
\[
\Sigma_{\mathrm{post}}=(\Sigma_\reg^{-1}+\bM)^{-1},\quad \bM=\bA^\top\Gamma^{-1}\bA,\quad
\mu_{\mathrm{post}}(\by)=\Sigma_{\mathrm{post}}(\Sigma_\reg^{-1}\mu_\reg+\bA^\top\Gamma^{-1}\by),
\]
and we write $\bG=\Sigma_{\mathrm{post}}\bA^\top\Gamma^{-1}$ and $\bS_y=\bA\Sigma_\star\bA^\top+\Gamma$ for the true data covariance.

\subsubsection{The nonlinear derivative (\cref{r:nonlinear})}
Along a blind direction of a nonlinear $\bF$ that is not an invariance, only the derivative statement of \cref{r:nonlinear} is claimed: with $p(\by\mid\bx)=\nu(\by-\bF(\bx))$, $\nu\in C^1$, and a dominating envelope allowing differentiation under the expectation, the chain rule gives, for $\bm v\in\ker\bJ(\bx_0)$,
\[
\frac{d}{dt}\,\E_{\by\sim\truth}\!\left[\frac{\nu(\by-\bF(\bx_0+t\bm v))}{\reg(\by)}\right]\bigg|_{t=0}
=-\,\E_{\by\sim\truth}\!\left[\frac{\nabla\nu(\by-\bF(\bx_0))^\top\bJ(\bx_0)\bm v}{\reg(\by)}\right]=0,
\]
since $\bJ(\bx_0)\bm v=\bm 0$. \qed

\begin{remark}[What survives for a nonlinear operator: a derivative statement]\label{r:nonlinear}
For a nonlinear forward map the resolved/blind split is the local one fixed at a model $\bx_0$ by the Jacobian $\bJ(\bx_0)$: a direction $\bm v$ is blind at $\bx_0$ when $\bJ(\bx_0)\bm v=0$. For additive noise with a $C^1$ density and differentiation under the expectation dominated, the curation's reweight in \eqref{eq:emstep} has vanishing first derivative along every blind direction at $\bx_0$, a derivative at the point, not a freeze on a neighborhood; away from $\bx_0$ the blind subspace rotates, which a genuine invariance removes.
\end{remark}

\subsubsection{Coverage without alignment (\cref{p:cover-na})}

\noindent The derivation in the proof of \cref{p:cover} used only that the two fiber conditionals at the fixed readout are Gaussian, and the sign of $\partial C/\partial\delta$ computed there caps $C(\bx_R)$ at its $\delta=0$ value pointwise in the readout; averaging $\delta$ over the truth's Gaussian resolved marginal and applying $\E_Z[\Phi(a+bZ)]=\Phi(a/\sqrt{1+b^2})$ gives the aggregate. This proves:
\begin{proposition}[Blind-subspace coverage without alignment \emph{(linear--Gaussian)}]\label{p:cover-na}
Fix a blind direction $\bm v\in\ker\bA$; write $m_\reg(\bx_R),m_\star(\bx_R)$ for the curated prior's and the truth's blind-fiber conditional means at resolved readout $\bx_R$ and $s_\reg,s_\star$ for their conditional spreads, constant in the readout as in \eqref{eq:schur}, and assume only that the truth's fiber conditional is Gaussian (arbitrary $\Sigma_\reg,\Sigma_\star$ with no alignment in the linear--Gaussian case, and arbitrary, possibly nonlinear, conditional-mean maps beyond it). The reported band $m_\reg(\bx_R)\pm z\,s_\reg$ covers the truth's fiber conditional, with $\delta(\bx_R)=(m_\reg(\bx_R)-m_\star(\bx_R))/s_\star$, at
\[
C(\bx_R)=\Phi\big(\delta(\bx_R)+z\,s_\reg/s_\star\big)-\Phi\big(\delta(\bx_R)-z\,s_\reg/s_\star\big)\;\le\;2\,\Phi\!\big(z\,s_\reg/s_\star\big)-1,
\]
with equality exactly at $\delta(\bx_R)=0$. The cap on the right is distribution-free in the readout: it holds pointwise whatever the conditional-mean maps (affine or not) and whatever law generates the readouts, so it survives every aggregation, and whenever $s_\reg<s_\star$ the fiber-conditional coverage sits below nominal at every readout and on every average. In the linear--Gaussian case $\delta(\bx_R)$ is affine in the readout and the aggregate over held-out truths has a closed form, reducing to \eqref{eq:coverage-gen} when the mean gap is constant. Reading the marginal spread in place of the conditional overstates the \emph{reported} blind spread by $\bm v^\top\Sigma_\reg\bm v-s_\reg^2\ge0$; this orders reported widths only, not the coverage of the blind-marginal protocol of \cref{sec:demos} against the conditional law.
\end{proposition}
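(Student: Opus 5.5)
The argument works pointwise in the resolved readout and then averages. First I fix $\bx_R$ and use the freeze of \cref{p:blind}: the curated prior's fiber conditional of $\theta=\bm v^\top\bx$ at $\bx_R$ is the regularizer's, which I take Gaussian with mean $m_\reg(\bx_R)$ and spread $s_\reg$. In the linear--Gaussian case this is the Schur complement \eqref{eq:schur}, so $s_\reg$ does not depend on $\bx_R$. The truth's fiber conditional is $\mathcal N(m_\star(\bx_R),s_\star^2)$ by hypothesis. Standardizing $\theta_\star$ by $(m_\star(\bx_R),s_\star)$, the event $|\theta_\star-m_\reg(\bx_R)|\le z s_\reg$ becomes $|Z-\delta(\bx_R)|\le z s_\reg/s_\star$ with $Z$ standard normal. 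Its probability is the stated $C(\bx_R)$, by the same computation as in the proof of \cref{p:cover}.

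Next I prove the cap. Write $c=z s_\reg/s_\star$ and $h(\delta)=\Phi(\delta+c)-\Phi(\delta-c)$. Then $h$ is even, and $h'(\delta)=\phi(\delta+c)-\phi(\delta-c)$ is negative for $\delta>0$, because $|\delta+c|>|\delta-c|$ and $\phi$ decreases in $|\cdot|$. So $h$ is uniquely maximized at $\delta=0$, which gives $C(\bx_R)\le 2\Phi(c)-1$ with equality exactly at $\delta(\bx_R)=0$. The bound uses nothing about how $\delta$ depends on $\bx_R$, so it holds pointwise for any mean maps, affine or not. Integrating against any law of readouts keeps it, since an average of quantities each bounded by a constant is bounded by that constant. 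When $s_\reg<s_\star$ we have $c<z$, so $2\Phi(c)-1<2\Phi(z)-1=1-\alpha$, and the coverage is below nominal at every readout and under every aggregation.

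For the linear--Gaussian closed form, \eqref{eq:schur} makes $m_\reg$ and $m_\star$ affine in $\bx_R$, so $\delta(\bx_R)$ is affine. Under the truth's Gaussian resolved marginal, $\delta(\bx_{R\star})=a+bZ$ for a standard normal $Z$, with $a$ and $b$ computed from the means and covariances. Averaging each term of $C$ with $\E_Z[\Phi(a'+bZ)]=\Phi(a'/\sqrt{1+b^2})$ gives
\[
\bar C=\Phi\!\left(\frac{a+c}{\sqrt{1+b^2}}\right)-\Phi\!\left(\frac{a-c}{\sqrt{1+b^2}}\right),
\]
which reduces to \eqref{eq:coverage-gen} when $b=0$, i.e., when the mean gap is constant. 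The identity itself follows by writing $\Phi(a'+bZ)=\Pr(Z'\le a'+bZ)$ for an independent standard normal $Z'$, and noting $Z'-bZ\sim\mathcal N(0,1+b^2)$.

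The remaining claim compares marginal and conditional spreads. By the law of total variance, $\bm v^\top\Sigma_\reg\bm v-s_\reg^2=\bm v^\top\Sigma_{BR}\Sigma_{RR}^{-1}\Sigma_{RB}\bm v\ge0$. The main care point is stating scope correctly rather than any hard step. The cap is pointwise and therefore survives aggregation, but this ordering compares reported widths only: the blind-marginal protocol scores a different band against a different law, so no coverage ordering follows from it.
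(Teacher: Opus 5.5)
Your proposal is correct and follows the paper's own route. Like the paper, you reuse the pointwise Gaussian coverage computation from the proof of \cref{p:cover}, cap $C(\bx_R)$ at $\delta=0$ through the sign of $\phi(\delta+c)-\phi(\delta-c)$, and obtain the aggregate from $\E_Z[\Phi(a+bZ)]=\Phi(a/\sqrt{1+b^2})$. You also spell out two steps the paper leaves implicit: the evenness of $C$ in $\delta$, which settles $\delta<0$ and the uniqueness of the equality case, and the Schur-complement (total-variance) identity behind $\bm v^\top\Sigma_\reg\bm v-s_\reg^2\ge0$.
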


\subsubsection{The near-null regime (\cref{p:nearnull,p:mapnearnull})}
The exact statements so far use a genuine kernel; the seismic operator of \cref{sec:demos} instead has a \emph{near-null}---i.e., unit directions $\bm v$ with $\|\bA\bm v\|$ small but nonzero---where both statements survive in quantitative form.
\begin{proposition}[The near-null freeze and undetectability, quantitatively \emph{(linear--Gaussian)}]\label{p:nearnull}\label{r:nearnull}
With $\bS_y$ as in the preamble, write $\bS_\reg=\bA\Sigma_\reg\bA^\top+\Gamma$ for the regularizer's data covariance. (i)~\emph{Freeze.} The one step moves the prior's precision by
\begin{equation}
\Sigma_{q_\reg}^{-1}-\Sigma_\reg^{-1}
=\bA^\top\big(\Gamma^{-1}-\Gamma^{-1}\bm Q^{-1}\Gamma^{-1}\big)\bA,
\qquad
\bm Q:=\Gamma^{-1}+\bS_y^{-1}-\bS_\reg^{-1}\succ0 .
\label{eq:nearnull-freeze}
\end{equation}
Sandwiching \eqref{eq:nearnull-freeze} with $\bN$ recovers the exact freeze \eqref{eq:freeze}, since $\bA\bN=\mathbf 0$; sandwiching with a unit vector $\bm v$ gives $\big|\bm v^\top(\Sigma_{q_\reg}^{-1}-\Sigma_\reg^{-1})\bm v\big|\le c\,\|\bA\bm v\|^2$ with $c=\|\Gamma^{-1}-\Gamma^{-1}\bm Q^{-1}\Gamma^{-1}\|_2\le\lambda_{\min}(\Gamma)^{-1}\max\!\big\{1,\lambda_{\max}(\bS_y)/\lambda_{\min}(\Gamma)\big\}$ independent of $\bm v$: a right-singular direction with singular value $\sigma_v$ has its precision moved by at most $c\,\sigma_v^2$. (ii)~\emph{Undetectability.} Two truth distributions that differ by a spread misreport of size $w>0$ along a unit vector $\bm v$, $\Sigma_\star\mapsto\Sigma_\star+w\,\bm v\bm v^\top$ (the near-null analogue of the witness \eqref{eq:witness}), induce data marginals separated per measurement by
\begin{equation}
\KL=\tfrac12\big(t-\log(1+t)\big)\le\tfrac{t^2}{4},
\qquad
t=w\,(\bA\bm v)^\top\bS_y^{-1}(\bA\bm v)\ \le\ \frac{w\,\|\bA\bm v\|^2}{\lambda_{\min}(\Gamma)} .
\label{eq:nearnull-kl}
\end{equation}
For $n$ independent surveys, every test between the two truths has summed type-I and type-II errors at least $1-\sqrt{n\,\KL/2}$, which by \eqref{eq:nearnull-kl} exceeds $1/2$ whenever
\[
n\ <\ n_\star:=\frac{2}{t^2}\ \ge\ 2\left(\frac{\lambda_{\min}(\Gamma)}{w\,\|\bA\bm v\|^2}\right)^{\!2},
\]
so below the crossover $n_\star$ no test separates the two truths with summed errors below $1/2$. At $\bA\bm v=\bm0$ the two marginals coincide and $n_\star=\infty$, the exact non-identifiability of \cref{c:nonident}.
\end{proposition}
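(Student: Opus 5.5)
The plan is to treat (i) as a Gaussian-mixture computation followed by two Woodbury inversions, and (ii) as a rank-one KL computation followed by tensorization and Pinsker.

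\emph{Freeze (i).} By \cref{p:law}, $q_\reg$ is the average over $\by\sim\mathcal N(\bA\mu_\star,\bS_y)$ of the $\reg$-posterior $\mathcal N(\mu_{\mathrm{post}}(\by),\Sigma_{\mathrm{post}})$. Since $\mu_{\mathrm{post}}$ is affine in $\by$ with gain $\bG$, this average is Gaussian with covariance
\[
\Sigma_{q_\reg}=\Sigma_{\mathrm{post}}+\bG\bS_y\bG^\top .
\]
I will rewrite this in the $\Sigma_\reg$-form. Put $\bS_\reg=\bA\Sigma_\reg\bA^\top+\Gamma$. Then $\bG=\Sigma_\reg\bA^\top\bS_\reg^{-1}$ and $\Sigma_{\mathrm{post}}=\Sigma_\reg-\Sigma_\reg\bA^\top\bS_\reg^{-1}\bA\Sigma_\reg$. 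Substituting gives
\[
\Sigma_{q_\reg}=\Sigma_\reg-\Sigma_\reg\bA^\top\bm K\bA\Sigma_\reg,\qquad \bm K:=\bS_\reg^{-1}(\bS_\reg-\bS_y)\bS_\reg^{-1}.
\]
Assume first that $\bS_\reg-\bS_y$ is invertible. One Woodbury inversion then gives
\[
\Sigma_{q_\reg}^{-1}=\Sigma_\reg^{-1}+\bA^\top\big(\bm K^{-1}-\bA\Sigma_\reg\bA^\top\big)^{-1}\bA .
\]
Next I write $\bA\Sigma_\reg\bA^\top=\bS_\reg-\Gamma$ and compute $\bm K^{-1}-\bS_\reg=\bS_\reg[(\bS_\reg-\bS_y)^{-1}-\bS_\reg^{-1}]\bS_\reg$. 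Its inverse should be $\bS_y^{-1}-\bS_\reg^{-1}$; I will check this by direct multiplication. A second Woodbury inversion of $\Gamma+(\bm K^{-1}-\bS_\reg)$ then produces $\Gamma^{-1}-\Gamma^{-1}\bm Q^{-1}\Gamma^{-1}$ with $\bm Q=\Gamma^{-1}+\bS_y^{-1}-\bS_\reg^{-1}$, which is \eqref{eq:nearnull-freeze}.

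Positivity of $\bm Q$ follows because $\bS_\reg\succeq\Gamma$ gives $\bS_\reg^{-1}\preceq\Gamma^{-1}$, hence $\bm Q\succeq\bS_y^{-1}\succ0$. Both sides of \eqref{eq:nearnull-freeze} are continuous in $\Sigma_\reg$ wherever $\bm Q\succ0$, so the invertibility assumption on $\bS_\reg-\bS_y$ can be removed by a density argument. I expect this removal, together with keeping the signs straight through the two Woodbury steps, to be the main obstacle. If the limiting argument becomes delicate, I would instead verify the final identity by multiplying $\Sigma_{q_\reg}$ by the claimed precision and checking that the product is the identity.

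The consequences of (i) are then quick. Sandwiching with $\bN$ kills the right side because $\bA\bN=\mathbf 0$, which recovers \eqref{eq:freeze}. Sandwiching with a unit $\bm v$ bounds the change by $\|\Gamma^{-1}-\Gamma^{-1}\bm Q^{-1}\Gamma^{-1}\|_2\,\|\bA\bm v\|^2$. For the constant $c$, note that $\bm Q\succeq\Gamma^{-1}\succ0$ would make the middle matrix positive semidefinite and at most $\Gamma^{-1}$; in general one only has $\bm Q\succeq\bS_y^{-1}$, so $\bm Q^{-1}\preceq\bS_y$. This gives $-\Gamma^{-1}\bS_y\Gamma^{-1}\preceq\Gamma^{-1}-\Gamma^{-1}\bm Q^{-1}\Gamma^{-1}\preceq\Gamma^{-1}$, and the eigenvalue bound on $c$ follows.

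\emph{Undetectability (ii).} Write $\bm a=\bA\bm v$. The two data marginals are $\mathcal N(\bA\mu_\star,\bS_y)$ and $\mathcal N(\bA\mu_\star,\bS_y+w\bm a\bm a^\top)$, so they share a mean and differ by a rank-one covariance. By the matrix determinant lemma the log-determinant ratio is $\log(1+t)$, and the trace term is $t$, where $t=w\bm a^\top\bS_y^{-1}\bm a$. Hence $\KL=\tfrac12(t-\log(1+t))$ in the direction that places the perturbed law first. The elementary inequality $t-\log(1+t)\le t^2/2$ for $t\ge0$ gives the bound $t^2/4$. Since $\bS_y\succeq\Gamma$, we have $t\le w\|\bA\bm v\|^2/\lambda_{\min}(\Gamma)$.

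For $n$ independent surveys the KL tensorizes to $n\,\KL$. Pinsker then gives total variation at most $\sqrt{n\KL/2}$, and the Le Cam bound states that summed type-I and type-II errors are at least $1-\mathrm{TV}$. Substituting $\KL\le t^2/4$ shows the bound exceeds $1/2$ for $n<2/t^2$. The crossover bound on $n_\star$ follows from the upper bound on $t$. Finally, at $\bA\bm v=\bm 0$ we have $t=0$, so the two marginals coincide and $n_\star=\infty$, matching \cref{c:nonident}.
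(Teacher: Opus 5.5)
Your part (ii) and your bound on $c$ (via $-\Gamma^{-1}\bS_y\Gamma^{-1}\preceq\bm H\preceq\Gamma^{-1}$) are the paper's argument. Your two Woodbury inversions in (i) are algebraically correct wherever they apply. The step that fails is removing the hypothesis that $\bS_\reg-\bS_y$ is invertible by a density argument. The noise cancels in that difference, $\bS_\reg-\bS_y=\bA(\Sigma_\reg-\Sigma_\star)\bA^\top$, so its rank is at most $\mathrm{rank}\,\bA$. Whenever $\bA$ lacks full row rank, in particular whenever $m>d$ or the operator has a zero row, this matrix is singular for \emph{every} $\Sigma_\reg$. No perturbation of $\Sigma_\reg$, $\Sigma_\star$ or $\Gamma$ changes that, and for $m>d$ no perturbation of $\bA$ does either. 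The set on which your expansion around $\Sigma_\reg$ is valid is then empty, not dense, so continuity gives nothing. The proposition makes no full-row-rank assumption, so your main route as written proves (i) only for full-row-rank $\bA$.

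The fallback you mention does close the gap, and it should be the main argument. Let $\bm H:=\Gamma^{-1}-\Gamma^{-1}\bm Q^{-1}\Gamma^{-1}$ and keep your $\bm K$. Using $\bA\Sigma_\reg\bA^\top=\bS_\reg-\Gamma$, one gets $\Sigma_{q_\reg}(\Sigma_\reg^{-1}+\bA^\top\bm H\bA)=\bm I+\Sigma_\reg\bA^\top\big[(\bS_\reg^{-1}\bS_y+\bm K\Gamma)\bm H-\bm K\big]\bA$. Since $\bS_\reg^{-1}\bS_y+\bm K\Gamma=\bS_\reg^{-1}\bS_y\bm Q\Gamma$, the product $(\bS_\reg^{-1}\bS_y+\bm K\Gamma)\bm H$ equals $\bS_\reg^{-1}\bS_y(\bm Q-\Gamma^{-1})=\bm K$. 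The bracket therefore vanishes, and nothing is inverted beyond $\Gamma$, $\bS_y$, $\bS_\reg$ and $\bm Q$.

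The paper avoids the issue differently. It applies Woodbury once to $\Sigma_{\mathrm{post}}+\bG\bS_y\bG^\top$, expanding around $\Sigma_{\mathrm{post}}$ rather than $\Sigma_\reg$, so the matrix being inverted is $\bS_y$. The capacitance is then $\bS_y^{-1}+\bG^\top\Sigma_{\mathrm{post}}^{-1}\bG=\bm Q$, using $\Sigma_{\mathrm{post}}^{-1}\bG=\bA^\top\Gamma^{-1}$ and $\bA\Sigma_{\mathrm{post}}\bA^\top=\Gamma-\Gamma\bS_\reg^{-1}\Gamma$, and $\bm Q$ is always positive definite. Substituting $\Sigma_{\mathrm{post}}^{-1}=\Sigma_\reg^{-1}+\bA^\top\Gamma^{-1}\bA$ then gives \eqref{eq:nearnull-freeze} directly, with no case split.
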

\noindent\emph{Proof.} \emph{(i)} By the law of total covariance (as in the proof of \cref{r:map-lg}) the curated covariance is $\Sigma_{q_\reg}=\Sigma_{\mathrm{post}}+\bG\bS_y\bG^\top$, and Woodbury gives
\[
\Sigma_{q_\reg}^{-1}=\Sigma_{\mathrm{post}}^{-1}-\Sigma_{\mathrm{post}}^{-1}\bG\big(\bS_y^{-1}+\bG^\top\Sigma_{\mathrm{post}}^{-1}\bG\big)^{-1}\bG^\top\Sigma_{\mathrm{post}}^{-1}.
\]
Directly from the definitions $\Sigma_{\mathrm{post}}^{-1}\bG=\bA^\top\Gamma^{-1}$, and
\[
\bG^\top\Sigma_{\mathrm{post}}^{-1}\bG=\Gamma^{-1}\bA\Sigma_{\mathrm{post}}\bA^\top\Gamma^{-1}=\Gamma^{-1}-\bS_\reg^{-1},
\]
since $\bA\Sigma_{\mathrm{post}}\bA^\top=\bm D-\bm D\bS_\reg^{-1}\bm D=\bm D\bS_\reg^{-1}\Gamma=\Gamma-\Gamma\bS_\reg^{-1}\Gamma$ for $\bm D=\bA\Sigma_\reg\bA^\top$ (Woodbury again, then $\bS_\reg=\bm D+\Gamma$ twice). Substituting, with $\Sigma_{\mathrm{post}}^{-1}=\Sigma_\reg^{-1}+\bA^\top\Gamma^{-1}\bA$,
\[
\Sigma_{q_\reg}^{-1}-\Sigma_\reg^{-1}
=\bA^\top\Gamma^{-1}\bA-\bA^\top\Gamma^{-1}\bm Q^{-1}\Gamma^{-1}\bA,
\]
which is \eqref{eq:nearnull-freeze}. Here $\bm Q\succ0$ because $\bS_\reg\succeq\Gamma$ makes $\Gamma^{-1}-\bS_\reg^{-1}\succeq0$ while $\bS_y^{-1}\succ0$. For the constant, $\Gamma^{-1}$ and $\Gamma^{-1}\bm Q^{-1}\Gamma^{-1}$ are each positive semidefinite, so their difference has spectral norm at most the larger of theirs; with $\bm Q\succeq\bS_y^{-1}$ giving $\Gamma^{-1}\bm Q^{-1}\Gamma^{-1}\preceq\Gamma^{-1}\bS_y\Gamma^{-1}$, so $c\le\max\{\|\Gamma^{-1}\|_2,\|\Gamma^{-1}\bS_y\Gamma^{-1}\|_2\}$, the stated bound.

\emph{(ii)} With $\bm u=\bA\bm v$ the perturbation reaches the data only through $\bA$, so the perturbed data marginal is $\mathcal N(\bA\mu_\star,\bS_y+w\,\bm u\bm u^\top)$. The matrix $\bS_y^{-1}(\bS_y+w\bm u\bm u^\top)$ has eigenvalues $1$ (multiplicity $m-1$) and $1+t$, so the Gaussian divergence is $\tfrac12[\mathrm{tr}-m-\log\det]=\tfrac12(t-\log(1+t))$, and $\log(1+t)\ge t-t^2/2$ gives the quadratic bound in \eqref{eq:nearnull-kl}; since $\bS_y\succeq\Gamma$, $t\le w\|\bm u\|^2/\lambda_{\min}(\bS_y)\le w\|\bm u\|^2/\lambda_{\min}(\Gamma)$.
Over $n$ independent surveys the divergences add, so Pinsker's inequality and Le Cam's two-point bound give every test summed type-I and type-II error at least $1-\sqrt{n\,\KL/2}$, exceeding $1/2$ exactly when $n\,\KL/2<1/4$, guaranteed by $n<2/t^2$. A misreported blind mean, $\mu_\star\mapsto\mu_\star+w\bm v$, gives, with its own crossover $n_\star=1/\beta$ at $\beta=w^2(\bA\bm v)^\top\bS_y^{-1}(\bA\bm v)$ (the divergence being linear rather than quadratic in the misreport, so the constant is not the spread case's),
\[
\KL=\tfrac12\,w^2(\bA\bm v)^\top\bS_y^{-1}(\bA\bm v)
\]
and the same conclusion. \qed

The collapse degrades gracefully on the near-null for penalties strongly convex along the direction, and not under convexity alone. Since the damping that suppresses the classical ill-posed noise amplification is the same constant that enforces the pinning, reading \cref{p:cover} in it makes the trade explicit: a Gaussian regularizer with curvature $\mu_v$ along a blind direction reports a conditional spread of at least $\mu_v^{-1/2}$ there, so the coverage its central-$(1-\alpha)$ band attains against a truth of spread $s_\star$ and matching blind mean (a mean gap only lowers it, \cref{p:cover}) obeys
\begin{equation}
C \;\ge\; 2\,\Phi\!\Big(\frac{z}{s_\star\sqrt{\mu_v}}\Big)-1 ,
\label{eq:covmu}
\end{equation}
with equality when the blind subspace is one-dimensional; on a larger one the curvature alone understates the fiber spread away from the blind block's eigendirections, so the display bounds the shortfall rather than giving it. The right side is a decreasing function of the damping, and so, by the Loewner monotonicity of the fiber conditional in the damping, is the coverage itself: stability and calibration on the blind subspace trade off along a single axis. Separately, an improper-penalty archive collapses only at the exact kernel~(\cref{p:map}), so the migration archive of \cref{sec:demos}, undamped along its near-null, has its blind tightness there as an empirical reading rather than this proposition's guarantee.
\begin{proposition}[The near-null single-best collapse, and its sharp boundary \emph{(linear operator; Gaussian data term where stated)}]\label{p:mapnearnull}\label{r:mapnearnull}
Fix a unit $\bm v$, let $\bm\Pi=\bm I-\bm v\bm v^\top$ collect the remaining coordinates, and let the archive store \eqref{eq:varest} with linear $\bF=\bA$, the data term differentiable at $\bA\hat\bx(\by)$ with gradient $\bm d(\by)$ there, and the penalty convex, finite, with $\mu_v$-strongly convex sections along $\bm v$: $t\mapsto R(\bx+t\bm v)-\tfrac{\mu_v}{2}t^2$ convex for every $\bx$, $\mu_v>0$ (a quadratic penalty of precision $\bm P$ has $\mu_v=\bm v^\top\bm P\bm v$; any convex penalty plus damping $\tfrac{\lambda}{2}\|\bx\|^2$ has $\mu_v\ge\lambda$). Write $\psi(\bm\Pi\bx)$ for the penalty's unique conditional minimizer along $\bm v$ given the remaining coordinates $\bm\Pi\bx$. Then, for every measurement and every minimizer:
(i)~$|\bm v^\top\hat\bx-\psi(\bm\Pi\hat\bx)|\le|\langle\bA\bm v,\bm d(\by)\rangle|/\mu_v\le\|\bA\bm v\|\,\|\bm d(\by)\|/\mu_v$, and for a quadratic penalty the first bound is an identity;
(ii)~consequently $\E[\operatorname{Var}(\bm v^\top\hat\bx\mid\bm\Pi\hat\bx)]\le\|\bA\bm v\|^2\,\E\|\bm d\|^2/\mu_v^2$ (with $\E\|\bm d\|^2$ bounded uniformly as $\|\bA\bm v\|\to0$ for the Gaussian data term, a penalty bounded below, and a measurement law of finite second moment), and against a truth whose fiber conditional given the readout has density at most $B_\star$, the narrowest conditional $1-\alpha$ credible interval covers with probability at most $2B_\star\|\bA\bm v\|\,\E\|\bm d\|/(\alpha\mu_v)$: the collapse of \cref{p:map}, recovered at rate $\|\bA\bm v\|$ as the illumination vanishes;
(iii)~for a Gaussian data term and a ridge penalty the blind \emph{marginal} collapses as well, $\operatorname{Var}(\bm v^\top\hat\bx)\le\lambda_{\max}(\tilde{\bm S}_y)\,\|\tilde\bA\bm v\|^2/\lambda^2$ in whitened coordinates ($\tilde\bA=\Gamma^{-1/2}\bA$, $\tilde{\bm S}_y=\Gamma^{-1/2}\bS_y\Gamma^{-1/2}$, local to this proposition), while for a coupled quadratic penalty the marginal stays of order one (the graph-transported spread of $\psi$), and only the conditional statement survives: the near-null form of the alignment dichotomy in \cref{r:map-lg}.
\end{proposition}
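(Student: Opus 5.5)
The proof is one first-order optimality argument along the single line $t\mapsto\hat\bx+t\bm v$, run in three stages. Parts (i) and (ii) need only convexity and strong convexity along that line. Part (iii) is a direct linear--Gaussian computation.

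\emph{Stage (i).} Fix $\by$ and a minimizer $\hat\bx$, and write $\bm\Pi\hat\bx$ for its remaining coordinates. Restricted to the line, the objective is
\[
h(t)=D(\by,\bA\hat\bx+t\bA\bm v)+R(\hat\bx+t\bm v).
\]
It is minimized at $t=0$.
\begin{itemize}
\item Since the data term is differentiable at $\bA\hat\bx$, optimality gives $-\langle\bA\bm v,\bm d(\by)\rangle\in\partial_t R(\hat\bx+t\bm v)\big|_{t=0}$.
\item The section $r(t)=R(\hat\bx+t\bm v)$ is $\mu_v$-strongly convex. Its minimizer $t_0=\psi(\bm\Pi\hat\bx)-\bm v^\top\hat\bx$ has $0\in\partial r(t_0)$.
\item Strong monotonicity of $\partial r$ then gives $\mu_v|t_0|\le|\langle\bA\bm v,\bm d\rangle|$. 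Cauchy--Schwarz gives the second inequality.
\end{itemize}
For a quadratic penalty, $\partial r$ is the affine map $\mu_v(t-t_0)$, so the inequality is an equality.

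\emph{Stage (ii).} The conditional mean is the best $\bm\Pi\hat\bx$-measurable predictor, so the conditional variance is at most $\E[(\bm v^\top\hat\bx-\psi(\bm\Pi\hat\bx))^2\mid\bm\Pi\hat\bx]$. Averaging and applying (i) gives the variance bound.

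For uniform boundedness of $\E\|\bm d\|^2$ under a Gaussian data term, write $\bm d=\Gamma^{-1}(\bA\hat\bx-\by)$. Comparing the objective at $\hat\bx$ with its value at a fixed reference point bounds the residual by $\|\by\|^2$ plus constants. This uses that the penalty is bounded below, and the bound does not involve $\bA\bm v$.

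For coverage, the archive's report on the fiber sits within $\epsilon(\by)=\|\bA\bm v\|\|\bm d\|/\mu_v$ of $\psi$. Two steps follow.
\begin{itemize}
\item Markov: the event $\epsilon(\by)>\alpha$-scaled threshold has small probability.
\item On the complement, the narrowest $1-\alpha$ conditional interval of a law concentrated within $\epsilon$ of a point has length at most $2\epsilon/\alpha$. A truth with density at most $B_\star$ places mass at most $B_\star$ times that length in it.
\end{itemize}
Integrating by the tower property, as in the proof of \cref{p:map}, yields $2B_\star\|\bA\bm v\|\E\|\bm d\|/(\alpha\mu_v)$. I expect this step to be the main obstacle. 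The interval-length argument has to be made precise for an arbitrary conditional law of the report, not only an atomic one, and the $\alpha$ in the denominator suggests a Chebyshev/Markov-type step. I would first try the crude containment "the interval lies in $\psi\pm\epsilon/\alpha$-type neighbourhoods," and fall back to bounding the expected conditional length by $2\E\epsilon/\alpha$.

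\emph{Stage (iii).} Under ridge, $\hat\bx=(\tilde\bA^\top\tilde\bA+\lambda\bm I)^{-1}\tilde\bA^\top\tilde\by$. Hence
\[
\bm v^\top\hat\bx=\langle\tilde\bA(\tilde\bA^\top\tilde\bA+\lambda)^{-1}\bm v,\tilde\by\rangle .
\]
I use the push-through identity $\tilde\bA(\tilde\bA^\top\tilde\bA+\lambda)^{-1}=(\tilde\bA\tilde\bA^\top+\lambda)^{-1}\tilde\bA$. Bounding the norm of the resulting vector by $\|\tilde\bA\bm v\|/\lambda$ and the variance of $\tilde\by$ by $\lambda_{\max}(\tilde{\bm S}_y)$ gives the claim.

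For the coupled quadratic contrast, the marginal variance is the conditional variance plus $\operatorname{Var}\psi(\bm\Pi\hat\bx)$. Here $\psi$ is a nontrivial linear graph map, and its image of the resolved spread stays order one as $\|\bA\bm v\|\to0$. An explicit two-dimensional example witnesses this.
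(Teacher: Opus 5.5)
\textbf{Verdict.} Your proposal follows the paper's route, and parts (i), (iii), the variance bound and the $\E\|\bm d\|^2$ bound in (ii) are sound. The coverage step, as you first set it up, does not go through; your fallback is the correct argument and is the paper's.

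\textbf{What matches.} Restricting to the line $t\mapsto\hat\bx+t\bm v$ before applying Fermat's rule is a slightly leaner version of the paper's argument. The paper takes the full-space inclusion $-\bA^\top\bm d(\by)\in\partial R(\hat\bx)$ and restricts it to $\bm v$. For (iii), whitening plus the push-through identity, with a two-dimensional coupled-precision witness, is exactly the paper's proof.

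\textbf{Why the primary coverage plan fails.} The narrowest conditional $1-\alpha$ credible interval is a function of $\bm\Pi\hat\bx$ alone. It is fixed by the conditional law of $\bm v^\top\hat\bx$ given $\bm\Pi\hat\bx$. That law mixes over every measurement producing that readout, including those where $\epsilon(\by)=\|\bA\bm v\|\,\|\bm d(\by)\|/\mu_v$ is large. So the interval has no length ``on the complement'' of an event in $\by$. A split of that kind would also leave an additive failure-probability term that the stated bound does not have. In addition, the factor $1/\alpha$ is misplaced in your second bullet. A law genuinely supported within $\epsilon$ of a point has a narrowest $1-\alpha$ interval of length at most $2\epsilon$. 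The $1/\alpha$ enters only when concentration holds in conditional mean.

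\textbf{The fix (your fallback).} Apply Markov inside the conditional law, using (i):
$\Pr\big(|\bm v^\top\hat\bx-\psi(\bm\Pi\hat\bx)|>c\mid\bm\Pi\hat\bx\big)\le\|\bA\bm v\|\,\E[\|\bm d\|\mid\bm\Pi\hat\bx]/(\mu_v c)$.
Take $c=\|\bA\bm v\|\,\E[\|\bm d\|\mid\bm\Pi\hat\bx]/(\alpha\mu_v)$. Then $\psi(\bm\Pi\hat\bx)\pm c$ carries conditional mass at least $1-\alpha$, so the narrowest interval has length at most $2c$. The $B_\star$ density bound gives conditional coverage at most $2B_\star c$. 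The tower property then yields $2B_\star\|\bA\bm v\|\,\E\|\bm d\|/(\alpha\mu_v)$. Your ``crude containment'' is exactly this once $\epsilon(\by)$ is replaced by $\E[\epsilon\mid\bm\Pi\hat\bx]$.

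\textbf{A caution on the tower step.} It cannot be borrowed verbatim from \cref{p:map}. Off the exact kernel, $\by$ depends weakly on $\theta_\star$, so the conditional independence used there fails. The $B_\star$ bound must therefore be read as holding for $\theta_\star$ conditionally on $\bm\Pi\hat\bx$, as the paper's proof implicitly does.

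\textbf{Two smaller repairs.}
\begin{enumerate}
\item The $L^2$ comparison in (ii) needs $\psi$ to be measurable. The paper gets this from strong convexity, which makes $\psi$ single-valued and continuous.
\item In the coupled contrast, the law of total variance involves $\operatorname{Var}(\E[\bm v^\top\hat\bx\mid\bm\Pi\hat\bx])$, not $\operatorname{Var}\psi(\bm\Pi\hat\bx)$. In a two-dimensional witness with $\bA\bm v=\bm 0$ the two coincide, since there $\bm v^\top\hat\bx=\psi(\bm\Pi\hat\bx)$ exactly.
\end{enumerate}
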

\noindent\emph{Proof.} \emph{(i)} At a minimizer, Fermat's rule for the sum of a function differentiable at the point and a finite convex function supplies $\bm s=-\bA^\top\bm d(\by)\in\partial R(\hat\bx)$. On the line along $\bm v$, $f(t)=R(\bm\Pi\hat\bx+t\bm v)$ is $\mu_v$-strongly convex with $\bm v^\top\bm s\in\partial f(\bm v^\top\hat\bx)$ and $0\in\partial f(\psi(\bm\Pi\hat\bx))$, so subdifferential monotonicity gives $\mu_v\,|\bm v^\top\hat\bx-\psi|^2\le(\bm v^\top\bm s)(\bm v^\top\hat\bx-\psi)$, the bound in (i); for a quadratic penalty, stationarity solved along $\bm v$ makes it the identity $\bm v^\top\hat\bx-\psi(\bm\Pi\hat\bx)=-\langle\bA\bm v,\bm d\rangle/(\bm v^\top\bm P\bm v)$.

\emph{(ii)} Strong convexity of the sections makes $\psi$ single-valued and continuous, so $\psi(\bm\Pi\hat\bx)$ is one measurable predictor of $\bm v^\top\hat\bx$; the conditional mean, being $L^2$-optimal, does no worse, so squaring (i) and taking expectations gives the variance bound of (ii). Boundedness of $\E\|\bm d\|^2$ follows from comparing the objective at $\hat\bx$ with a fixed reference $\bx_0$,
\[
\|\by-\bA\hat\bx\|^2_{\Gamma^{-1}}\le\|\by-\bA\bx_0\|^2_{\Gamma^{-1}}+2\big(R(\bx_0)-\inf R\big),
\]
the Gaussian gradient being controlled by the misfit. For coverage, (i) and the conditional Markov inequality place conditional mass at least $1-\alpha$ on $\psi(\bm\Pi\hat\bx)\pm\|\bA\bm v\|\,\E[\|\bm d\|\mid\bm\Pi\hat\bx]/(\alpha\mu_v)$, so the narrowest $1-\alpha$ credible interval is no wider; a $B_\star$-bounded conditional density gives it conditional probability at most $2B_\star\|\bA\bm v\|\,\E[\|\bm d\|\mid\bm\Pi\hat\bx]/(\alpha\mu_v)$, and the tower property averages this to the stated bound.

\emph{(iii)} Whitening and the push-through identity $(\tilde\bA^\top\tilde\bA+\lambda\bm I)^{-1}\tilde\bA^\top=\tilde\bA^\top(\tilde\bA\tilde\bA^\top+\lambda\bm I)^{-1}$ express $\bm v^\top\hat\bx$ as a linear image of the whitened data with the stated variance bound; a two-dimensional coupled-precision counterexample with $\bA\bm v=\bm 0$ and off-diagonal precision has order-one blind marginal, ruling out any marginal bound off alignment. \qed

\emph{Sharpness.} The strong convexity cannot be weakened to convexity, and both failure modes are exactly solvable. \emph{A kinked penalty} (one dimension, $\bA=\varepsilon>0$, Gaussian data term, $R(x)=|x-1|+|x+1|$) has minimizer $\hat x(\by)=\by/\varepsilon$ for $|\by|<\varepsilon$ and $\hat x(\by)=\operatorname{sgn}(\by)$ on a widening interval beyond, so its spread stays of order one as $\varepsilon\to0$ while at $\varepsilon=0$ the fixed-selection collapse of \cref{p:map} reports zero width: the near-null spread is discontinuous at the kernel, and no bound $C\|\bA\bm v\|^\kappa$, for any $C$ and $\kappa>0$, can hold for merely convex penalties; the data tilt, however weak, hops the minimizer between kink locations a fixed distance apart. \emph{A penalty exactly flat along $\bm v$} (e.g., total variation along its constant direction) hands the coordinate to the data term alone, and a separable two-dimensional witness has $\operatorname{Var}(\bm v^\top\hat\bx)\sim\|\bA\bm v\|^{-2}$: the classical ill-posed noise amplification, opposite in kind to the freeze. The two regimes are separated exactly by $\mu_v$.
\subsubsection{The smoothing floor (\cref{r:map-lg})}

\begin{remark}[The linear--Gaussian sharpening and the estimator's smoothing floor]\label{r:map-lg}
In the linear--Gaussian case with $\Sigma_\reg\succ0$---writing $q^{\mathrm{ps}}_\reg\equiv q_\reg$ for the posterior-sample curated prior and $q^{\mathrm{sb}}_\reg$ for its single-best counterpart---single-best curation discards exactly the data-averaged posterior covariance,
\begin{equation}
\Sigma_{q^{\mathrm{sb}}_\reg}=\Sigma_{q^{\mathrm{ps}}_\reg}-\E_\by[\mathrm{Cov}(\bx\mid\by)],
\label{eq:totalcov}
\end{equation}
which on the blind fiber is the entire reported spread. Under alignment the blind \emph{marginal} then collapses to a point mass; off alignment it collapses to a deterministic image of the resolved coordinates, with nonzero marginal spread but zero conditional width. A generative model cannot represent a point mass, so fit to the MAP archive it reports a blind spread set not by the data but by its own smoothing scale $h$ (e.g., kernel width, diffusion residual-noise floor, or flow regularization).
\end{remark}
\emph{Gaussian sharpening.} Here the posterior $\mathcal N(\mu_{\mathrm{post}}(\by),\Sigma_{\mathrm{post}})$ has its mode at its mean, so
\begin{equation}
\mathrm{MAP}_\reg(\by)=\mu_{\mathrm{post}}(\by)=\bG\by+(\bm I-\bG\bA)\mu_\reg,\qquad \bG=\Sigma_{\mathrm{post}}\bA^\top\Gamma^{-1},
\end{equation}
affine in $\by$. Its pushforward of $\truth(\by)=\mathcal N(\bA\mu_\star,\bS_y)$ has covariance $\Sigma_{q^{\mathrm{sb}}_\reg}=\bG\bS_y\bG^\top$, while the law of total covariance for the posterior-sample archive ($\bx\sim\pi_\reg(\cdot\mid\by),\ \by\sim\truth$) gives
\begin{equation}
\Sigma_{q^{\mathrm{ps}}_\reg}=\underbrace{\E_\by[\mathrm{Cov}(\bx\mid\by)]}_{=\ \Sigma_{\mathrm{post}}}+\underbrace{\mathrm{Cov}_\by[\E(\bx\mid\by)]}_{=\ \bG\bS_y\bG^\top}
\ \Longrightarrow\ \Sigma_{q^{\mathrm{sb}}_\reg}=\Sigma_{q^{\mathrm{ps}}_\reg}-\Sigma_{\mathrm{post}},
\end{equation}
which is \eqref{eq:totalcov}. As $\mathrm{rank}\,\bG=\mathrm{rank}\,\bA$ and $\bS_y\succ0$, $\Sigma_{q^{\mathrm{sb}}_\reg}$ has rank exactly $\mathrm{rank}\,\bA$, with range $\bG(\mathbb R^m)=\Sigma_{\mathrm{post}}\,\mathrm{row}(\bA)$. This range meets $\ker\bA$ only at $\bm0$: if $\bm0\ne\Sigma_{\mathrm{post}}\bm w\in\ker\bA$ with $\bm w\in\mathrm{row}\,\bA$, then $\bM\Sigma_{\mathrm{post}}\bm w=\bm0$ and $\Sigma_{\mathrm{post}}^{-1}=\Sigma_\reg^{-1}+\bM$ give $\Sigma_\reg^{-1}\Sigma_{\mathrm{post}}\bm w=\bm w$, i.e.\ $\Sigma_{\mathrm{post}}\bm w=\Sigma_\reg\bm w$: a blind vector $\Sigma_\reg\bm w\in\ker\bA$ with $\bm w\in\mathrm{row}\,\bA=(\ker\bA)^\perp$, whence $\bm w^\top\Sigma_\reg\bm w=0$ and $\bm w=\bm0$, a contradiction. Thus the blind-conditional variance is zero and any nonzero blind \emph{marginal} variance off alignment is a deterministic image of the resolved coordinates.

\emph{Smoothing floor.} A generative estimator that fits the point mass $\delta_{\hat\bx_B}$ with smoothing scale $h$ convolves it with a Gaussian kernel of width $h\,c_K$, giving blind conditional $\mathcal N(\hat\bx_B,(h c_K)^2)$ and, at a centered atom, coverage
\begin{equation}
C^{\mathrm{sb}}=2\Phi\!\big(z\,h\,c_K/s_\star\big)-1\ \xrightarrow[h\to0]{}\ 0 .
\end{equation}
This holds exactly for a Gaussian-kernel density estimator ($c_K$ the bandwidth constant) and for an exact-score diffusion stopped at noise level $h$.

More generally, let the output law be the archive's law smoothed by an independent mollifier of scale $h$ with finite second moment (blind-component standard deviation $hc_B$, resolved second moment $(hm_R)^2$), and let the fiber selection be $L$-Lipschitz on the readouts. The conditional mean is the $L^2$-optimal predictor and the selection evaluated at the smoothed readout is one candidate, so Minkowski gives $\E_{\bx_R}[\operatorname{Var}(\bm v^\top\bx_B\mid r)]\le h^2(c_B+Lm_R)^2$: the reported blind conditional spread, and by Chebyshev its coverage of a truth with a bounded fiber density, vanish with the smoothing scale under any such mollifier; the exact Gaussian formula above remains the sharp constant, and the passage to a learned, discretized sampler remains empirical.

\subsubsection{The safe regime (\cref{p:recover})}
\begin{proposition}[Repeating the loop against fresh data recovers only the resolved directions \emph{(freeze fully general; convergence linear--Gaussian)}]\label{p:recover}
Iterate $\pi_{t+1}=\loopop[\pi_t]$ from $\pi_0=\reg$, each round re-inverting fresh real measurements into posterior samples.
(i)~\emph{Freeze.} For any likelihood that depends on $\bx$ only through $\bA\bx$, any prior, and any per-round measurement-averaging law, the blind-fiber \emph{conditional} is frozen for all rounds (the freeze of \cref{p:blind}, iterated): in the $\mathrm{row}(\bA)\oplus\ker\bA$ split, $\pi_t(\bx_B\mid\bx_R)=\reg(\bx_B\mid\bx_R)$ for every $t$.
(ii)~\emph{Convergence.} In the linear--Gaussian specialization, suppose $\bA$ has full row rank, $\Sigma_\star\succ0$, and the resolved covariance is started nondegenerate, $\bA\Sigma_\reg\bA^\top\succ0$. Then the loop preserves Gaussianity, $\pi_t=\mathcal N(\mu_t,\Sigma_t)$, and the resolved marginal converges to the truth's ($\bP_R\Sigma_t\bP_R\to\bP_R\Sigma_\star\bP_R$ and $\bP_R\mu_t\to\bP_R\mu_\star$, the mean error decaying geometrically), so $\pi_t\to\truth$ on $\mathrm{row}(\bA)$ while every blind-fiber conditional stays at the regularizer's.
Consequently, under the hypotheses of~(ii), recovery of the full truth is guaranteed for \emph{every} regularizer if and only if $\ker\bA=\{\bm0\}$.
\end{proposition}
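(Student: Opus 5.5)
The plan is to prove the three parts in order, with (i) a direct induction on \cref{p:blind} and (ii) reduced to a matrix recursion in data space. For (i), assume inductively that $\pi_t(\bx_B\mid\bx_R)=\reg(\bx_B\mid\bx_R)$. Apply the computation \eqref{eq:freeze-cond} with $\pi_t$ in place of $\reg$ and with whatever averaging law round $t$ uses. The reweight $g_t(\bx)=\E_{\by}[p(\by\mid\bx)/\pi_t(\by)]$ depends on $\bx$ only through $\bA\bx=\bA\bx_R$. It therefore cancels in the conditional, so $\pi_{t+1}(\bx_B\mid\bx_R)=\pi_t(\bx_B\mid\bx_R)$. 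Nothing Gaussian is used here, and in function space the disintegration argument following \cref{p:blind} gives the same step.

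For (ii), I would first check Gaussian closure. The posterior under $\mathcal N(\mu_t,\Sigma_t)$ is Gaussian with a mean affine in $\by$, and averaging it over $\by\sim\mathcal N(\bA\mu_\star,\bS_y)$ gives $\pi_{t+1}=\mathcal N(\mu_{t+1},\Sigma_{t+1})$. By total covariance, as in \cref{r:map-lg}, $\Sigma_{t+1}=\Sigma_{\mathrm{post},t}+\bG_t\bS_y\bG_t^\top$. Next I would push everything to data space. Set $\bm D_t=\bA\Sigma_t\bA^\top$ and $\bS_t=\bm D_t+\Gamma$. The identities in the proof of \cref{p:nearnull} give $\bA\Sigma_{\mathrm{post},t}\bA^\top=\Gamma-\Gamma\bS_t^{-1}\Gamma$ and $\bA\bG_t=\bm I-\Gamma\bS_t^{-1}$. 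These yield the closed recursions
\[
\bS_{t+1}=2\Gamma-\Gamma\bS_t^{-1}\Gamma+(\bm I-\Gamma\bS_t^{-1})\bS_y(\bm I-\bS_t^{-1}\Gamma),
\]
\[
\bA\mu_{t+1}-\bA\mu_\star=\Gamma\bS_t^{-1}(\bA\mu_t-\bA\mu_\star).
\]
Because $\bA$ has full row rank, $\bA$ restricted to $\mathrm{row}(\bA)$ is invertible. Hence $\bm D_t\to\bA\Sigma_\star\bA^\top$ and $\bA\mu_t\to\bA\mu_\star$ are equivalent to the claimed resolved convergence of $\bP_R\Sigma_t\bP_R$ and $\bP_R\mu_t$.

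The main obstacle is convergence of the nonlinear covariance recursion; I would argue it through a Lyapunov function and the EM ascent. The Lyapunov function is $L(\bS)=\KL(\mathcal N(\bA\mu_\star,\bS_y)\,\|\,\mathcal N(\bA\mu_t,\bS))$, which by \cref{p:law} is nonincreasing along the orbit. Its sublevel sets within $\{\bS\succeq\Gamma\}$ are compact. I would then check that $\bm D_t\succ0$ is preserved, so the iterates stay off the boundary $\bS=\Gamma$. The recursion shows $\bm D_{t+1}\succeq(\bm I-\Gamma\bS_t^{-1})(\bS_y-\Gamma)(\cdot)^\top$ up to a positive term, and $\bS_y-\Gamma=\bA\Sigma_\star\bA^\top\succ0$ by $\Sigma_\star\succ0$ and full row rank. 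Next, the only fixed point in $\{\bS\succ\Gamma\}$ is $\bS=\bS_y$: setting $\bS_{t+1}=\bS_t=\bS$ and simplifying should force $(\bm I-\Gamma\bS^{-1})(\bS_y-\bS)(\bm I-\bS^{-1}\Gamma)=0$, and the outer factors are invertible when $\bm D\succ0$. A LaSalle/Wu-type EM convergence argument (strict decrease off fixed points, continuity of the map, compactness) then gives $\bS_t\to\bS_y$. Given that, $\|\Gamma\bS_t^{-1}\|<\rho<1$ eventually in the $\Gamma^{-1/2}$-weighted norm, so the mean error decays geometrically. If the uniform lower bound on $\bm D_t$ proves awkward, I would instead work in the eigenbasis of $\Gamma^{-1/2}\bS_y\Gamma^{-1/2}$ and look for a monotone scalar comparison.

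Finally, the equivalence. If $\ker\bA=\{\bm0\}$, the resolved space is everything and (ii) gives $\pi_t\to\truth$ for every regularizer. If $\ker\bA\neq\{\bm0\}$, I would choose a full-support Gaussian $\reg$ whose blind conditional differs from the truth's, for instance by scaling its Schur complement. By (i) that conditional survives every round, so the limit differs from $\truth$.
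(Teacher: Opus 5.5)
\textbf{Gap: the convergence step lacks a uniform interior floor.} Your part (i), the Gaussian closure, the data-space recursion, the unique interior fixed point via $(\bm I-\Gamma\bS^{-1})(\bS_y-\bS)(\bm I-\bS^{-1}\Gamma)=0$, the EM-descent/compactness strategy, and the final equivalence all follow the paper's proof. The gap is where you keep the orbit away from the boundary.

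Showing $\bm D_t\succ0$ at each finite $t$ does not stop the iterates from accumulating on the boundary. That boundary is not just $\bS=\Gamma$: it is every $\bS\succeq\Gamma$ with $\bS-\Gamma$ singular. Its rank-deficient faces carry fixed points of the recursion. In whitened coordinates $\widetilde{\bm D}=\Gamma^{-1/2}\bm D\Gamma^{-1/2}$, $\widetilde\bS=\Gamma^{-1/2}\bS_y\Gamma^{-1/2}$, the matrix $\widetilde{\bm D}=P\widetilde\bS P-P$ is fixed for every orthogonal projector $P$. Their KL values can lie below the starting value. So monotone descent plus compact sublevel sets of $\{\bS\succeq\Gamma\}$ cannot exclude them as limit points. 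The LaSalle/Wu argument therefore needs a \emph{uniform} floor on $\lambda_{\min}(\widetilde{\bm D}_t)$, which you do not supply.

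Your positivity bound $\bm D_{t+1}\succeq(\bm I-\Gamma\bS_t^{-1})(\bS_y-\Gamma)(\cdot)^\top$ cannot supply it either. In whitened form $\widetilde{\bm D}_{t+1}=\widetilde{\bm K}_t+\widetilde{\bm K}_t^2+\widetilde{\bm K}_t\widetilde{\bm D}_\star\widetilde{\bm K}_t$, and you keep only the last term. That term is of order $\lambda_{\min}(\widetilde{\bm D}_t)^2$, and iterating such a bound allows $\lambda_{\min}\to0$.

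\textbf{The fix.} Keep all three terms:
\[
\widetilde{\bm D}_{t+1}=\widetilde{\bm K}_t+\widetilde{\bm K}_t\widetilde\bS\widetilde{\bm K}_t\succeq\widetilde{\bm K}_t+s_{\min}\widetilde{\bm K}_t^2,
\qquad \widetilde{\bm K}_t=\widetilde{\bm D}_t(\widetilde{\bm D}_t+\bm I)^{-1},
\]
with $s_{\min}=\lambda_{\min}(\widetilde\bS)>1$, which holds because $\bS_y-\Gamma\succ0$. The right side is a spectral function of $\widetilde{\bm D}_t$, so $\lambda_{t+1}\ge g(\lambda_t)$ with $g(\lambda)=\lambda/(1+\lambda)+s_{\min}\lambda^2/(1+\lambda)^2$. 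This $g$ is increasing, and $g(\lambda)-\lambda=\lambda^2(s_{\min}-1-\lambda)/(1+\lambda)^2\ge0$ for $\lambda\le s_{\min}-1$. Induction then gives $\lambda_t\ge\min(\lambda_0,s_{\min}-1)>0$ for all $t$, which is the paper's floor.

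With the floor, the iterates stay in a compact subset of the interior and your descent argument goes through. The mean contraction $\|(\widetilde{\bm D}_t+\bm I)^{-1}\|\le(1+\min(\lambda_0,s_{\min}-1))^{-1}$ is then geometric from the first round, not merely eventually. Your fallback of working in the eigenbasis of $\widetilde\bS$ would not achieve this, since $\widetilde{\bm D}_t$ need not commute with $\widetilde\bS$. The scalar comparison has to be on $\lambda_{\min}$, through a Loewner bound.

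\textbf{A smaller point.} The covariance recursion is autonomous, so run the descent on the mean-free $\KL(\mathcal N(\bm 0,\bS_y)\,\|\,\mathcal N(\bm 0,\bS))$, or on the joint state. An $L(\bS)$ that still contains $\bA\mu_t$ is not a function of $\bS$ alone.
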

\noindent\emph{Proof.} \emph{(i)} The cancellation in \eqref{eq:freeze-cond} uses only that the reweight $g_t(\bx)=\E_{\by\sim\truth}\!\big[p(\by\mid\bx)/\pi_t(\by)\big]$ is constant along blind fibers, which holds whenever the likelihood depends on $\bx$ only through $\bA\bx$, for any prior in the role of $\reg$ and any averaging law in place of $\truth(\by)$; induction from $\pi_0=\reg$ gives the claim. Only the fiber \emph{conditional} is frozen: off alignment the blind \emph{marginal} moves with the evolving resolved marginal through the frozen regression~(\cref{r:map-lg}).

\emph{(ii)} The loop maps Gaussians to Gaussians,
\[
\Sigma_{t+1}=\Sigma_{\mathrm{post},t}+\bG_t\bS_y\bG_t^\top,\qquad
\mu_{t+1}=\mu_t+\bG_t\big(\bA\mu_\star-\bA\mu_t\big),
\]
with $\bG_t=\Sigma_t\bA^\top(\bm D_t+\Gamma)^{-1}$ and $\Sigma_{\mathrm{post},t}=\Sigma_t-\bG_t\bA\Sigma_t$, a form requiring only $\bm D_t+\Gamma\succ0$, so a singular $\Sigma_\reg$ is admissible. Via the Woodbury rearrangement $\bA\Sigma_{\mathrm{post},t}\bA^\top=\bm D_t(\bm D_t+\Gamma)^{-1}\Gamma$, the covariance step projects to an \emph{autonomous} recursion in $\bm D_t:=\bA\Sigma_t\bA^\top$,
\[
\begin{gathered}
\bm D_{t+1}=\Xi(\bm D_t):=\bm K_t\Gamma+\bm K_t\bS_y\bm K_t^\top,\qquad \bm K_t=\bm D_t(\bm D_t+\Gamma)^{-1},\\
\bS_y=\bm D_\star+\Gamma,\ \ \bm D_\star=\bA\Sigma_\star\bA^\top ,
\end{gathered}
\]
and, since $\bA\bG_t=\bm K_t$, the mean step to $\bA\mu_{t+1}-\bA\mu_\star=(\bm I-\bm K_t)(\bA\mu_t-\bA\mu_\star)$; the hypotheses give $\bm D_\star\succ0$ and $\bm D_0\succ0$. Whiten $\widetilde{\bm D}_t=\Gamma^{-1/2}\bm D_t\Gamma^{-1/2}$ and $\widetilde{\bS}=\bm I+\Gamma^{-1/2}\bm D_\star\Gamma^{-1/2}$, so that
\[
\widetilde{\bm D}_{t+1}=\widetilde{\bm K}_t+\widetilde{\bm K}_t\widetilde{\bS}\,\widetilde{\bm K}_t,\qquad
\widetilde{\bm K}_t=\widetilde{\bm D}_t\big(\widetilde{\bm D}_t+\bm I\big)^{-1}\ \text{symmetric},
\]
with $\|\widetilde{\bm D}_{t+1}\|\le1+\|\widetilde{\bS}\|$. Write $\lambda_t=\lambda_{\min}(\widetilde{\bm D}_t)$ and $s_{\min}=\lambda_{\min}(\widetilde{\bS})>1$. Since $\widetilde{\bm K}_t=f(\widetilde{\bm D}_t)$ with $f(\lambda)=\lambda/(1+\lambda)$ increasing, $\lambda_{\min}(\widetilde{\bm K}_t)=f(\lambda_t)$; congruence gives $\widetilde{\bm K}_t\widetilde{\bS}\widetilde{\bm K}_t\succeq s_{\min}\widetilde{\bm K}_t^2$; and Weyl's inequality for the sum yields
\[
\lambda_{t+1}\;\ge\;g_\flat(\lambda_t),\qquad
g_\flat(\lambda)=\frac{\lambda}{1+\lambda}+s_{\min}\,\frac{\lambda^2}{(1+\lambda)^2},\qquad
g_\flat(\lambda)-\lambda=\frac{\lambda^2\,(s_{\min}-1-\lambda)}{(1+\lambda)^2},
\]
so $g_\flat$ is increasing, fixes $s_{\min}-1$, and satisfies $g_\flat(\lambda)\ge\lambda$ for $\lambda\le s_{\min}-1$; by induction
\[
\lambda_{\min}(\widetilde{\bm D}_t)\;\ge\;\lambda_\flat:=\min\!\big(\lambda_{\min}(\widetilde{\bm D}_0),\,s_{\min}-1\big)\;>\;0\qquad\text{for all }t,
\]
a uniform interior floor, needed because the rank-deficient boundary faces are themselves fixed points of $\Xi$, which ascent alone cannot exclude. If $\widetilde{\bm D}\succ0$ is fixed, $\bm E=\widetilde{\bm D}+\bm I$ turns the fixed-point equation into
\[
(\bm E-\bm I)\bm E^{-1}(\bm E-\bm I)=(\bm E-\bm I)\bm E^{-1}\widetilde{\bS}\,\bm E^{-1}(\bm E-\bm I),
\]
and cancelling the invertible factors $(\bm E-\bm I)\bm E^{-1}$ and $\bm E-\bm I$ leaves $\bm E=\widetilde{\bS}$: the interior fixed point is unique, $\bm D=\bm D_\star$.

Finally, $\Xi$ is the Gaussian EM/NPMLE update for the data-space model $\by=\bm\upsilon+\varepsilon$, $\bm\upsilon\sim\mathcal N(0,\bm D)$, $\varepsilon\sim\mathcal N(0,\Gamma)$, fit to the truth's marginal, with objective $J(\bm D)=\KL\big(\mathcal N(0,\bS_y)\,\big\|\,\mathcal N(0,\bm D+\Gamma)\big)$ nonincreasing along the trajectory and strictly decreasing unless $\Xi(\bm D)=\bm D$~\citep{dempster1977em}. The floor keeps the bounded iterates in a compact subset of the interior, on which $\Xi$ and $J$ are continuous, so the standard descent argument makes every limit point an interior fixed point; there is exactly one, hence $\bm D_t\to\bm D_\star$. For the means, $\bm I-\widetilde{\bm K}_t=(\widetilde{\bm D}_t+\bm I)^{-1}$ has norm at most $(1+\lambda_\flat)^{-1}<1$, so $\bA\mu_t\to\bA\mu_\star$ geometrically. Both conclusions lift through the pseudoinverse ($\bA^{+}\bA=\bP_R$): $\bP_R\Sigma_t\bP_R=\bA^{+}\bm D_t(\bA^{+})^\top\to\bP_R\Sigma_\star\bP_R$ and $\bP_R\mu_t\to\bP_R\mu_\star$.

If $\ker\bA=\{\bm0\}$ the resolved marginal is the whole prior and (ii) gives $\pi_t\to\truth$ for every regularizer; if not, any regularizer whose blind-fiber conditional differs from the truth's stays wrong there for every round, by (i); the one whose blind conditional happens to coincide is exactly the agreement no measurement can certify~(\cref{c:nonident}). Beyond the linear--Gaussian case only the freeze and the ascent of \cref{p:law} remain: the convergence argument rests on the loop's Gaussian closure, and the loop is alternating minimization of an $I$-divergence between two convex sets, whose objective descends by the classical argument~\citep{csiszar1984}; what that argument does not supply, and what the proof above provides, is convergence of the iterates through the rank-deficient boundary faces. \qed

\section{Experimental details}\label{app:experimental}

This appendix records the acquisition, legacy reconstruction, and generative-prior training for the two operators of \cref{sec:demos}; every run regenerates from the accompanying code, which fixes the random seeds and pins the environment. Within each oracle/curated pair the priors are matched in architecture, data budget, optimizer, and random-number state, reset before each run, so they differ only in the training target and in where early stopping halts each fit~(\cref{ss:deployed}).

\paragraph{Seismic: acquisition and legacy reconstruction.} On the linearized-Born operator of \cref{ss:setup} (geometry there), wavelet-colored noise is added to every shot. The legacy archive is a sparsity-promoting least-squares migration: from the source-averaged reverse-time migration we minimize the shot-data misfit under an $\ell_1$ penalty on the reflectivity through five epochs of stochastic per-shot proximal gradient (one shot per step, AdaGrad-preconditioned to tame the illumination imbalance), with the soft threshold fixed from the plain migration's amplitude and the accumulated step size (value in the code).

\paragraph{Seismic: diffusion priors.} Each of the two priors is a U-Net ($64$ base channels, attention at one scale) with a cosine-$\beta$ schedule over $1000$ steps~\citep{ho2020denoising}, trained by AdamW under a one-cycle schedule (peak learning rate $2\times10^{-4}$, batch size $8$, $3000$ steps, gradient-norm clipping) on per-pixel-standardized images and sampled with the deterministic DDIM update used for every scored ensemble, the sampler's default $[-1,1]$ output clamp disabled, since it collapses standardized samples.

As expected, both priors fit their targets~(\cref{ss:deployed}): their unconditional samples are of a kind with the training images~(\cref{fig:seismic-training}), and the validation denoising loss falls to a plateau~(\cref{fig:seismic-loss}).

\begin{figure}[t]
\centering
\includegraphics[width=\linewidth]{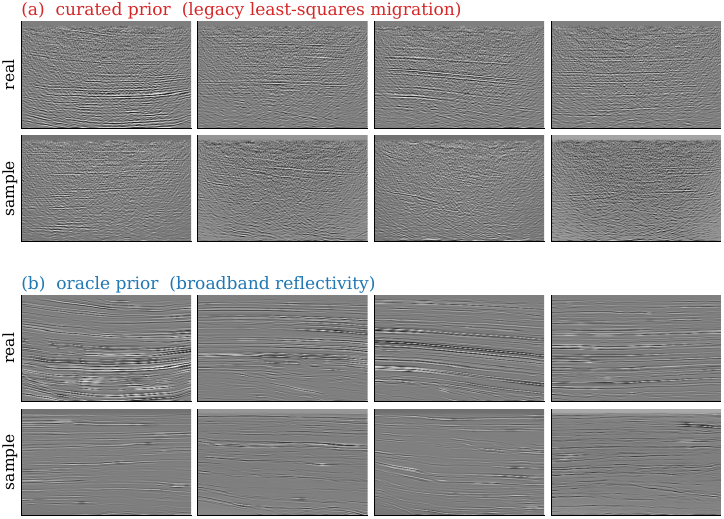}
\caption{Both seismic diffusion priors generate reflectivity of a kind with their training data. For the (a)~\textcolor{madcol}{curated} prior (legacy least-squares-migration archive) and the (b)~\textcolor{emcol}{oracle} prior (broadband reflectivity): real training images beside fresh unconditional DDPM samples, on one shared seismic grayscale (as in \cref{fig:teaser}). The \textcolor{emcol}{oracle}'s samples are smoother than its training images at the finest scales, i.e., a residual capacity gap; it biases against the contrast we report, since a smoother prior carries less spread on the very directions the blind read scores.}
\label{fig:seismic-training}
\end{figure}

\begin{figure}[t]
\centering
\includegraphics[width=\linewidth]{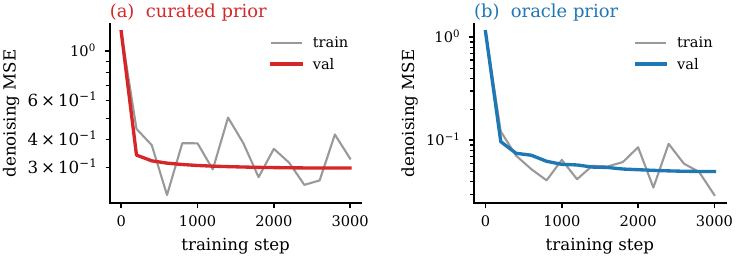}
\caption{Train and validation denoising-MSE (log scale) for the (a)~\textcolor{madcol}{curated} and (b)~\textcolor{emcol}{oracle} seismic diffusion priors. The validation loss plateaus for both.}
\label{fig:seismic-loss}
\end{figure}

\paragraph{Groundwater: flow priors.} Each of the two priors is a HINT normalizing flow~\citep{kruse2021hint}, a binary-tree affine-coupling flow (four coupling blocks, tree depth two, three-layer coupling networks) over the $d=100$ Karhunen--Lo\`eve coefficients, trained by maximum likelihood with per-feature standardization, Adam under a polynomial-decay schedule, a small dequantization noise that removes the low-dimensional over-concentration the collapsed archive would otherwise drive the flow into, and validation-based early stopping.

Since the oracle flow starts close to its Gaussian base and its target is itself Gaussian, its likelihood sits near the entropy floor from the start and early stopping halts it there; the curated target instead carries the archive's low-dimensional structure, which the flow does fit~(\cref{fig:darcy-training,fig:darcy-loss}).

\begin{figure}[t]
\centering
\includegraphics[width=\linewidth]{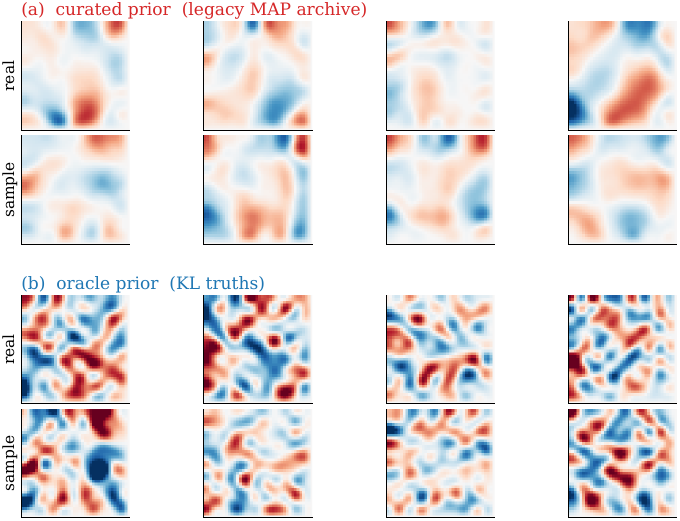}
\caption{Both groundwater flows generate log-permeability fields of a kind with their targets. For the (a)~\textcolor{madcol}{curated} prior (legacy MAP archive) and the (b)~\textcolor{emcol}{oracle} prior (Karhunen--Lo\`eve truths): real training fields beside fresh unconditional flow samples, on one shared colour scale.}
\label{fig:darcy-training}
\end{figure}

\begin{figure}[t]
\centering
\includegraphics[width=\linewidth]{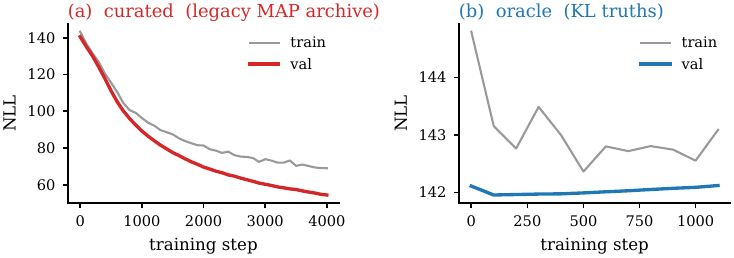}
\caption{Train and validation negative log-likelihood for the (a)~\textcolor{madcol}{curated} and (b)~\textcolor{emcol}{oracle} groundwater flows. The curated flow's validation loss falls steeply and is still improving when the budget ends, while the oracle flow, fit to a Gaussian target, sits near its entropy floor from the start. An incompletely fitted prior would carry spread of its own on the blind subspace, so the residual gap biases against the effect we report rather than toward it.}
\label{fig:darcy-loss}
\end{figure}

\paragraph{Posterior sampling.} The two posterior reads use different samplers, and neither reaches the blind subspace with data. On the seismic operator the posterior is drawn by guided diffusion (diffusion posterior sampling)~\citep{chung2023dps}, taking the likelihood score at the current state as guidance; its blind component vanishes,
\begin{equation}
\bN^\top\nabla_{\bx}\log p(\by\mid\bx)=\bN^\top\bA^\top\Gamma^{-1}(\by-\bA\bx)=\bm 0\qquad(\bA\bN=\mathbf 0),
\label{eq:nullscore}
\end{equation}
so the guidance moves the sample only along the resolved directions and the blind report is the prior's own fiber conditional~(\cref{p:blind}); along a near-null direction $\bm v$ the data add at most $(\bA\bm v)^\top\Gamma^{-1}(\bA\bm v)\le\|\bA\bm v\|^2/\lambda_{\min}(\Gamma)$ of precision~(\cref{p:nearnull}), negligible on the blind tail. On the groundwater operator the flow prior is read by preconditioned Crank--Nicolson: reparameterizing the coefficients $\bm\xi=\Psi^{-1}(\bm w)$ through the flow $\Psi$ with $\bm w\sim\mathcal N(\bm 0,\bm I)$, the target in $\bm w$ has a Gaussian reference, so the pCN proposal of \citetx{Beskos et al.}{beskos2017geometric} applies verbatim, dimension-robust and requiring no gradient of the flow. Chain length, burn-in, and thinning are as reported with the coverage protocol of \cref{sec:demos}.

\paragraph{The added measurement, and its placement.} A borehole measurement~(\cref{ss:augment-dep}) reads the log-permeability at a node, so its row is exact and linear in the coefficients, requiring neither a solve nor a linearization; the readings are simulated from the same constructed truths the prior is scored against, with independent noise.

The nodes are chosen by a greedy design restricted to the operator's exact kernel, maximizing the information the selected boreholes carry about the blind coordinates. That objective is monotone submodular, so the greedy choice is within a fixed factor of the optimum; the natural alternative, maximizing the smallest singular value, is not submodular and carries no such guarantee, though it is the right criterion when a report must hold for every blind functional at once.

\paragraph{The added measurement, and the split it induces.} Six boreholes are placed by the greedy rule above, at measurement noise $\sigma_u=0.04$ in Karhunen--Lo\`eve units. The null is $69$-dimensional (taken from the exact adjoint Jacobian, since two of the $33$ sensors sit on the Dirichlet boundary and finite differences give them step-size noise where the true row is zero), and the boreholes split it into the $6$ directions they resolve and $63$ they do not, the experiment's control. Each prior is trained at three seeds; the figure shows one. One curated seed fits its archive markedly worse than the other two and reports a \emph{wider} blind spread in consequence, which errs against the comparison rather than in its favor.

\paragraph{Curating jointly, and the budget it needs.} Each record reuses its stored survey observation unchanged and appends only the borehole reading. The augmented reconstruction needs a larger iteration budget than the survey-only one, and we observe that the survey-only archive scores identically at the larger budget, so the comparison is not an artifact of different solve tolerances. The three priors are trained with matched architecture, budget, and dequantization, the last in absolute coefficient units rather than relative to each archive's own spread, since the jointly curated archive is wider and would otherwise receive more smoothing on exactly the directions that must not move.

\paragraph{Two gates on the samplers.} Both posterior reads make a claim about what the sampler does, and both are measured rather than assumed. The first is a \emph{leakage} gate. The seismic sampler's preconditioner is diagonal in a per-pixel basis, which is exactly the kind of reweighting that can route a likelihood update into directions the operator does not resolve. We measure, for every certified blind direction, the displacement the guidance produces against the ensemble's own blind band, under three configurations: unpreconditioned, preconditioned, and preconditioned with the scored-blind component projected out at every step. Unpreconditioned guidance moves those directions by a negligible fraction of the band, as the flat likelihood requires; the plain preconditioner moves the worst of them by more than an order of magnitude \emph{beyond} the band, for both priors; the projection restores it to negligible. The scored ensembles use the projected sampler, and the gate is what licenses reading their blind report as the prior's.

The second is a \emph{misfit} gate: the posterior mean's data misfit over all shots, compared with the noise floor, which is what the text means when it says both reconstructions fit the recorded data.

\paragraph{The cutoff that sets the seismic split.} That split has a free parameter, so the reading has to survive it~(\cref{fig:seismic-cutoff}). Below the operative cutoff nothing can move: the certified blind basis is unchanged there, so the reading is invariant by construction rather than by measurement. Above it the subspace grows, and the curated prior's shortfall shrinks as better-illuminated directions (which the data do partly constrain) are admitted alongside the blind ones. The shortfall never closes, and the gap to the truth-trained control persists at every cutoff over two decades. Both priors are scored on the same cached draws at every cutoff, so the comparison across cutoffs is paired; this is a robustness check on the split, not a second version of the main read of \cref{ss:seismic-dep}, which uses the full per-seed protocol.

\begin{figure}[t]
\centering
\includegraphics[width=\linewidth]{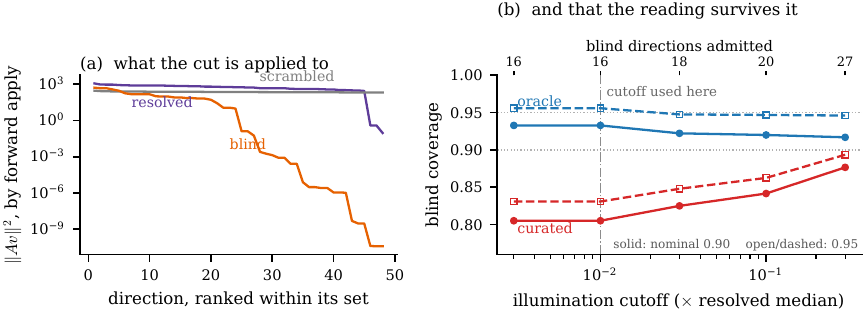}
\caption{The seismic split, and what it is made of. (a)~$\|\bA\bm v\|^2$ for each of the $48$ probed directions per set, of which the deployed cut certifies $16$ blind and $32$ resolved~(\cref{ss:setup}), measured by a forward apply rather than proxied: the \textcolor{rescol}{resolved} directions and a scrambled set of the same size sit together at the top, while the \textcolor{blindcol}{blind} ones fall away over more than ten decades. This is the quantity the cut is applied to; \cref{fig:teaser}d is source-side incident energy, a proxy shown because the posterior spreads there are read against it. (b)~blind coverage against that cutoff, at two nominal levels: the \textcolor{madcol}{curated} prior under-covers at every cutoff and never meets the \textcolor{emcol}{oracle}; the count of admitted directions is given along the top.}
\label{fig:seismic-cutoff}
\end{figure}

\paragraph{Two checks on the directions that must not move.} The blind directions the boreholes do not reach rise slightly, and two controls establish that the rise is the estimator's rather than information arriving. First, a placebo measurement carrying no blind information at all (the same rows projected onto the resolved subspace) moves those directions further than the real boreholes do. Second, sweeping the boreholes' noise level from sharp to nearly uninformative leaves them flat while the corrected directions degrade smoothly. Leakage would scale with the measurement's strength; this does not. The de-freeze itself survives much weaker boreholes than the ones used, so the reported effect does not depend on the noise level chosen. Both controls rebuild the archive exactly as the main experiment does.

\end{document}